\definecolor{red}{rgb}{0.8,0.2,0.2}
\definecolor{blue}{rgb}{0,0,0.5}
\definecolor{green}{rgb}{0,0.7,0}
\definecolor{violet}{rgb}{0.5,0.2,0.5}
\definecolor{orange}{rgb}{0.8,0.5,0.2}
\newtheorem{theorem}{Theorem}
\newtheorem{proposition}[theorem]{Proposition}
\newtheorem{example}{Example}
\newtheorem{definition}{Definition}
\definecolor{orange}{RGB}{255,127,0}
\definecolor{brown}{RGB}{150,70,0}
\definecolor{green}{RGB}{127,255,127}
\definecolor{darkgreen}{RGB}{0,127,0}
\definecolor{blue}{RGB}{127,127,255}
\definecolor{lightblue}{RGB}{150,150,255}
\definecolor{darkblue}{RGB}{0,0,127}
\definecolor{red}{RGB}{255,90,90}
\definecolor{grey}{RGB}{127,127,127}
\definecolor{pink}{RGB}{255,180,180}
\definecolor{dkgreen}{rgb}{0,0.6,0}
\definecolor{gray}{rgb}{0.5,0.5,0.5}
\definecolor{mauve}{rgb}{0.58,0,0.82}
\tiny\color{gray},
\newcommand{\comment}[1]{}
\newcommand{\xaxis}{$x$-axis\xspace}
\newcommand{\yaxis}{$y$-axis\xspace}
\newcommand{\JROC}{JROC\xspace}
\titleformat{\chapter}{\normalfont\huge\bfseries}{\thechapter}{20pt}{\huge}
\titleformat{\section} {\normalfont\Large\bfseries}{\thesection}{1em}{} 
\titleformat{\subsection}{\normalfont\large\bfseries}{\thesubsection}{1em}{} 
\long\def\comment#1{}
\author{MAGUEDONG DJOUMESSI Celestine Periale}
\title{MODEL RE-FRAMING BY FEATURE CONTEXT CHANGE}
\begin{document}

\renewcommand{\contentsname}{Contents}
\renewcommand{\tablename}{Table}
\renewcommand{\listtablename}{List of Tables}
\renewcommand{\figurename}{Figure}
\renewcommand{\listfigurename}{List of Figures}
\renewcommand{\appendixname}{Appendices} 
\renewcommand{\appendixtocname}{Appendices} 
\renewcommand{\appendixpagename}{Appendices}

\SetBgScale{1}
\SetBgContents{
\includegraphics[scale=1.5]{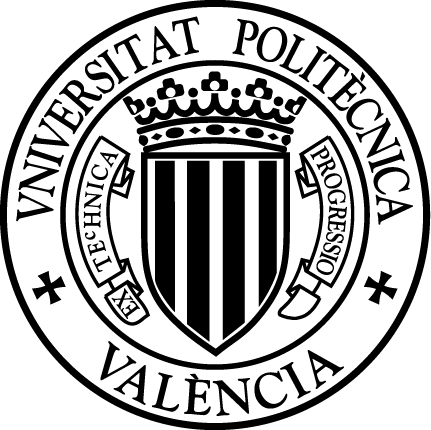}}
\SetBgColor{gray}
\SetBgAngle{0}
\SetBgPosition{8cm,-7cm}
\SetBgOpacity{0.07}

\begin{titlepage}
\begin{center}
	\begin{center}
	
    \BgThispage			
    
    \begin{table}[h]
            \begin{tabularx}{\textwidth}{cXc}
                \includegraphics[width=6cm]{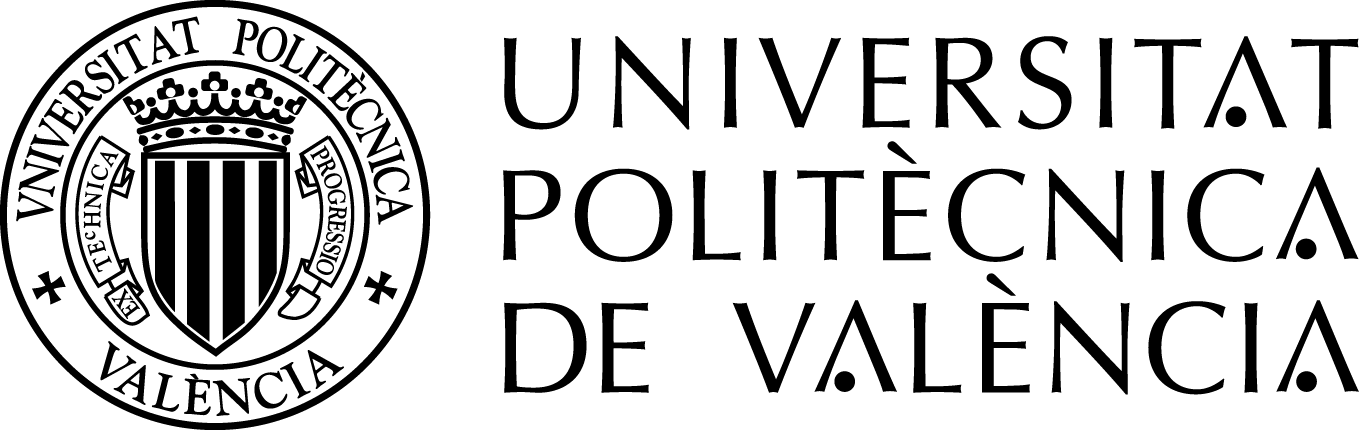} & & \includegraphics[width=7cm]{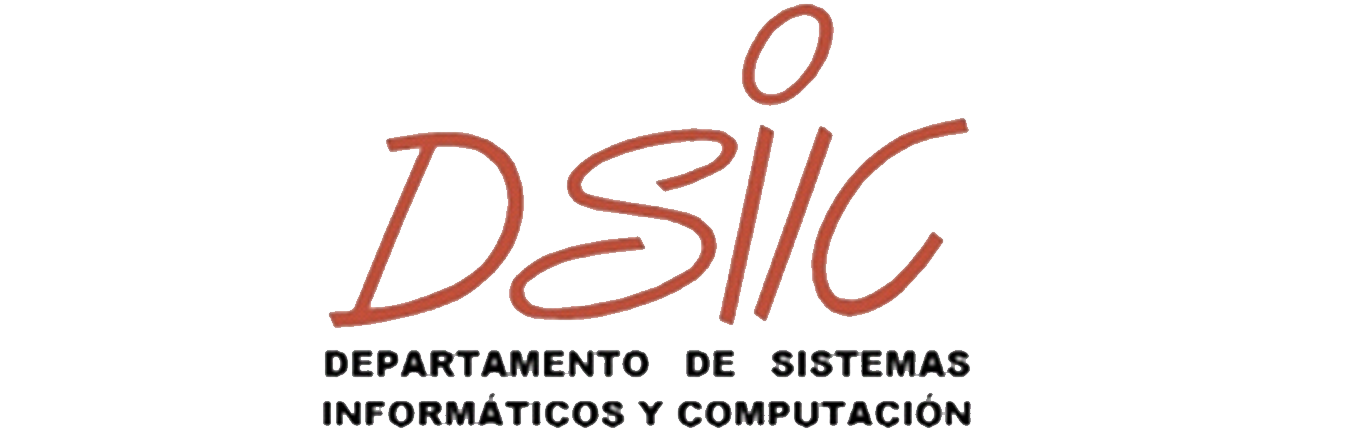} 
            \end{tabularx}
    \end{table}
	\end{center}
    \vspace{1cm}

    {\LARGE \textit{M\'aster Universitario en Ingenier\'ia del Software, M\'etodos Formales y Sistemas de Informaci\'on}}
    
    \vspace{0.5cm}
    
    {\LARGE \textit{MSc in Software Engineering, Formals Methods and Information Systems}}
    \vspace{1cm}
    
\begin{center}
\rule{1\textwidth}{1pt}
\end{center}

\LARGE
MODEL REFRAMING BY FEATURE CONTEXT CHANGE\\

\begin{center}
\rule{1\textwidth}{1pt}
\end{center}

Maguedong Djoumessi Celestine Periale\\

\vspace{0.5cm}

Dirigida por: / Supervised by:\\

\vspace{0.5cm}

Jos\'e Hern\'andez Orallo\\

\vspace{2cm}

Valencia, Julio de 2013 / Valencia, July 2013\\

\end{center}

\end{titlepage}

 \thispagestyle{empty}

\vspace*{1cm}

\begin{flushright}

In love memory of my Mum:\\
Donfack Marie Julienne

\end{flushright}

\vspace*{\fill}

\chapter*{Abstract }
\setlength{\parskip}{\baselineskip}

\setcounter{page}{1} 
\pagenumbering{roman}

Many solutions to cost-sensitive classification (and regression) rely on some or all of the following assumptions: we have complete knowledge about the cost context at training time, we can easily re-train whenever the cost context changes, and we have technique-specific methods (such as cost-sensitive decision trees) that can take advantage of that information. In this work we address the problem of selecting models and minimising joint cost (integrating both misclassification cost and test costs) without any of the above assumptions. We introduce methods and plots (such as the so-called \JROC plots) that can work with any off-the-shelf predictive technique, including ensembles, such that we re-frame the model to use the appropriate subset of attributes (the feature configuration) during deployment time. In other words, models are trained with the available attributes (once and for all) and then deployed by setting missing values on the attributes that are deemed ineffective for reducing the joint cost. As the number of feature configuration combinations grows exponentially with the number of features we introduce quadratic methods that are able to approximate the optimal configuration and model choices, as shown by the experimental results.

\vspace{1cm}

\noindent \textbf{Keywords:} test cost, misclassification cost, missing values, re-framing, ROC analysis, operating context, feature configuration, feature selection.

\chapter*{Resumen }

Muchas de las soluciones para la clasificaci\'on y regresi\'on sensible al coste se basan en alguna de las siguientes hip\'otesis: que tenemos un conocimiento completo sobre el contexto de coste en tiempo de entrenamiento, que podemos volver a entrenar con facilidad cada vez que cambia el contexto de costes, y que tenemos los  m\'etodos para una t\'ecnica especifica (tales como \'arboles de decisiones sensibles a los costes) que pueden aprovechar esa informaci\'on. En este trabajo se aborda el problema de la selecci\'on de modelos y la minimizaci\'on de los costes conjuntos (integrando tanto el coste de clasificaci\'on err\'onea como los costes de pruebas de atributos) sin ninguno de los supuestos anteriores. Introducimos m\'etodos y gr\'aficos (como los gr\'aficos \JROC) que pueden funcionar con cualquier t\'ecnica predictiva com\'un, incluyendo {\em ensembles}, de tal manera que nos adapta el modelo para el subconjunto apropiado de atributos (la configuraci\'on de los atributos) durante el tiempo de despliegue. En otras palabras, los modelos son entrenados con los atributos disponibles (una vez y para siempre) y luego desplegados mediante el establecimiento de valores faltantes en los atributos que se consideran ineficaces para reducir el conjunto. Como el n\'umero de combinaciones de los atributos crece exponencialmente con el n\'umero de atributos se introducir\'an m\'etodos cuadr\'aticos que son capaces de aproximar  la opci\'on de configuraci\'on \'optima y el modelo \'optimo, como se muestra con los resultados experimentales.

\vspace{1cm}

\noindent \textbf{Keywords:} costes de test de atributos, costes de clasificaci\'on erronea, valores faltantes, reframing, an\'alisis {ROC}, contexto de operaci\'on, configuraci\'onn de atributos. 
\chapter*{Remerciements}
\setlength{\parskip}{\baselineskip}

Au terme de ce si riche parcours, je tiens \`a rendre tout d'abord grace \`a Dieu sans qui rien n'aurait \'et\'e possible et aussi \`a  remercier toutes ces personnes qui ont su m'accompagner et m'apporter tout le soutien morale, physique et financier necessaire pour y arriver. Je pense ici \`a:
\begin{itemize}
\item L'ame de ma tr\'es ch\`ere maman DONFACK Marie-Julienne, qui a donn\'e jusqu'\`a la derni\`ere de ses forces pour que je puisse recevoir une \'education de qualit\'e. Merci, mille merci maman!
\item Mon papa et fr\`eres et soeurs qui ont \'et\'e d'un tres grand soutien moral \`a chaque fois que j'en avais besoin.
\item Mon parrain et pa'a Jose Antonio Varela Ferrandis qui m'a accompagn\'e et ``proteg\'e'' tout au long de mon sejour \`a Valencia.
\item A toutes mes mamis et papis de Benimaclet qui par leur amour mon procur\'e une veritable famille en Espagne. 
\end{itemize}
A tous ceux-ci ainsi qu'\`a tous les anonymes qui te pres ou de loin et d'une mani\`ere ou d'une autre ont contribu\'e \`a ce que je puisse arriver au terme de ce parcours, je dirige mes sinc\`eres remerciements. 
\vspace{-0.6cm}

\chapter*{Agradecimientos}

\setlength{\parskip}{\baselineskip}

\vspace{-0.6cm}

Al final de este curso tan rico, quiero en primer lugar dar las gracias a Dios, sin el nada hubiera sido posible, y tambi\'en agradecer a todas aquellas personas que pudieron que me han acompa\~nado y me han dado todo el apoyo moral, físico y financiero necesario para llegar hasta aqu\'i. Pienso:
\begin{itemize}
\item En el alma de mi muy querida madre Donfack Marie Julienne, que entreg\'o hasta el \'ultimo extremo de su fuerza para que puediera recibir una educaci\'on de calidad. Gracias, mil gracias mam\'a!
\item A mi padre y mis hermanos y hermanas que han sido de un gran apoyo moral cada vez que lo necesitaba.
\item Mi padrino y pa'a Jos\'e Antonio Varela Ferrandis que me ha acompa\~nado y protegido durante toda mi estancia en Valencia.
\item A todos mis mamis y papis de Benimaclet que por su amor me han ofrecido una familia de verdad en Espa\~na.
\vspace{-0.1cm}
\item A Jos\'e Hern\'andez Orallo, que con la calidez de sus guiones, su paciencia y devoci\'on me ha dirigido durante todo el desarollo de mi tesis.
\item A la Universidad Politecnica de Valencia, a todos mis profesores. Con la calidez de su ense\~nanza, salgo de este curso grata y llena de nuevos conocimientos. 
\item A mis compa\~neros de la universidad y de casa, que me han ayudado en adaptarme y integrarme en la cultura espa\~nola.
\end{itemize}
A todos ellos, y todos los que no he nombrado que de lejos o de cerca, de una manera u otra, han contribuido a lo que puedo llegar hasta el final de este curso, dirijo mis m\'as sinceros agradecimientos. 
\vspace{-0.6cm}

\vspace*{1cm}

\begin{flushright}

Celestine P. MAGUEDONG D.\\
Valencia, 2013

\end{flushright}

\tableofcontents

\listoftables
\listoffigures

\chapter{Introduction}

\setlength{\parskip}{\baselineskip}

\pagenumbering{arabic}
\setcounter{page}{1} 

\vspace{0.2cm}

The main statement of this thesis is about machine learning (ML)/data mining (DM) --- especially about cost-sensitive learning--- which consist in extracting useful knowledge from large and detailed collections of data. Because of the wide range of ways in which companies and institutions can easily, efficiently and cheaply collect data (e.g. web, social networks \dots), this subject has become of increasing importance and interest. This interest has inspired a rapidly developing research field with developments both on a theoretical, as well as on a practical level with the availability of many commercial and non-commercial tools. In fact, data mining has many advantages across different industries, allowing large historical data to be used as the background for prediction. The interpretation and evaluation of patterns obtained by data mining produces new knowledge that decision-makers can use. Thus, ML/DM provide a way to extract knowledge and obtain information that can be useful for prediction and be used as a support in a decision making.
Unfortunately, in data mining approaches, two important assumptions slow the good application of those technology. first the context in which data are obtained can change, for instance, when the features used for describing a patient in a hospital, can be different in one year to the other. Second, the cost to obtain those data can be very high sometimes. As a result, many approaches produce knowledge which is not well adapted to the context or that is not exhaustive.
Those limitations have generated a relatively recent interest in richer and data mining approaches in order to improve, more, optimise the result when contexts change.

\section{Motivation}

Reuse of learnt knowledge is of critical importance in the majority of knowledge-intensive application areas, particularly because the operating context can be expected to vary from training to deployment. In machine learning this is most commonly studied in relation to variations in class and cost skew in classification. While this is evidently useful in many practical situations, there is a clear and pressing need to generalise the notion of operating context beyond the narrow framework of skew-sensitive and cost-sensitive classification. The approach is based around the new notion of model reframing, which can be applied to inputs (features), outputs (predictions) or parts of models (patterns), in this way generalising, integrating and broadening the more traditional and diverse notions of model adjustment in machine learning and data mining. These ideas have led to the following project:

\begin{center} 
\url{www.reframe-d2k.org}
\end{center}

One kind of context in the above project is related to the way inputs (i.e., features) can vary from training to deployment. Among these changes, we can mention that some attributes can disappear or may have different application costs.

For instance, a model $M$ can be learnt from a dataset features $X= (x_1, x_2, \dots, x_m)$ and output attribute Y. On deployment time, we may find that we only have a (possible much smaller) dataset of features $X$\textquotesingle = ($x_1$\textquotesingle, $x_2$\textquotesingle, \dots , $x_m$\textquotesingle). If all the features in $X\textquotesingle$ are also in $X$, there is no problem for applying $M$. But even in this case, some attributes may have higher costs than others on deployment time. On other occasions some features may be not present. In these cases we may use some information relating attributes in $X$ with attributes in $X$\textquotesingle. For instance, $x_2$ may not be present in $X$\textquotesingle, but we may know that the correlation of $x_2$ and $X\textquotesingle$ is high, so one could be used instead of the other. For nominal attributes, we may have functional dependencies or association rules. Also, we may have some kind of hierarchy of $X$ using Principal Component Analysis, or other kind of information that helps us to relate $X$ and $X$\textquotesingle, to be able to apply $M$ with $X$\textquotesingle.
This information can be used in the reframing process or can be embedded in M as a more versatile model, possibly taking a more general notion of feature, or using attribute selection lattices.
A simpler approach is to consider that those features of $X$ which are not in $X\textquotesingle$ can be just processed as missing values. In fact, we can also use missing values for existing attributes if the attributes test cost is too high compared to what we gain by using the attribute.

\vspace{0.3cm}
\section{Preview and others approaches}

The feature space (including both input and output variables) characterises a data mining problem \cite{li2001feature}. In predictive (supervised) problems, the quality and availability of features determines the predictability of the dependent variable, and the performance of data mining models in terms of misclassifcation or regression error. Good features, however, are usually difficult to obtain. It is usual that many instances come with missing values, either because the actual value for a given attribute was not available or because it was too expensive (e.g., in medical domains, where attributes usually correspond to diagnostic tests). This is usually interpreted as a utility or cost-sensitive learning dilemma \cite{turney2000types,elkan2001foundations}, in this case between misclassification (or regression error) costs and attribute tests costs. Both misclassification cost (MC) and test cost (TC) can be integrated into a single measure, known as joint cost (JC).

One possible option to affront this dilemma is known as missing value imputation \cite{zhu2011missing}, but this approach is not usually appropriate when test costs are considered. First, expensive attributes (e.g., in diagnosis) are usually missing for many other instances as well and it is difficulty to infer from other instances or attributes. Second, imputing missing values 
``is regarded as unnecessary for cost-sensitive learning that also considers the test costs'' \cite{zhang2005missing}.

The most common option is usually to train models that are able to do reasonably good predictions with the available attributes or to find a trade-off (in terms of minimising joint cost) about how many (and which) attributes need to be used. Retraining with the necessary attributes (or with feature selection)
 does not seem to be a good option, because for $n$ attributes we typically have $2^n$ possible combinations. Also, only using the training instances for which the same subset of attributes is available can bias each `partial' model. As a result, one common option is to use techniques that lead to models that can use any subset of attributes. Decision trees are the usual choice \cite{ling2004decision,zhang2005missing,lomax2013survey} because the use of attributes can be customised in many different ways to (for example taking to consideration missing values without imputing them or considering misclassification cost). But this alternative limits the option of machine learning techniques to be used (from now limited to the decision trees). We could also try --- if not already done --- to design cost-sensitive versions for many other families of techniques, such as Bayesian models, neural networks, logistic regression, kernel methods, etc., with varying success. This would lead to two problems. On one hand, we would need to have a library of specific cost-sensitive algorithms for classification and regression, which would also limit our range of options and the use of the ultimate learning techniques (until cost-sensitive versions appear and are implemented). On the other hand, even assuming that this is possible, we would require some tools to properly select which model is better, as we do not know in advance what the misclassification (or regression error) cost and test cost configuration is. In fact, each instance may have a different subset of missing values and a different cost configuration, so this choice must be very specific.

In this work, we explore an alternative, more general approach using off-the-shelf machine learning methods. The procedure is simple: we use any data mining technique that accepts missing values during training and prediction and learn a predictive model with our training data as usual. Then, 
we evaluate the model (on a validation dataset) by exploring the lattice of attribute subsets, with a very straightforward mechanism: we set missing values on purpose for each combination in the lattice. From here, we know how well our model behaves for any attribute subset. 
From here, once the model needs to be deployed on unlabelled data, whenever a new instance appears (with a possibly particular cost configuration) we decide which attributes the model requires to get the lowest expected joint cost\footnote{Given an example with some non-missing and some missing values we may decide increase the number of non-missing values. Given a case for which we have not still retrieved any of the attributes (tests) we decide how many (and which) attributes we are going to ask for.}. This is done by calculating the expected joint cost for each point in the lattice. In this sense, each prediction is associated with a possibly different operating condition, and the best attribute subset is chosen.

Interestingly, we can use the previous approach for more than one model, and see that some models dominate for some operating conditions over the rest. This is exactly the way ROC analysis works (for classification \cite{SDM00,flach2003decision,rocai2004,Fawcett06,Mamitsuka2006} and for regression \cite{RROC2013}).

\vspace{0.2cm}

\section{Objectives}

\vspace{0.2cm}
 
The goal of our investigation is then to introduce new methods to make optimal choices in terms of the joint cost when using off-the-shelf data mining models. An optimal choice is understood as selecting the right model with the right subset of attributes given a cost context. 

We will introduce graphical plots and procedures to make this selection and also to reduce the number of combinations in the lattice that need to be explored in order to make a good selection.

In order to improve those objectives, several sub-objectives should be accomplished:
\begin{itemize}
\item A graphical evaluation of how a model improves, and most especially, degrades if we start removing attributes, finding the concept of dominance here.
\item The analysis of those context changes related to attribute (feature) costs and the introduction of solutions to create more versatile models and reframing transformations.
\item The evaluation of the previous approaches with several datasets, using common repositories or the reframe domains.
\end{itemize}

\noindent In this work we will focus on classification, but many of the ideas could be extended to regression as well.

\vspace{0.3cm}

\section{Thesis plan}

\vspace{0.2cm}

\noindent This work is organised as follows:

\begin{itemize}
\item \textbf{Chapter \ref{prelim}: Previous Works.} This chapter describes some general aspects of machine learning with some of its actual challenges, and present some others works related to the current one.

\item \textbf{Chapter \ref{reducingAttributes}: Reframing the model with missing values on purpose.} In this chapter, a description and explanation of our approach is made.

\item \textbf{Chapter \ref{hull}: Approximating the \JROC hull.} A scaling-up of the approach described in the previous chapter is developed in this section.

\item \textbf{Chapter \ref{experiments}: Experiments.} This chapter presents all the experiments performed during this work and the result obtained, using a set of datasets of the UCI repository and different machine learning methods. 
\end{itemize}

\noindent This work discusses, in \textbf{chapter \ref{conclusion}}, the conclusions obtained from the experiment results, some future work and the dissemination made.

\noindent Finally, some  \textbf{appendices} follow with more details about the results of chapter \ref{experiments} and a description of the implementation of \JROC.

\chapter{Previous Works}\label{prelim} 

\setlength{\parskip}{\baselineskip}

In this chapter, we give an overview of machine learning, and some of its difficulties and challenges are presented. we also see some previous works and approaches related to our investigation which face those difficulties. Finally, the tools used during our investigation are introduced. 
\vspace{0.3cm}

\section{Machine learning}
\vspace{0.2cm}
In recent times the level of technological advancement in the use of machine learning as a reliable means of retrieving  information, and the ever increasing amounts of available data makes the analysis of data necessary for many areas of technological progress. Furthermore, the use of machine learning and machine learning tools has proven to help solve some complex problems, providing good solutions, in the form of classification and regression models \cite{FerriFH04}.

Machine learning is the means by which knowledge is acquired and the ability to use it. This implies that learning  involves the process of finding and describing data in the form of structural patterns. For instance, data could be of client complaints in an organisation or other service options open to clients. The output of such learning could be whether a particular customer complaint was genuine or not. Another example is that the data could contain examples of customers who have switched to another service provider in the telecommunication industry and some that have not. The output of learning could be the prediction of whether a particular customer will switch to another service provider or not. There are two types of learning: supervised and unsupervised. The previous examples are supervised. In this work, we will focus on supervised learning.
 
Supervised learning requires the response for each instance such that it can be used by the system to guide the learning. The whole process includes the collection of data to be used for data mining, identifying the target variable, dividing up of the data into training and testing data and constructing and evaluating the model. The training data is used by the data mining algorithm to ‘learn’ the data and build a model while the test data is used to evaluate the performance of the model on new data. For instance, decision trees and neural nets are two common types of supervised learning. This type of learning always requires a target variable to predict. 

However, some of the problems of supervised learning are overfitting \cite{FerriFH04} \cite{witten}, which means that the model perfectly works on the training data but not on the evaluation (test) data. This problem can also occur independtly of overfitting. It happens when the model learns the training data with some attributes and in the test time there are some missing or new features.
Thus, learning, inference, and prediction in the presence of missing data or ``new'' data are pervasive problems in machine learning and statistical data analysis.

\vspace{0.3cm}

\section{Features selection}
\vspace{0.2cm}
There are some aspects of machine learning which involve the application of datasets with a large number of features (tens or hundreds of thousands of variables available) such as text processing, gene expression, array analysis, and combinatorial chemistry. In such case the model built could be overfitted. Feature selection has in recent times become the focus of much research \cite{guyon2003introduction, molina2002feature}, with approaches to address such a situation. Feature selection, also known as variable selection, attribute selection or variable subset selection, is the process of choosing a subset of relevant features within the original set of features which will be used in model construction. The principal assumption when using a feature selection technique is that the data may contains many redundant or irrelevant features. A feature is redundant if it does not give enough information once given the already selected features. Irrelevant features  are the ones which provide useless information for all contexts. In addition, feature selection has some others advantages when constructing predictive models: improved model interpretability, reduced training times, and enhanced generalisation by reducing overfitting.

\vspace{0.3cm}

\section{Missing attributes: a challenge}
\vspace{0.2cm}
In addition to the large number of variables or attributes that machine learning often has to face, missing values is another challenge. Missing data can be introduced in many situations and domains, and for various reasons. For instance, in bioinformatics, certain regions of a gene micro-array may fail to give measurements of the underlying gene expressions due to scratches, finger prints, or manufacturing defects; or in a hospital, participants in a clinical study may drop out during the course of the study leading to missing observations at subsequent time points; and moreover, a doctor may not be able to order all applicable tests to a patient. All these examples could lead to a large number of missing data.

\vspace{0.3cm}

\subsection{Missing attribute imputation}
\vspace{0.2cm}

In the past there have been many approaches to account for missing data. However, one well known attempt to deal with missing data is known as ``imputation'' \cite{zhu2011missing}.
In statistics, imputation is the process of replacing missing data with substituted values (when substituting for a data point, it is known as ``unit imputation''; when substituting for a component of a data point, it is known as ``item imputation''). Because of the fact that data can be ``obstructive'' during training and lead to the construction of a suboptimal or defective model, the concept of imputation is seen by some researchers as a way to avoid challenges involved with list-wise deletion of instances that have missing values. In practice, when one or more values are missing from a case, most statistical packages default to removing any example that has a missing value (the classifier algorithm discard all the instances which do not have all the information), which may introduce bias or affect the representation of the results (pattern). But imputation preserves all cases by replacing missing data with a probable value based on other available information. Once all missing values have been imputed, the data set can then be trained and evaluated using standard methods for complete data. Nevertheless, imputing missing values ``is regarded as unnecessary for cost-sensitive learning that also considers the test costs'' \cite{zhang2005missing}.

\vspace{0.3cm}

\section{Cost-sensitive methods}
\vspace{0.2cm}

As a means of minimise the cost made when predicting the classification of unseen examples, there have been many works in the last decade that investigate approaches to learning or revising classification procedures that attempt to reduce the cost of misclassified examples rather than the number of misclassified examples \cite{turney2000types}.  These ideas are underpinned by the concept that in many problems, the cost of all errors (false positive or false negative) is not equal. Thus, the cost of making an error can depend upon both the predicted class of the example and the actual class of an example. Therefore, in contrast to the general case, here the main purpose is not to maximise the accuracy. For instance, in a bank, credit card fraud detection aims to maximise the total transaction amount of correctly detected frauds minus the cost to investigate all (correctly and incorrectly) detected frauds; And because undetected fraud causes a loss of the whole transaction amount, it is by far more profitable to detect frauds with high transaction amount than those whose amount is not even higher than the cost to investigate.

Turney \cite{turney2000types} presents an excellent survey on different types of costs in cost-sensitive learning, among which misclassification costs and test costs are distinguished as most important.  
  
Many modifications to machine learning algorithms attempt to reduce these costs, as in classification rules \cite{pazzani1994reducing}, when a prior domain theory is available \cite{elkan2001foundations}.

\subsection{Cost-sensitive decision trees}
\vspace{0.2cm}

In real practice, as we said, many real-world data sets for machine learning and data mining contain missing values and previous research regards it as a problem and attempts to impute missing values before training and testing. Some other works study this issue in cost-sensitive learning.

In cost-sensitive learning, which attempts to minimise the total cost of tests and misclassifications, missing data can be useful for cost reduction, so imputing missing values might be unnecessary. Some recent approaches in this field are cost-sensitive decision tree learning algorithms which should utilise only known values, using decision trees as the base learner, that consider both test costs and misclassification costs. In this case, some attributes (during the tests) might be too expensive for obtaining their values. Thus it can be cheaper not to include their values, avoiding so expensive and risky tests (as in patient diagnosis for example  \cite{ling2004decision,zhang2005missing}). This reasoning is called: ``missing is useful'' \cite{zhang2005missing}, as the fact that values actually reduce the total cost of tests and, therefore, it is not meaningful to impute their values, as they will not reduce the misclassification cost. Cost-sensitive decision trees algorithms have different approaches:

\begin{itemize}
\item As values are missing for certain reasons --- unnecessary and too expensive to test --- an option is to replace missing data with a special value, mostly called ``null'' in databases. And this null value will then be treated just as a regular known value in the model construction (tree building in this case) and during the test processes. Since this strategy has been proposed in machine learning \cite{ali, date}, its performance and efficiency  in cost-sensitive learning has not been shown, since the use of the same value to all missing values (null in this case) may not be good enough since it could introduce bias.
\item The second option is called the C4.5 strategy. This approach consists in choosing, during training, an attribute by the probability of missing values of that attribute. And during the test process, a test example with missing value is divided into branches according to the portions of training examples falling into those branches. The class of the test example is the weighted classification of all leaves. C4.5’s missing-value strategy has been shown to be efficient in cost-sensitive learning\cite{batista}. 
\end{itemize}

\vspace{0.2cm}

\noindent Both approaches consider missing values in a way to make methods cost-sensitive. But they are not optimal in such cases where you have to deal with the cost of the features and missing features at the same time. Also, they are restricted to decision trees. This investigation brings a new vision to tackle this kind of cases. The approach we discuss in this work does not look at imputation either as a solution nor an option since it is a disadvantage for the cost-effectiveness in cost-sensitive learning. Our idea is to build a model independent approach (not based on a special technique such as decision tree, i.e, it can be applied to any technique.) 

Before the presentation of our approach, it is necessary to present and define some tools used during the process.
\vspace{0.3cm}

\section{Tools}\label{tools}
\vspace{0.2cm}
The main tools used for the implementation of this work are {R} and {RWeka}.

\subsection{R}
\vspace{0.2cm}

R is at the same time a language and  an environment used for statistical computing and graphical representation about statistics results. Besides, the fact that R provides a great number of statistical methods (linear and nonlinear modeling, classical statistical tests, classification, regressions, clustering, \dots) and graphical techniques, gives the possibility to be highly extended by integrating many packages and libraries available.

\vspace{0.3cm}

\subsection{RWeka} 
\vspace{0.2cm}

RWeka is an R interface to Weka, while Weka is a collection of machine learning algorithms for data mining tasks written in Java. It contains tools to pre-process data, for classification or regression, clustering, association rules, and visualisation. Weka is also recommended and used for developing new machine learning schemes. The package RWeka contains the interface code, and the Weka jar is in a separate package RWekajars. For the experiments of our work we used Weka (thus RWeka, because the implementation and execution are made in R language and environment).

As we will work with a learned model that we want to apply to a new cost context, we need to find a way for removing some attributes from the model, in order to reduce the Test Cost. A tricky way to do this is by setting them as ``missing''.
In Weka the missing values are represented with a ``?''; so for an attribute that we do not want to represent in the reframed model (maybe because of its higher test cost), we just represent the entire column with ``?''.

\subsection{Weka algorithms used in this work} \label{algo}

The main algorithms which we used for the experimental part of this work are:
\begin{itemize}
\item SMO is a Support Vector Machine (SVM), a supervised learning model with associated learning algorithms that analyse data and recognise patterns, used for classification and regression. The basic SVM takes a set of input data and predicts, for each given input, which of two possible classes forms the output, making it a non-probabilistic binary linear classifier. But it also works for more than two classes.

\item IBk is a K-nearest neighbour (KNN), a machine learning classification algorithm. KNN is lazy (it defers computation until classification is needed) and supervised.
\item J48 is a decision tree learning algorithm that graphically displays the classification process of a given input for a given output class labels.

\item \label{ada} Adaboost is an ensemble method for constructing a ``strong'' classifier as a linear combination of simple ``weak'' classifiers \cite{bauer1999}. 

\item \label{bagg} Bagging \cite{bauer1999} is a classificcation method which consists in training a number of base learners each from a different bootstrap sample (original data partition) by calling a base learning algorithm (decision trees, KNN, \dots). After obtaining the base learners, it executes a vote and the most-voted class is predicted. 

\end{itemize}

\noindent We also add that, for the representation and the evaluation for each model or algorithm used in our approach, we have used the ROC curves and some evaluation of classifiers methods. We see some basic ideas next.

\section{Classifier visualisation and evaluation}
\subsection{{ROC} Analysis}
A receiver operating characteristics ({ROC}) graph is a technique for visualising, organising and selecting classifiers based on their performance \cite{Fawcett06}. First, {ROC} graphs have been introduced to represent the trade-off between hit rates and false alarm rates. Later, it has also been demonstrated that {ROC} curves can also be very useful as a visualisation method for classifiers evaluation and comparison. This is due partly because of the fact that simple classification accuracy is often a poor metric for measuring performance, and partly because {ROC} curves have properties well adapted for domains with unbalanced class distributions and unequal classification error costs.

\subsection{Classifiers evaluation}

Nowadays there is a wide range of classifiers techniques. This has led the user to one preoccupation: how can he evaluate how good is the learned model? Performance metrics are of a fundamental importance to give an consistent answer to this question. In classification, it has been shown that in data mining problems, the use of accuracy to compare classifiers is not adequate, and many works have been developed to address this problem \cite{PRL09}. In general, we have to choose the most adequate measure (or set of measures) for a specific application (Accuracy, F-measure, Rank Rate, Area Under the ROC Curve (AUC), Squared Error, \dots). These metrics can be grouped in three principal groups \cite{PRL09}: the ones based on a threshold and qualitative understanding of error, the ones based on a probabilistic understanding of error, and the ones based on how well the model ranks the examples (where {AUC} is found).

\chapter{Reframing the model with missing values on purpose }\label{reducingAttributes}
\setlength{\parskip}{\baselineskip}
There has been an extensive work in the past decades on how the performance of a predictive technique evolves with different feature subsets. This is the core of feature selection techniques. In fact, model performance can even be increased by using a subset of the original attributes. Also, if we think about costs, most works on minimising costs have taken this approach \cite{ling2004decision,zhang2005missing,lomax2013survey}.

However, we can also consider that the model has already been trained (with possibly all the attributes) and we may just want to apply the model with fewer available attributes, e.g., when missing values appear or when we cannot afford `buying' some of the tests included in the model. It is important to say that we consider models that may have been developed by experts or by automated predictive analysis tools, or both. What we do is to {\em reframe} the model to a situation with fewer attributes.

\vspace{0.3cm}

\section{Definition of the approach}\label{approach}

We will focus on classification problems, characterised by a multivariate input domain $\mathbb{X}$, i.e., a tuple of elements of sets $X_1, X_2, \dots, X_m$, where $m$ is the number of features or input attributes, possibly containing the null value, and a univariate output domain $\mathbb{Y} \subset \{l_1, l_2, \dots, l_c\}$, where $c$ is the number of classes or labels of the output attribute. The domain space $\mathbb{D}$ is then $\mathbb{X} \times \mathbb{Y}$. 
Examples or instances are just pairs $\left\langle x,y \right\rangle \in \mathbb{D}$, and datasets are subsets (actually multi-sets) of $\mathbb{D}$. The length of a dataset will usually be denoted by $n$. 
A {\em crisp} classification model $\hat{f}$ is a function $\hat{f}: \mathbb{X} \rightarrow \mathbb{Y}$. 
We just represent the true value by $y$ and the estimated value by $\hat{y}$. Subindices will be used when referring to more than one example in a dataset. Given an example $i$, the values of the $m$ input attributes are denoted by $x_{i,1},  x_{i,2}, \dots, x_{i,m}$. 

Throughout this work we will use several classifiers from Weka \cite{weka}. We are especially interested in using the techniques as they are, being able to use techniques that are, in principle, inattentive to the use of all the attributes, such as kernel methods, ensembles, etc. In particular, we will use SMO (a support vector machine), IBk (a k-nearest neighbour), J48 (a decision tree), Adaboost (an ensemble method with decision stumps) and Bagging (an exemple method with decision trees)presented in the previous chapter. All of them will be used with their default parameters.

Once this common setting for classification is set, we may wonder how models are created and deployed. In fact, models are usually learned under some contextual information but possibly deployed several times under changing conditions.
Reuse of learned knowledge is of critical importance in the majority of knowledge-intensive application areas, particularly because the operating context can be expected to vary from training to deployment and we need to make the best decision according to that context \cite{SDM00,flach2003decision}. One kind of context is related to the way inputs (i.e., features) can vary from training to deployment. Among these changes, we can mention two important ones: attributes may not be available (missing values) or may have different test costs. Another type of context depends on how class distribution and misclassification costs affect the output variable. Note that these context changes may happen for each problem instance individually. For instance, in a medical domain, some tests may not be applicable to some patients (as can be contraindicated or risky), other tests may be more or less expensive depending on the patient (her insurance policy). Also, for the output variable, a wrong diagnosis usually has asymmetric costs, as a false negative is usually worse (and economically more expensive in the long term) than a false positive. 

These two types of costs (test costs and misclassification costs) are highly intertwined. In fact, as Turney \cite{turney2000types} points out, we can only rationally determine whether it is worthwhile to pay the cost of test when we know the cost of misclassification errors. If the cost of misclassification errors is much greater than the cost of tests, then it is rational to purchase all tests that seem to have some predictive value. But if the cost of misclassification errors is much less than the cost of tests, then it is not rational to purchase any tests.

Let us define these types of cost formally:

\begin{definition}\label{def:M}
A misclassification cost function is any function $M:{\mathbb{Y}} \times {\mathbb{Y}} \rightarrow \mathbb{R}$ which compares elements in the output domain. For convenience, the first argument will be the estimated value, and the second argument the actual value. 
\end{definition}
As ${\mathbb{Y}}$ is a discrete set, typically we refer to $M$ as the misclassification cost {\em matrix}. We will assume that the diagonal of the matrix is zero (i.e., $\forall y \:: M(y,y) = 0$) and that the other elements of the matrix are greater than or equal to 0.

We can have a different matrix for each example, denoted by $M_i$. 
From above, we define the misclassification cost $MC$ of an example $i$ as $MC_i \triangleq M_i(\hat{y}_i,y_i)$. 
For a complete dataset we can just calculate the average $MC$ as the Frobenius product between the confusion matrix and the cost matrix, divided by $n$.

\begin{definition}\label{def:T}
The test cost vector is a real vector of size $m$, i.e., $(t_1, t_2, \dots, t_m)$, where $m$ is the number of attributes. 
The test cost function $T_j$ is any function as:
\begin{eqnarray*}
T_j(x) \triangleq \left\{   \begin{array}{l l}
                                             t_j    & \quad \text{if $x$ is not null}\\
                                             0      & \quad \text{otherwise}
                       \end{array} \right.																	
\end{eqnarray*}						
\end{definition}

We can have a different test cost function for example, denoted by $T_{i,j}$. 
From above, we define the test cost $TC$ of an example $i$ as $TC_i \triangleq \sum_{j=1}^m T_{i,j}(x_{i,j})$. 
For a complete dataset we can just calculate the average $TC$ as the dot product between the use vector (how many times each attribute has been used) and the test cost vector, divided by $n$.

We want to integrate both the misclassication cost and the test cost in one single measure of cost:

\begin{definition}\label{def:JC}
The {\em joint} cost is:
\begin{eqnarray*}
JC_i \triangleq \alpha \cdot MC_i + (1-\alpha) \cdot TC_i															
\end{eqnarray*}						
with $\alpha \in [0,1]$.
\end{definition}
The value $\alpha$ will be better explained later on, but clearly sets more relevance to misclassification or test costs. If $\alpha = 1$ only the misclassification cost matters, and if $\alpha = 0$  only the test cost matters.
$M$, $T$ and $\alpha$ configure the cost context. With $m$ attributes and $c$ classes, there are $m+c(c-1)-1$ degrees of freedom (assuming the cost matrix has a zero diagonal).

\begin{example}\label{ex:1}
Consider the iris dataset \cite{UCIrep2013}, created by R.A. Fisher, which is composed of four attributes: $SL$, $SW$, $PL$ and $PW$ and three classes: {\em setosa}, {\em versicolour} and {\em virginica}. 

Assume that we have an example where the test cost vector is $(3, 2, 10, 5)$ and the misclassification cost matrix $M$ is defined as follows:

\begin{center}
{\center
\begin{tabular}{c|ccc}
            & setosa & versicolour & virginica \\ \hline
setosa      & 0      &         20  & 15        \\
versicolour & 5      &         0   & 15        \\
virginica   & 30     &         15  & 0         \\
\end{tabular}
}
\end{center}

\noindent where columns represent the actual value and rows the predicted value. 
Consider also that we have three models. Model 1 requires attributes $SL$ and $PL$ and predicts $virginica$, model 2 requires attributes $SL$ and $SW$ and predicts $setosa$, and model 3 requires all attributes and predicts $versicolour$. 
If the true label is $versicolour$, then we have $JC = MC + TC = 15 + (3 + 10) = 28$ for model 1, $JC = MC + TC = 20 + (3 + 2) = 25$ for model 2, and $JC = MC + TC = 0 + (3 + 2 + 10 + 5) = 20$ for model 3. 
\end{example}

In the previous example, model 3 is better than the other two for this example. Of course, in general, we need to make the decision of which model to use without knowing the actual label, and that will depend on the reliability of the models and the class frequencies. This is then a decision problem, for which we need to determine the model with lowest expected cost.

Interestingly, we may wonder what would happen if we remove attribute $PW$ for model 3. Even though we are told that model 3 works with that attribute, it is not difficult to guess what the model would do without it. There are several ways to do it: set it to null and see what happens, or consider several values for the attribute and get the most frequently predicted class. 
Imagine that, by using any of these two methods, model 3 still predicts $versicolour$. Our cost would have been lowered down to $15$.

So, the question we want to address in this work is not only what model to choose but also the subset of attributes that we will use (`buy').
\vspace{0.3cm}

\section{Reframing the model with missing values on purpose }\label{reducingAttributes2}
Re-training can be a bad choice on many occasions: when we have an expert (human-made) model, when
we are using ensembles or other techniques with high training costs, when the training data is no longer
available, or when the cost context changes recurrently, even for each example.
What can we do instead of re-training? What we do is to {\em reframe} the original model to a situation with
fewer attributes, a different {\em feature configuration} . But, how do models behave when we remove attributes
from them?

First, we need to clarify how we can get predictions from a model that takes $m$ attributes when we only provide $m' < m$. There are two possible ways of reframing a model in order to do this: 

\begin{enumerate}
\item Setting the attribute to null. Many models can just work with missing values for test instances. However, on some occasions the model cannot take null values (e.g., logistic regression is usually one of these techniques). Nonetheless, it highly depends on the implementation of the technique (or the model).
\item \label{item-range} Instead, we can invent or negotiate over the attribute  \cite{bella2011using}. This means that if it is a nominal attribute, we can just ask the model to give a prediction for all the possible values for the attribute, get the predictions, and calculate the most frequently predicted class. If it is a numerical attribute, we can just use a sampling or discretisation and then behave similarly. If we have information about the attribute value distribution, we can also use it, as in missing value imputation.
\end{enumerate}

\noindent This second approach is more powerful (and related to missing value imputation and feature selection). 
In fact, on occasions, we may even realise that we get the same prediction for whatever value of the attribute (i.e., this is said to be a non-negotiable attribute in terms of \cite{bella2011using}) so we can clearly save the cost of getting the value for this attribute. 
However, we will work with the first way, as using a null value works for many DM/ML techniques and libraries, without further modification of our models. In our case, it just worked smoothly with Weka \cite{weka}.

\begin{figure}
\centering
\includegraphics[width=0.6\textwidth]{./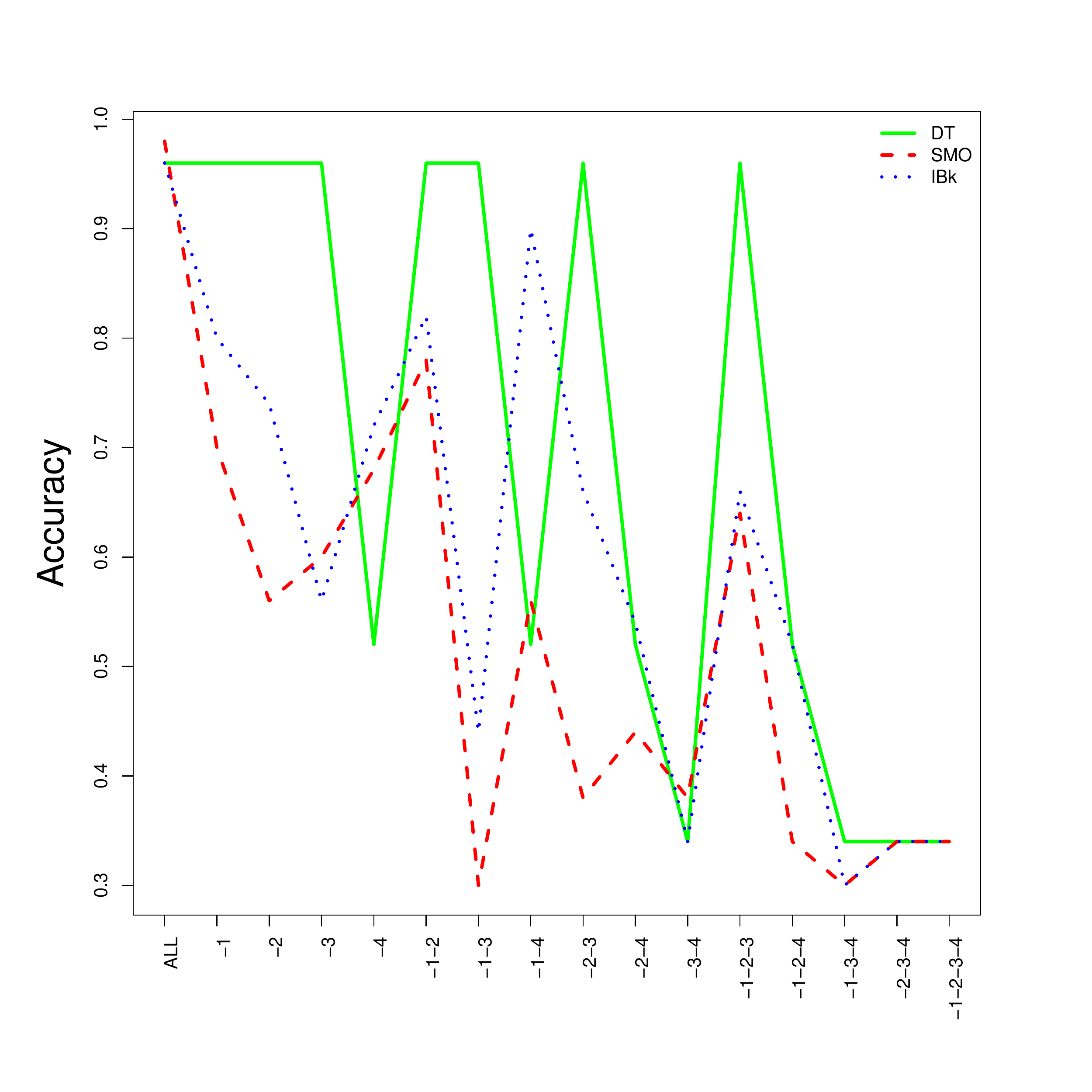} \hfill
\includegraphics[width=0.6\textwidth]{./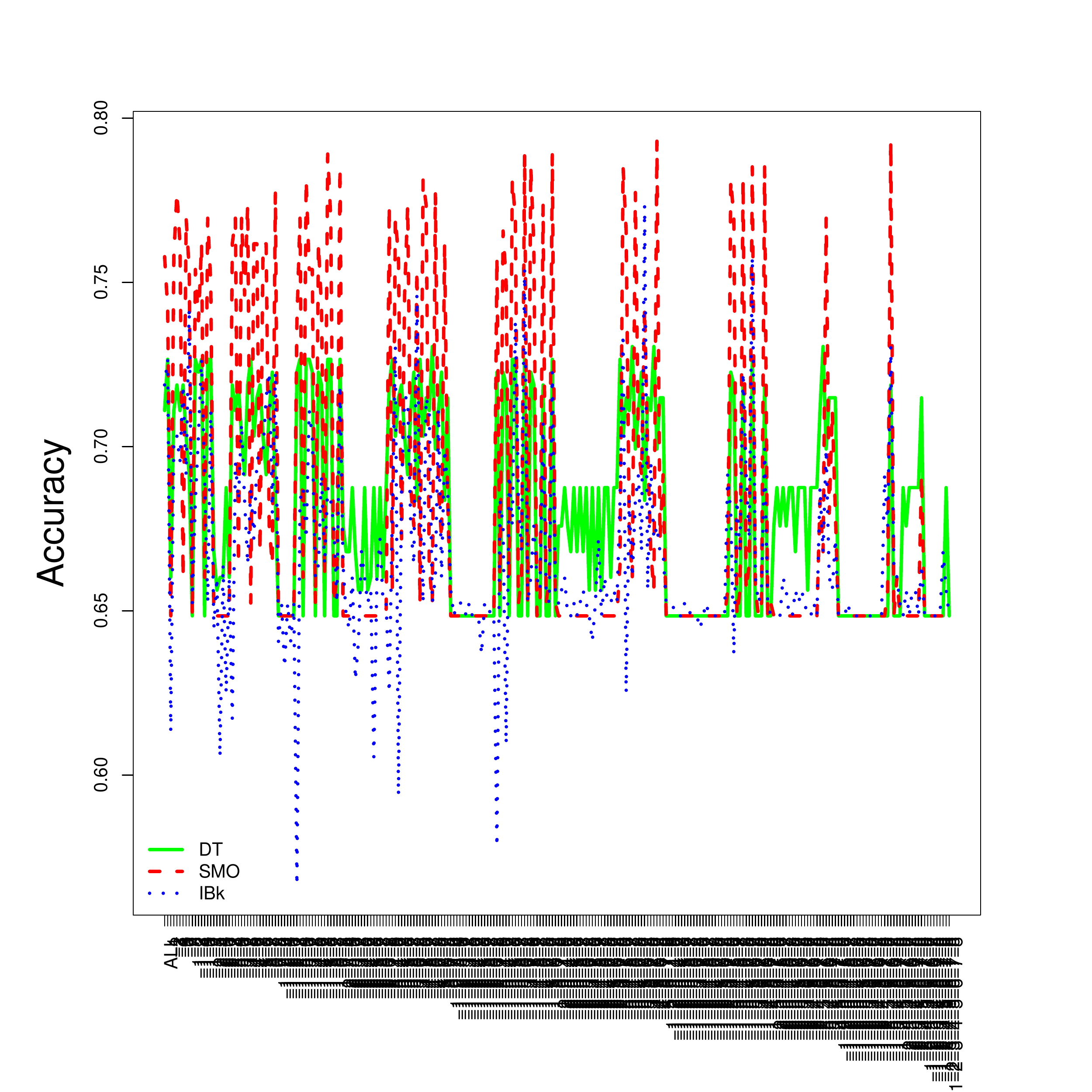} 
\caption{Evolution of accuracy according to attribute selection for three models: decision trees, SMO and kNN.}
\label{fig:accuracy}
\end{figure}

Once we know a simple procedure to reduce the attribute set, let us analyse how models behave.
Figure \ref{fig:accuracy} shows the evolution of accuracy\footnote{We show accuracy, but we could show other measures such as AUC or MSE \cite{PRL09,JMLR12}}, for all the possible subsets of the iris and the diabetes dataset (the subset lattice). The models (decision trees, SMO and KNN) are learned over the whole training dataset using all the attributes. Then, some attributes are removed by setting a null value on them systematically. Up: iris dataset (with $4$ attributes and hence $2^4=16$ combinations). Down: Pima Indian diabetes dataset (with $8$ attributes and hence $2^8=256$ combinations). Models are trained on 2/3 of the data and evaluated with the rest. We see many interesting things here:
 First, the general pattern is to get more accuracy as more attributes are used. But, obviously, some attributes are more important than others, leading to a sawtooth picture.
 Second, and more interestingly, the minimum is found at the majority-class classifier, i.e., if we are not given any information about any attribute, the best thing that we can do is to predict the majority class (or the class with lowest expected loss if misclassification costs are taken into account). 
 Third, now surprisingly, we see that for some models and problems (Figure \ref{fig:accuracy}, down), the maximum is not obtained with all the attributes. In fact, it is obtained at several other places, one of them with four attributes removed (of the possible eight).

We can show the specific values of $MC$, $TC$ and the aggregate $JC$ for a given context of $M$, $T$ and $\alpha$. 
We will consider a `uniform' operating context:

\begin{definition}\label{def:uniform}
The uniform operating context $\theta_U$ is defined by a uniform test cost vector $(1/m, 1/m, \dots, 1/m)$ and a uniform misclassification cost matrix 
$\forall y_1,y_2 \:: M(y_1,y_2) = c/(c-1)$ if $y_1 \neq y_2$ and 0 otherwise. Also, $\alpha=0.5$.
\end{definition}

The parameters of this context have the property that given a problem whose classes are perfectly balanced, the expected $MC$ of a random classifier is $1$ and the expected $TC$ of a classifier using all the attributes is $1$. As a consequence, $JC=1$. For this context, any model with $JC > 1$ is clearly a model to be discarded. In fact, as a random classifier does not need to use any of the attributes, any $JC > 0.5$ is also discardable for this context.

Figure \ref{fig:allcostsU} shows the evolution of $MC$, $TC$ and the aggregate $JC$ for the uniform context described above. Up: iris dataset. The configuration which minimises the $JC$ is given by the use of only attribute 4 (removing -1-2-3). Down: Pima Indian diabetes dataset.  The configuration that minimises the $TC$ is given by only two attributes (removing six). We can see that the information shown is very similar to that evolution of Figure \ref{fig:accuracy}. However, for other operating contexts, things might be different.

\begin{figure}
\centering
\includegraphics[width=0.6\textwidth]{./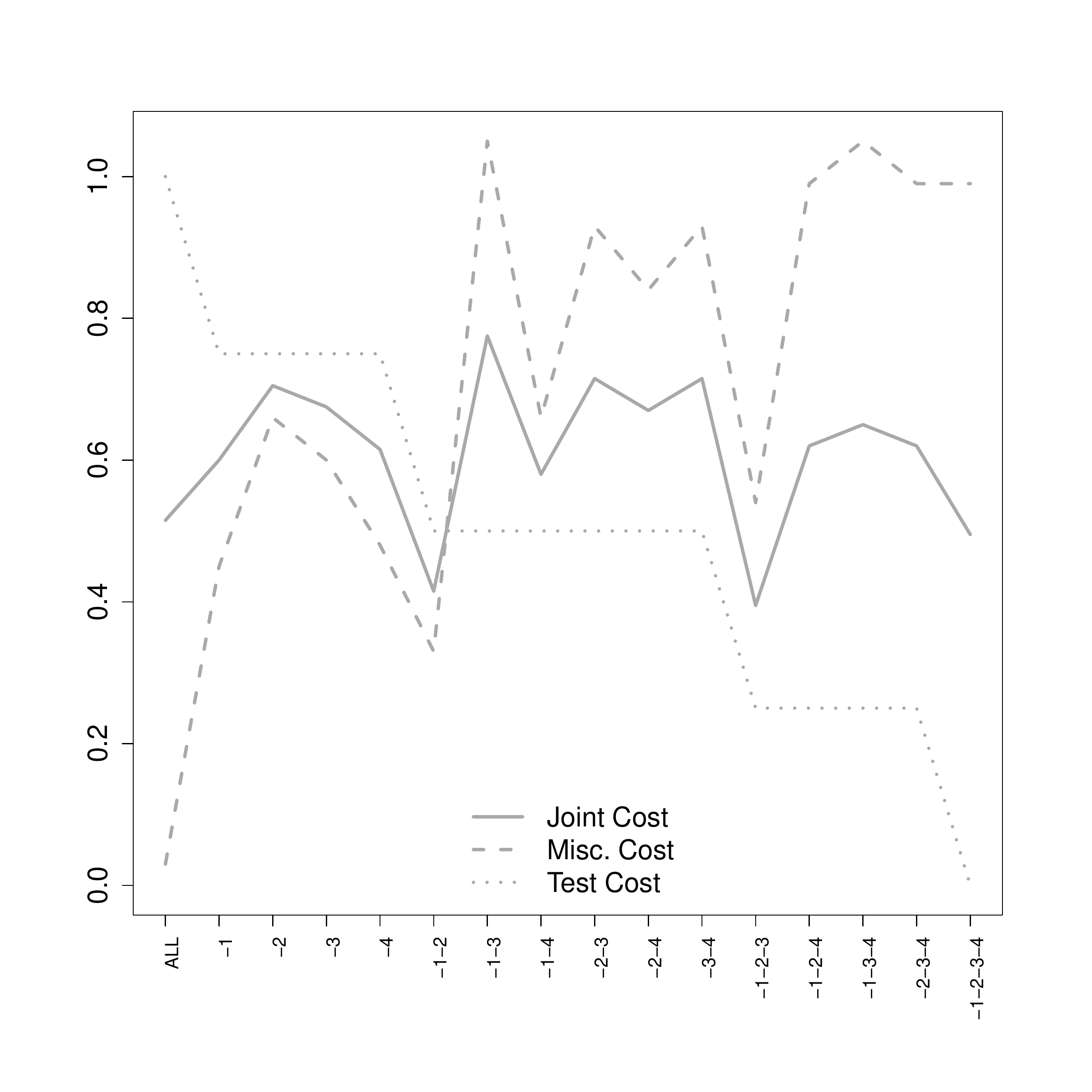} \hfill
\includegraphics[width=0.6\textwidth]{./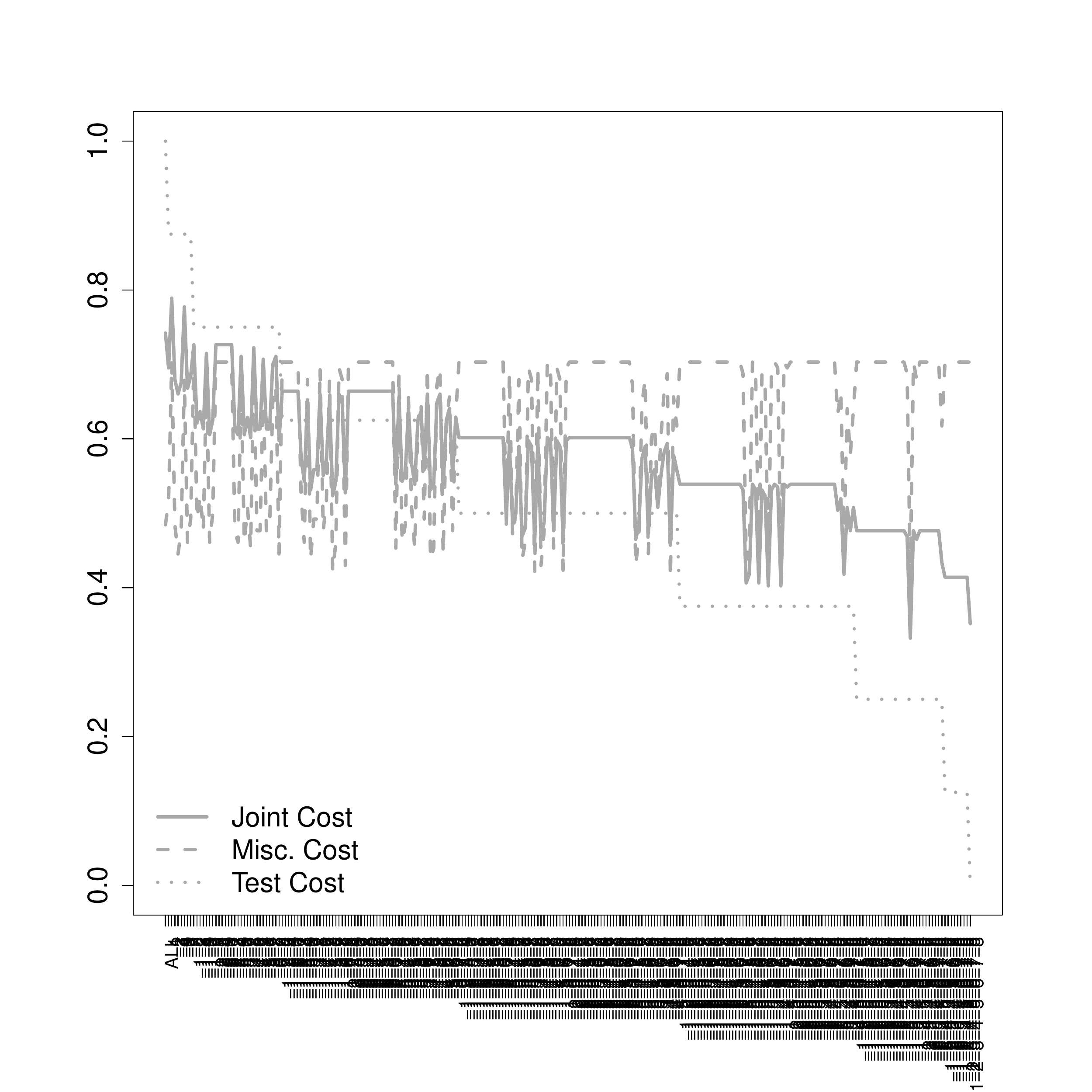} 
\caption{ Evolution of $MC$, $TC$ and $JC$ according to attribute selection for a SMO (SVM) model using the uniform context $\theta_U$ .}
\label{fig:allcostsU}
\end{figure}

Now let us consider another operating context, defined as follows for the problems ``iris'' and ``Pima Indian diabetes''.

The operating context $\theta_1$ for ``iris'' is just the one in example \ref{ex:1}. The operating context $\theta_2$ for ``Pima Indian diabetes'' is defined as a test cost vector is $(2, 50, 5, 5, 20, 3, 10, 1)$ where the most expensive tests correspond to `plasma glucose concentration', `2hour serum insulin' and `diabetes pedigree function'. The misclassification cost matrix $M$ is defined as follows:

\begin{center}
{\center
\begin{tabular}{c|cc}
             & negative (0)  & positive (1) \\ \hline
negative (0) & 0             &         200          \\
positive (1) & 50            &         0           \\
\end{tabular}
}
\end{center}
\noindent where columns represent the actual value and rows the predicted value. The value of $\alpha$ is 0.5.

With these operating contexts, Figure \ref{fig:allcosts0102},shows the same plots as Figure \ref{fig:allcostsU} ( Up: iris dataset using context $\theta_1$. The configuration which minimises the JC is given by the attribute 4. Down: Pima Indian diabetes dataset using context $\theta_2$. The configuration which minimises the JC is given by removing all the attributes.) .

\begin{figure}
\centering
\includegraphics[width=0.6\textwidth]{./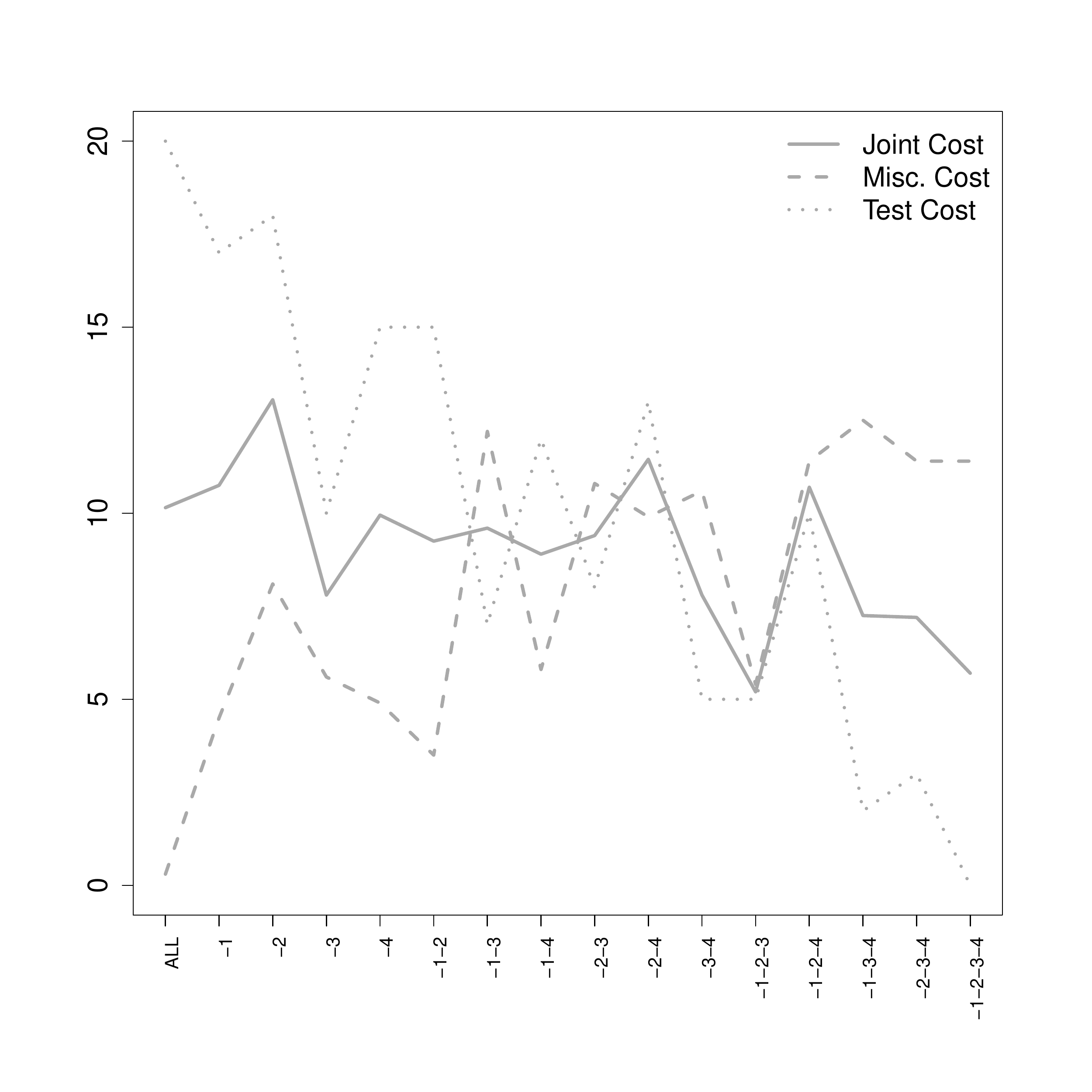} \hfill
\includegraphics[width=0.6\textwidth]{./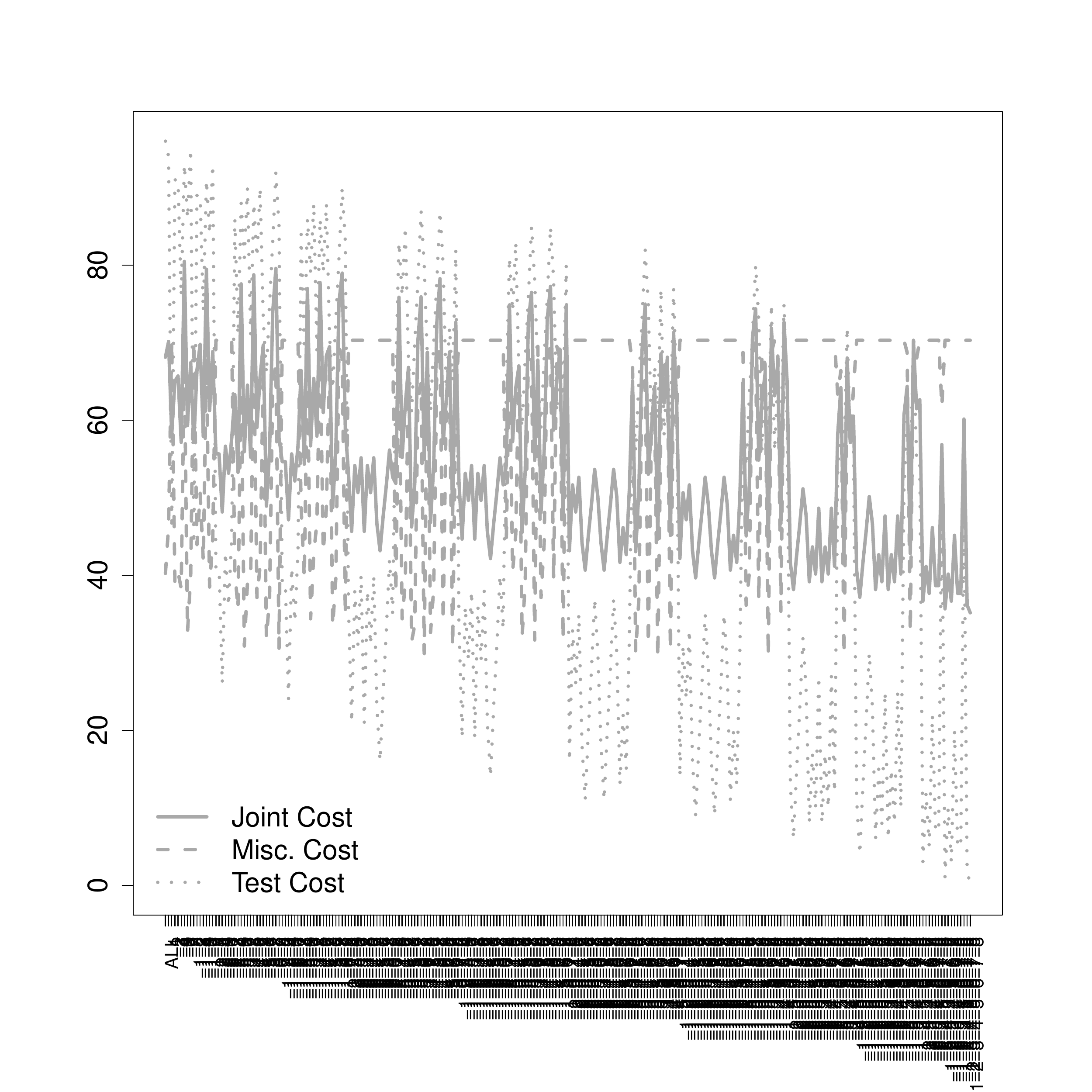} 
\caption {Evolution of $MC$, $TC$ and $JC$ according to attribute selection for a SMO (SVM) model using non-uniform context.}
\label{fig:allcosts0102}
\end{figure}

\vspace{0.3cm}

\section{The MC/TC tradeoff: \JROC plots}

The plots seen in the previous section are very informative for a given operating context. If the plots are drawn on a validation set, we will just choose the model and attribute configuration which minimises the $JC$. However, there are some problems with the previous plots: if we have several models, the plot gets too crowded. Also, the {\em curves} are usually too sawtooth. Finally, we need to change the curves whenever we change the operating context.

While some of the above problems are difficult to solve completely, most especially because we have $m+c(c-1)-1$ degrees of freedom, we can see a more convenient alternative that minimises these problems. We call these  \JROC plots.

\begin{definition}\label{def:jroc}
A \JROC plot shows the test cost ($TC$) on the $\xaxis$ and misclassification cost ($MC$) on the $\yaxis$.
\end{definition}

Figure \ref{fig:JROCU} shows $\JROC$ plots for iris and Pima Indian diabetes. 
For iris, as it has four attributes, we see $2^4 \times 3$ points, $2^4$ for each model.
For diabetes, as it has eight attributes, we see $2^8 \times 3$ points, $2^8$ fore each model.
Those models and configurations which go closer to the bottom left corner are better than those that are placed on the top right area of the plot. 
There is always a point with 0 $TC$ and a high misclassification cost, usually matching the majority class model. 
However, as mentioned earlier on, the minimum $MC$ is not always achieve with maximum $TC$. 
In this particular case, for iris we see that decision trees and kNN (IBk) perform better, as the points which are most on the bottom left are of these models. However, for diabetes, it seems that SMO and kNN get closer to the desired bottom left corner.

\begin{figure}
\centering
\includegraphics[width=0.6\textwidth]{./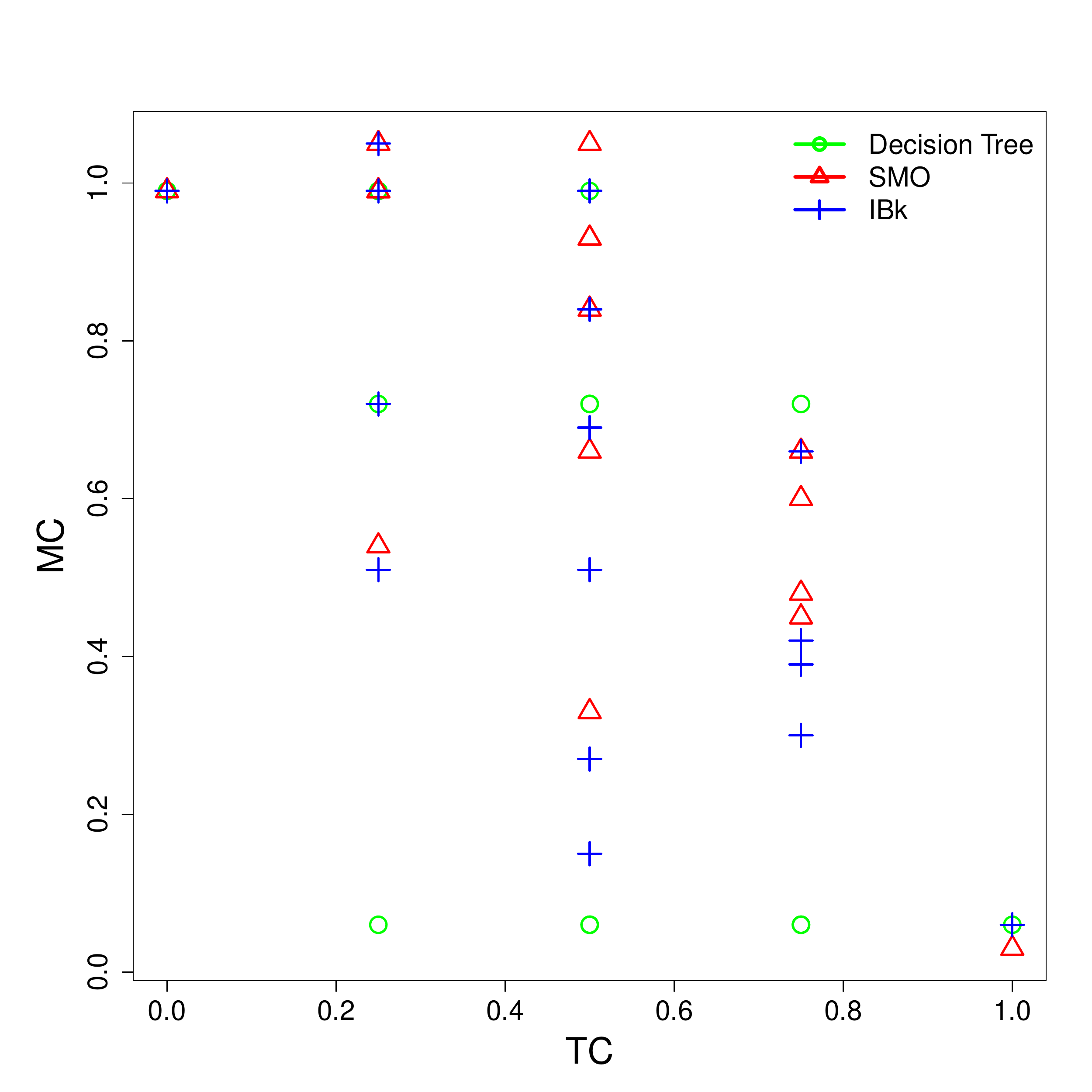} \hfill
\includegraphics[width=0.6\textwidth]{./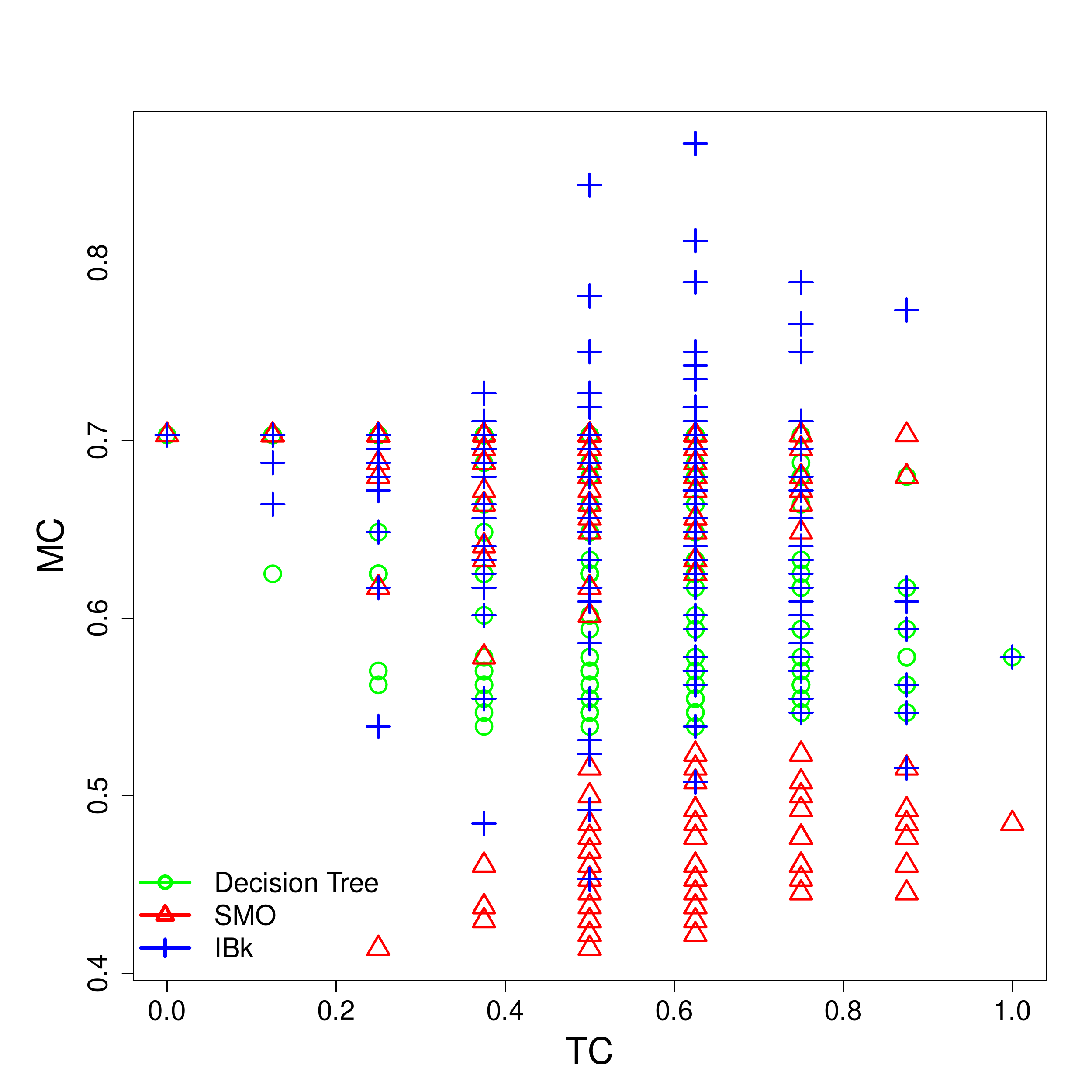} 
\caption{ \JROC plots for the three models: decision trees, SMO and IBk using the uniform operating
context$\theta_U$ .}
\label{fig:JROCU}
\end{figure}

Figure \ref{fig:JROC0102} shows a similar plot with different cost contexts. Here, we also see how the points are now located in different places. Even though the classifiers are the same, Compare to Figure \ref{fig:JROCU}, which uses a different cost context, the distribution of the points is very different. (The test cost ($TC$) on the \xaxis and misclassification cost ($MC$) on the \yaxis. up: iris dataset with the operating context $\theta_1$. Down: Pima Indian diabetes dataset with the operating context $\theta_2$.)

\begin{figure}
\centering
\includegraphics[width=0.6\textwidth]{./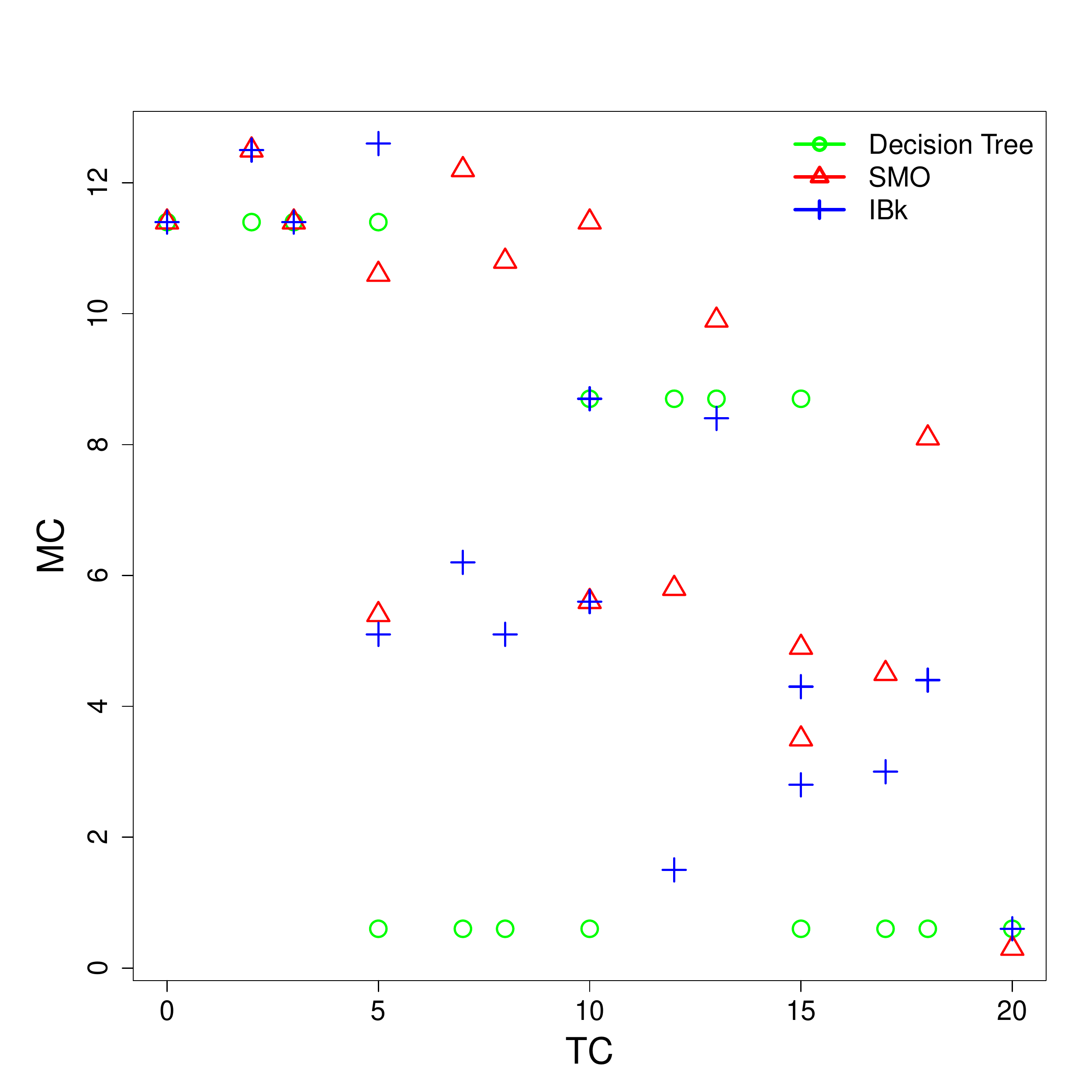} \hfill
\includegraphics[width=0.6\textwidth]{./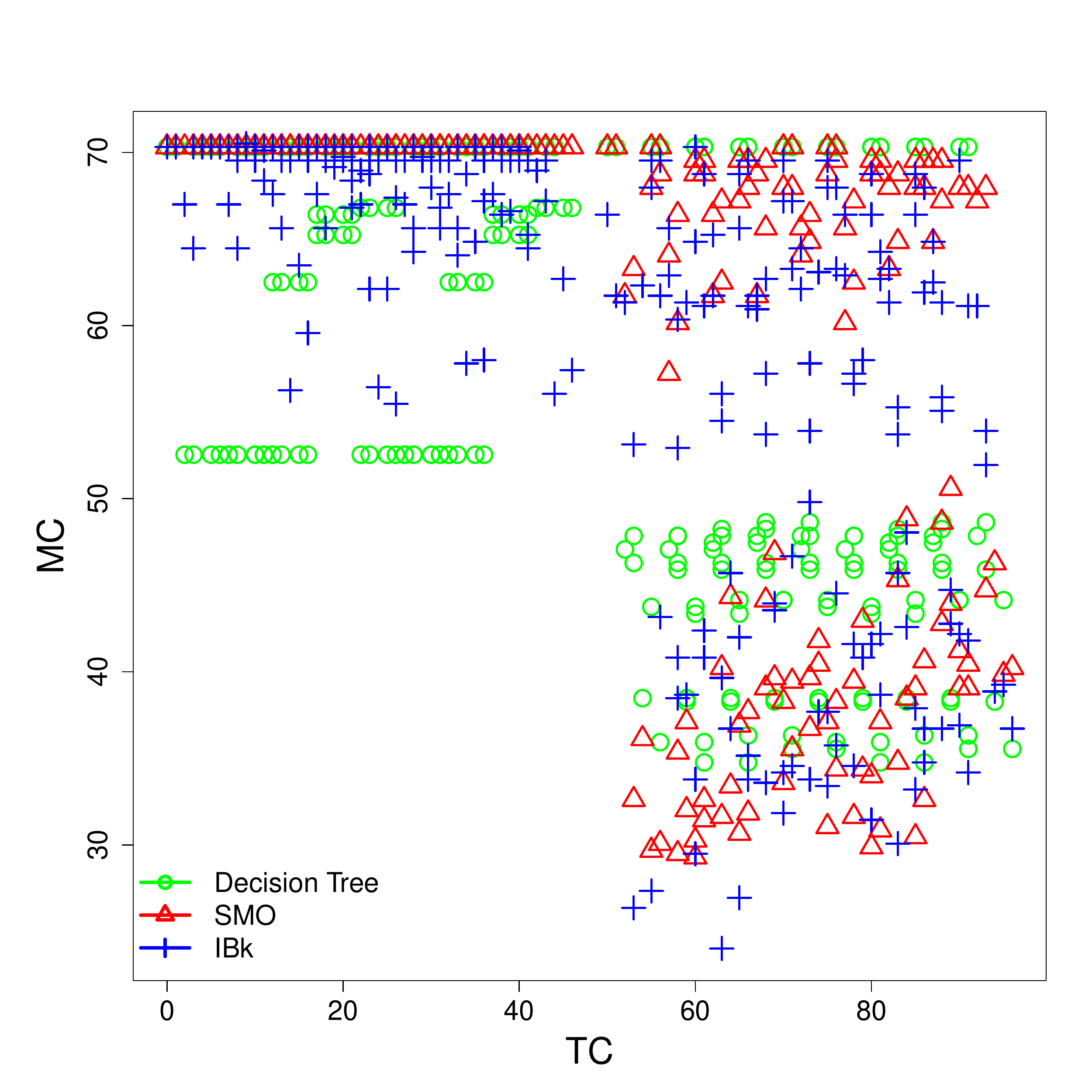} 
\caption {\JROC plots for the three models: decision trees, SMO and IBk using the non-uniform operating context.}
\label{fig:JROC0102}
\end{figure}

Intentionally, we have not shown $\alpha$ on the plots, so, according to definition \ref{def:JC}, we cannot calculate $JC$ unless this value is fixed. The following lemma shows that the same plot can be used for any value of $\alpha$, with the notion of cost {\em isometrics}.

\begin{proposition}
Given a value of $alpha$ the points which are connected by a line with slope $\frac{1-\alpha}{\alpha}$ have the same $JC$.
\end{proposition}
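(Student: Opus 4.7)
The plan is to observe that $JC$ is an affine function of the two coordinates $(TC, MC)$ plotted in a \JROC plot, so its level sets are automatically parallel straight lines, and I only need to compute their common slope as a function of $\alpha$. The whole proposition then reduces to a one-line algebraic identity read geometrically.

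Concretely, I would proceed as follows. First, take two points $P_1 = (TC_1, MC_1)$ and $P_2 = (TC_2, MC_2)$ lying in a \JROC plot and write the condition $JC_1 = JC_2$ using Definition \ref{def:JC}. Expanding and collecting terms gives $\alpha(MC_1 - MC_2) = -(1-\alpha)(TC_1 - TC_2)$. Second, assuming $\alpha \in (0,1)$ and $TC_1 \neq TC_2$, divide to obtain
\[
\frac{MC_1 - MC_2}{TC_1 - TC_2} \;=\; -\,\frac{1-\alpha}{\alpha}.
\]
This is precisely the statement that $P_1$ and $P_2$ lie on a straight line of slope $-\tfrac{1-\alpha}{\alpha}$. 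The converse direction, namely that collinearity with this slope implies equal $JC$, is obtained by reading the same equivalences right-to-left.

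The expected obstacle is really only notational: the proposition as stated gives the slope as $\tfrac{1-\alpha}{\alpha}$, which should be read as the absolute value of the slope, matching the ROC-analysis convention of quoting isometric slopes as positive skew ratios. Geometrically the isometrics must descend to the right, since paying more for tests has to be compensated by lower misclassification cost to keep $JC$ fixed, which is why the signed slope is negative. I would also briefly record the degenerate extremes: when $\alpha = 1$ the isometrics become horizontal (only $MC$ matters), and when $\alpha = 0$ they become vertical (only $TC$ matters), consistent with the formula in the limit. Beyond fixing this sign convention, the proof requires no machinery beyond Definition \ref{def:JC} itself.
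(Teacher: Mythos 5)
Your proof is correct and follows essentially the same route as the paper's: equate $JC_a = JC_b$ using Definition \ref{def:JC}, rearrange, and read off the ratio $\frac{MC_a - MC_b}{TC_a - TC_b}$ as the slope of the isometric line. Your sign remark is well taken: the paper's final step, dividing $MC_a - MC_b = \frac{1-\alpha}{\alpha}(TC_b - TC_a)$ by $TC_a - TC_b$, actually yields $-\frac{1-\alpha}{\alpha}$, so the isometrics have negative signed slope (they descend to the right) and the stated slope $\frac{1-\alpha}{\alpha}$ must be read as its magnitude, exactly as you observe.
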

\begin{proof}
From definition \ref{def:JC} we have that $JC = \alpha \cdot MC + (1-\alpha) \cdot TC$. 
Consequently, two points $a$ and $b$ have the same $JC$ iff
$\alpha \cdot MC_a + (1-\alpha) \cdot TC_a = \alpha \cdot MC_b + (1-\alpha) \cdot TC_b$. 
Operating with this equation, we get:
\begin{eqnarray*}
\alpha \cdot MC_a + (1-\alpha) \cdot TC_a & =  & \alpha \cdot MC_b + (1-\alpha) \cdot TC_b \\ 
MC_a + \frac{1-\alpha}{\alpha} \cdot TC_a & = &  MC_b + \frac{1-\alpha}{\alpha} \cdot TC_b \\ 
MC_a - MC_b & = & \frac{1-\alpha}{\alpha} \cdot (TC_b - TC_a)  \\ 
\frac{MC_a - MC_b}{TC_a - TC_b} & = & \frac{1-\alpha}{\alpha}
\end{eqnarray*}
As the last expression is the change in $y$ divided by the change in $x$, the expression $\frac{1-\alpha}{\alpha}$ is the slope of this line.
\end{proof}

If $\alpha = 1$  only the misclassification cost matters and the slope is 0, and if $\alpha = 0$  only the test cost matters and the slope is infinite. It is clear that $\alpha$ only represents one of the $m+c(c-1)-1$ degrees of freedom, but it is able to consider the most important one: the relative relevance between misclassification and test costs.

As in classical ROC analysis, if we slide an isometric line given by a value of $\alpha$ from the point $(0,0)$ in the opposite direction (towards the top-right part of the plot), we will eventually find one  point (or more) on the plot. This is the best point according to the operating condition.

Figure \ref{fig:isometrics} ( Up: iris dataset with the operating context $\theta_1$. Down: Pima Indian diabetes dataset with the operating context $\theta_2$.) shows three different isometrics given by operating conditions $\alpha=0.03$, $\alpha=0.5$  and $\alpha=0.9$ and where they touch on the cloud of points. As we can see, different feature configurations and models are chosen for each operating condition  

\begin{figure}
\centering
\includegraphics[width=0.6\textwidth]{./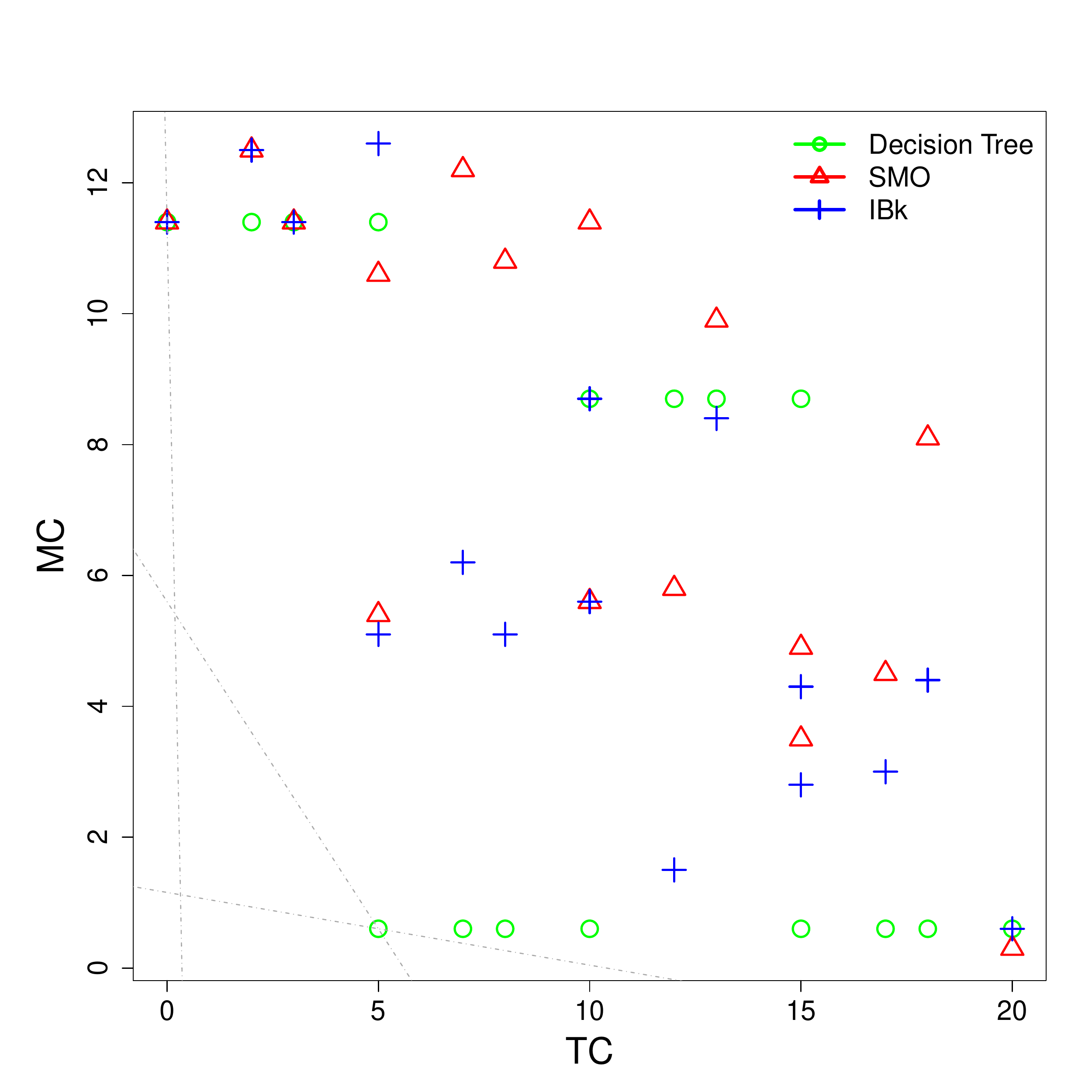} \hfill
\includegraphics[width=0.6\textwidth]{./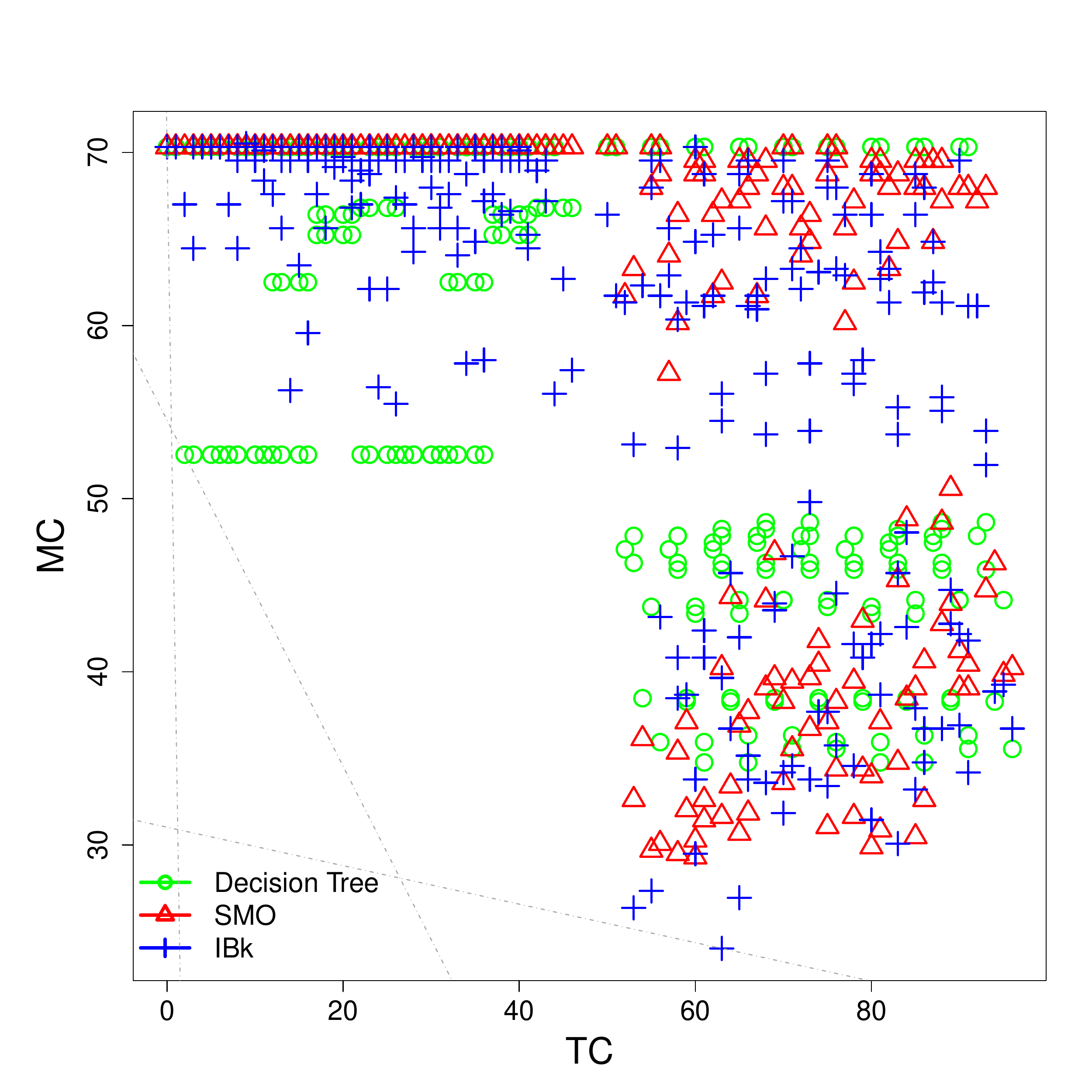} 
\caption {\JROC plot with isometrics representation for operating condition $\alpha=0.03$, $\alpha=0.5$  and $\alpha=0.9$.}
\label{fig:isometrics}
\end{figure}

Finally, if we consider all possible values of $\alpha \in [0,1]$ we see that some points are never chosen. This is exactly the notion of convex hull:

\begin{definition}\label{def:jroc-hull}
A \JROC convex hull of a model is the convex hull of the set of points on the \JROC space that are defined using all the attribute subsets.
\end{definition}

\begin{figure}
\centering
\includegraphics[width=0.6\textwidth]{./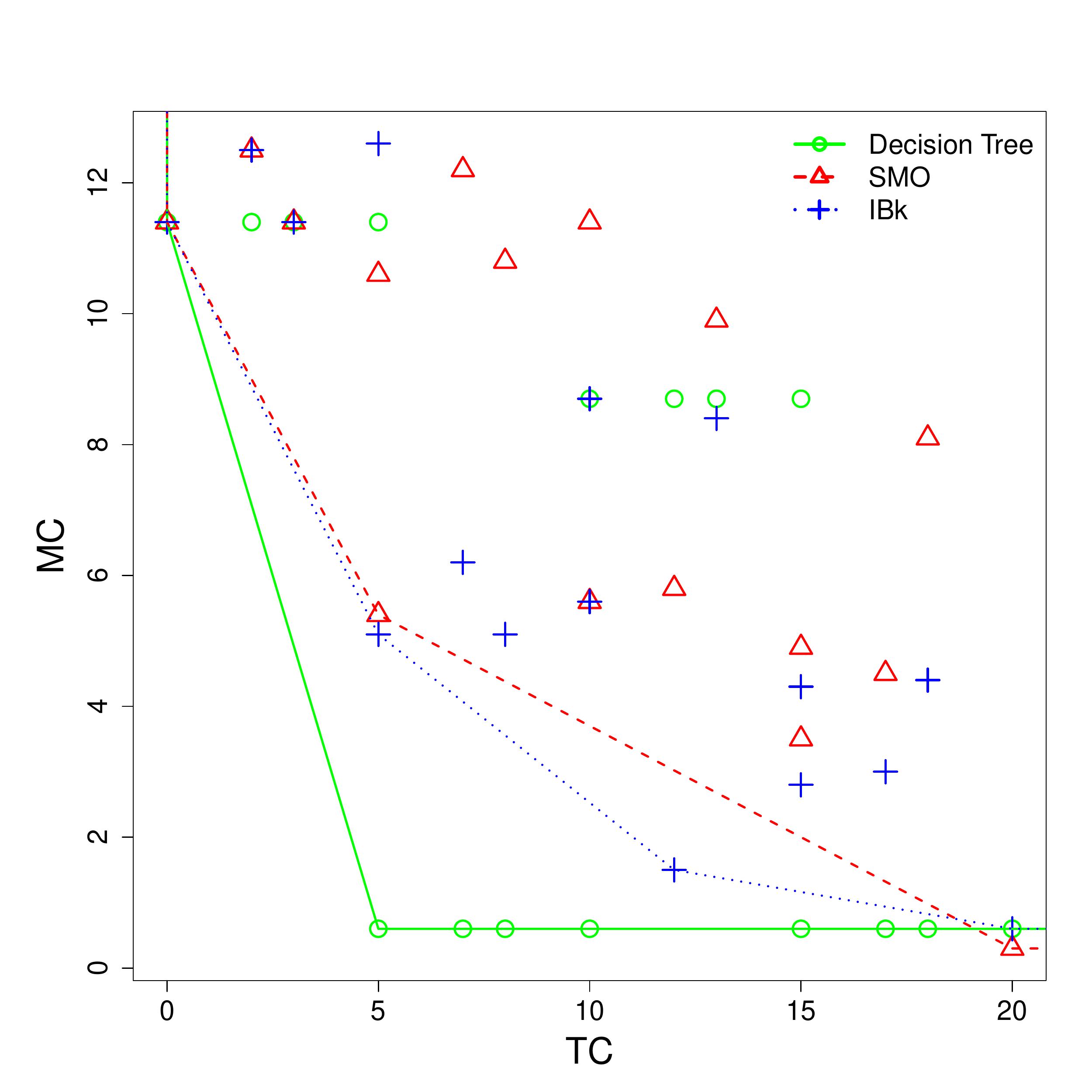} \hfill
\includegraphics[width=0.6\textwidth]{./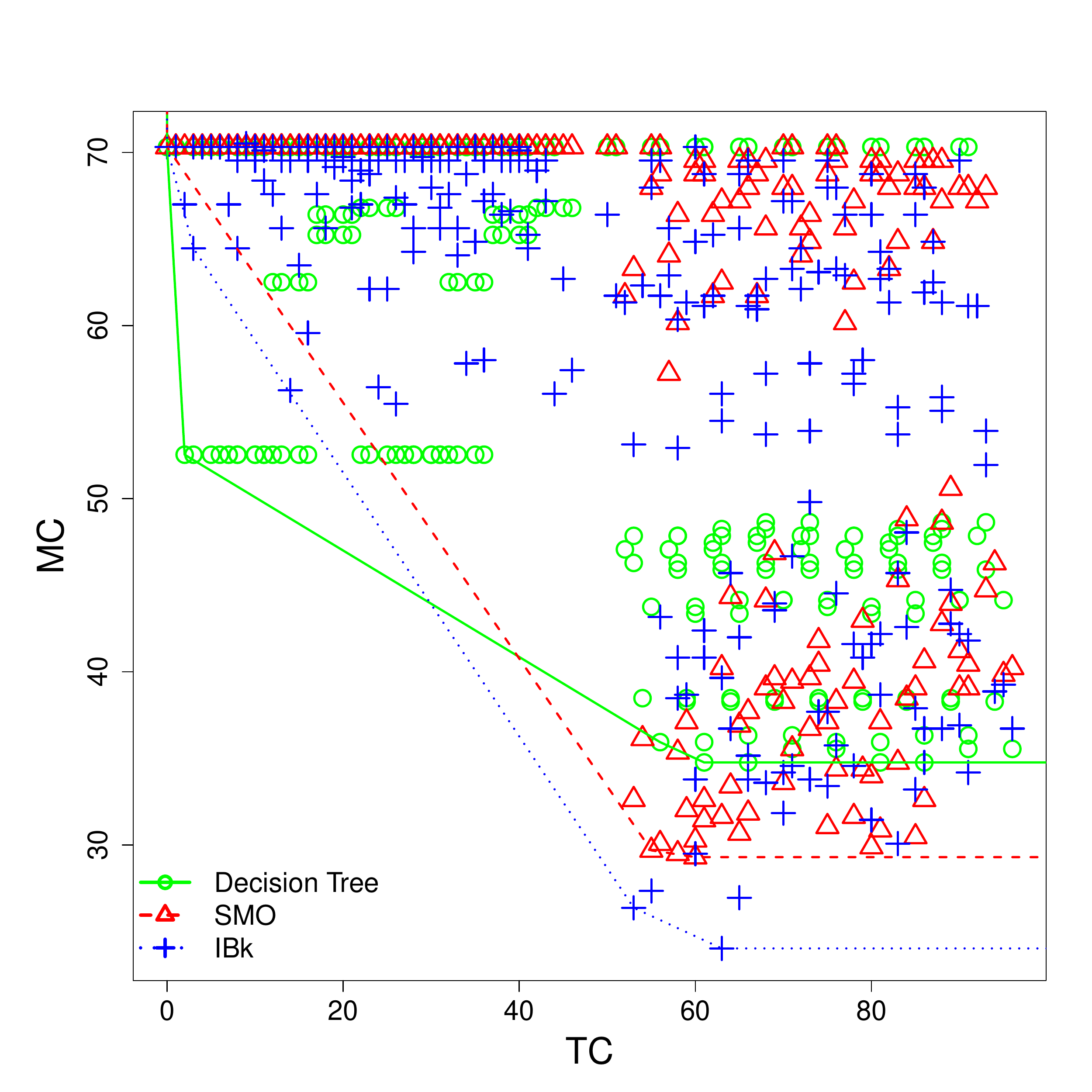} 
\caption {Convex hulls. Up: iris dataset with the operating context $\theta_1$. Down: Pima Indian diabetes dataset with the operating context $\theta_2$.}
\label{fig:hull}
\end{figure}

Figure \ref{fig:hull} shows the convex hull for each of the three models. We can also see the regions of dominance. For diabetes, IBk dominates for high values of $\alpha$, while the decision tree dominates for low values of $\alpha$.

From here we can calculate the regions of dominance for $\alpha$ and choose the best model accordingly, in the very same way as in ROC analysis.

\vspace{0.3cm}

\chapter{Approximating the \JROC hull}\label{hull}

The previous procedure allows us to determine the best model and configuration given the operating condition. We only need to calculate where all the points lie, compute the convex hull and find the one that corresponds for each possible $\alpha$ in application time. 
While this looks easy to do, there is one big issue. As the number of attributes increase, the number of points for a model grows exponentially: a lattice of $m$ elements has $2^m$ nodes. For instance, for a model with 16 attributes, we would have $2^{16} = 65536$ points. Navigating the complete lattice of attribute subsets, calculate their expected $TC$ and $MC$ would be unfeasible. So we need to explore some ways to reduce the number of configurations that are evaluated, while still having a good approximation of the \JROC hulls in order to do the correct decisions and get the optimal cost.

We will consider how to reduce the number of configurations from an exponential growth ($O(2^m)$), given by a {\em full} method, to a quadratic growth ($O(m^2)$). We consider four possible\footnote{There would also be the {\em forward} versions as well. We rule these possibilites out here for the simplicity of exposition, and also because we think that the results would be similar, but they could also be considered in practice.} methods:

\begin{itemize}
\item Backward $MC$-guided (BMC): we start with the $m$ attributes, we evaluate with the $m$ cases removing one attribute, and choose the best one in terms of $MC$, then we evaluate the $m-1$ cases removing one attribute from the previous one. This has an order of $O(m^2)$. In particular, it is easy to see that it leads to exactly $m^2/2)$ points.
\item Backward $TC$-guided (BTC): backward incremental: as $MC$ using $TC$ instead. It has the same order and number of points.
\item Backward $JC$-guided (BJC): as $MC$ using $JC$ instead. It has the same order and number of points.
\item Monte Carlo (RND): a random sample over the lattice. In order to make comparison fair, we will also consider the same number of elements: 
\end{itemize}

\noindent It is easy to show that if the misclassification cost matrix is uniform, then BTC and BJC are equivalent. If the test cost vector then BMC and BJC are equivalent. If both the misclassification cost matrix and the cost vector are uniform (i.e.,  the uniform operating context $\theta_U$) then BMC, BTC and BJC are equivalent.

Figure \ref{fig:bmc} shows the results for the BMC method for our two datasets and operating contexts ( Up: iris dataset with the operating context $\theta_1$. Down: Pima Indian diabetes dataset with the operating context $\theta_2$). If we compare with Figure \ref{fig:hull}, we see that the hull are almost identical.

\begin{figure}
\centering
\includegraphics[width=0.6\textwidth]{./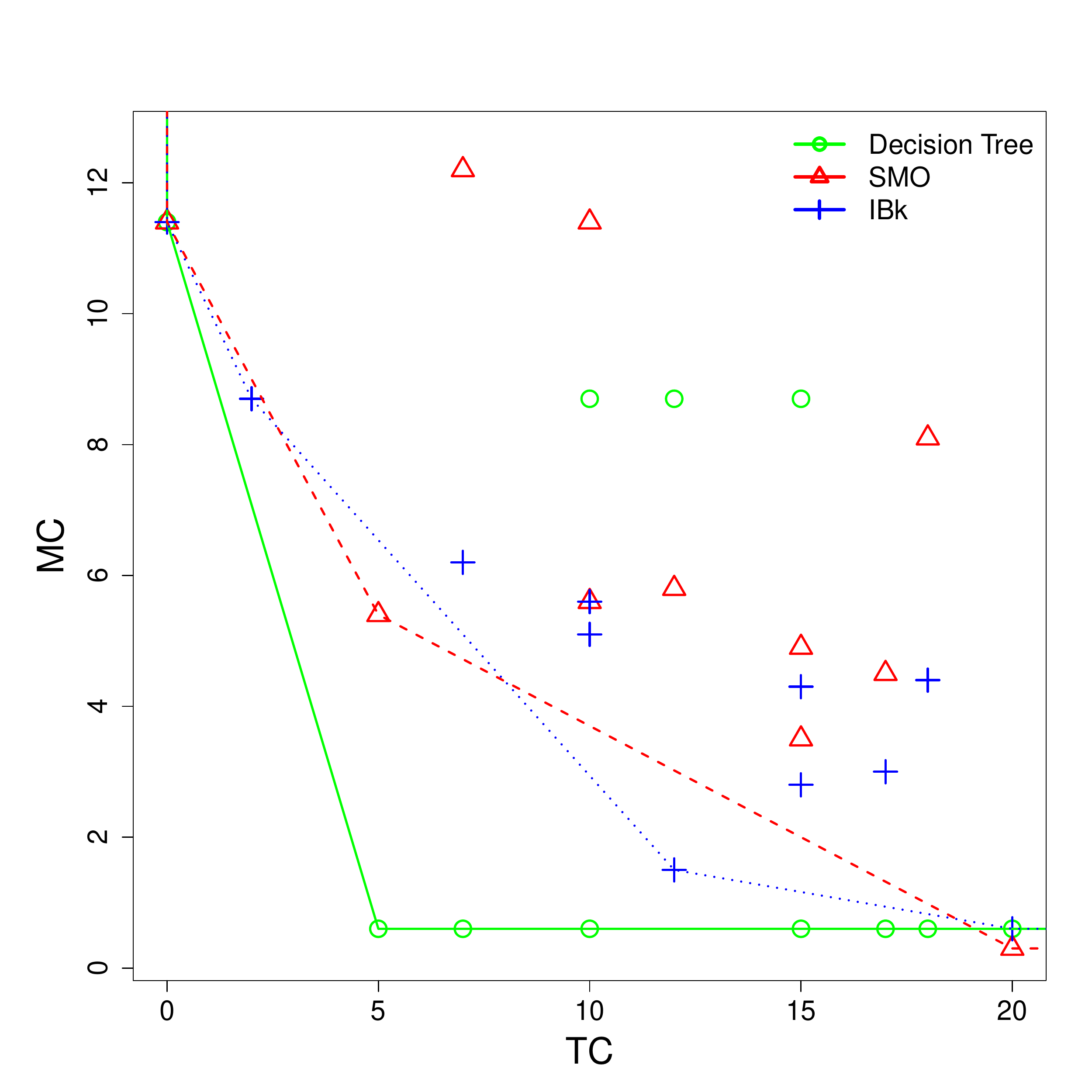} \hfill
\includegraphics[width=0.6\textwidth]{./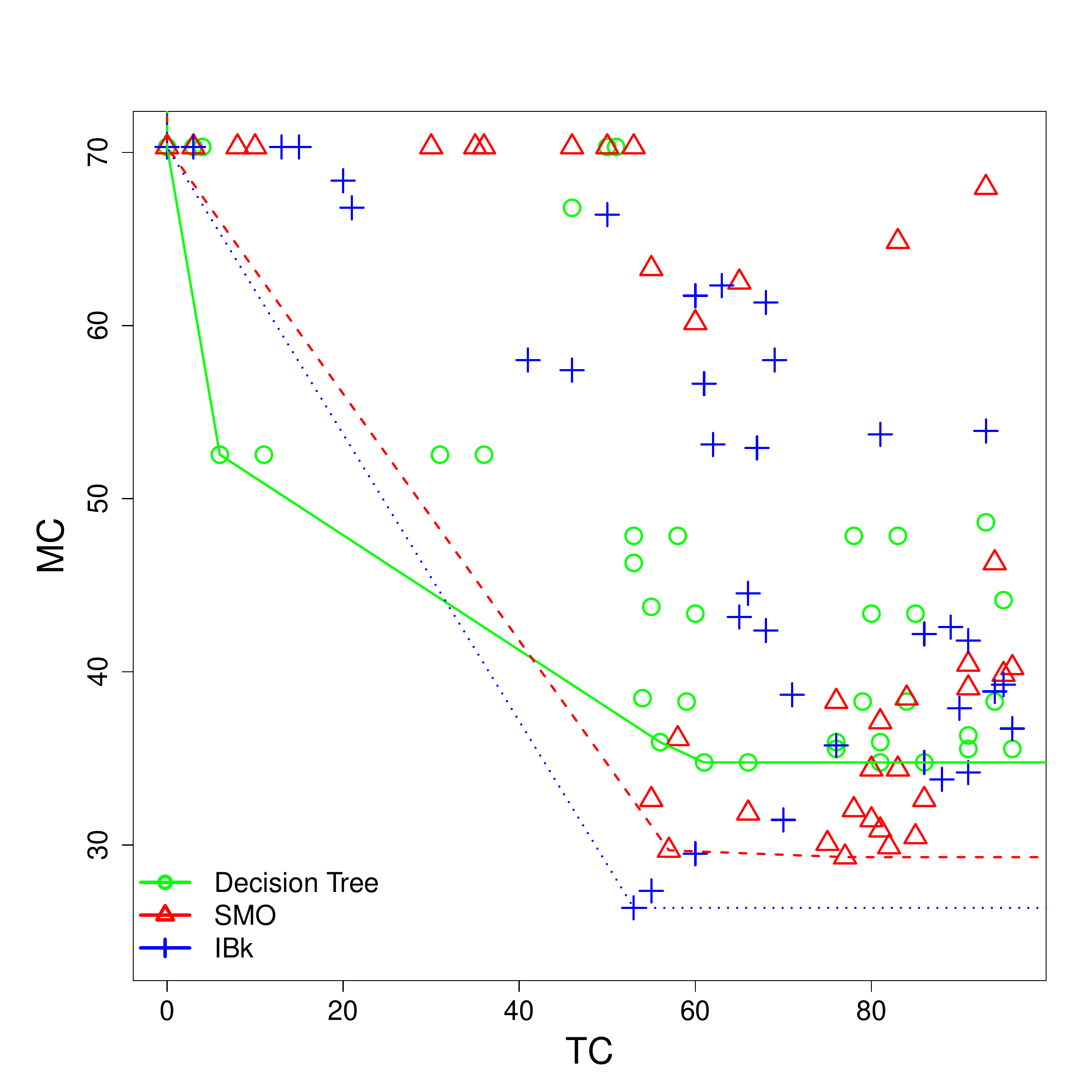} 
\caption{ Hull of the combinations selected by the method BMC. Compare with Convex Hull. Up: iris dataset with the operating context $\theta_1$. Down: Pima Indian diabetes dataset with the operating context $\theta_2$.}
\label{fig:bmc}
\end{figure}

Similarly, we have the results for the BTC, BJC and RND methods on figures \ref{fig:btc}, \ref{fig:bjc} and \ref{fig:rnd}. We note that there are $m(m+1) / 2 + 1$ points for each model (in each graph) instead of $2^m$.
If we compare with Figure \ref{fig:hull}, we see that the hulls are much worse for BTC, and notably worse for BJC and RND. 
However, we see that the results for BMC are good, almost identical to Figure \ref{fig:hull}. 
Does this observation hold in general? The answer of this question (and more) are explored in the experiments.

\begin{figure}
\centering
\includegraphics[width=0.6\textwidth]{./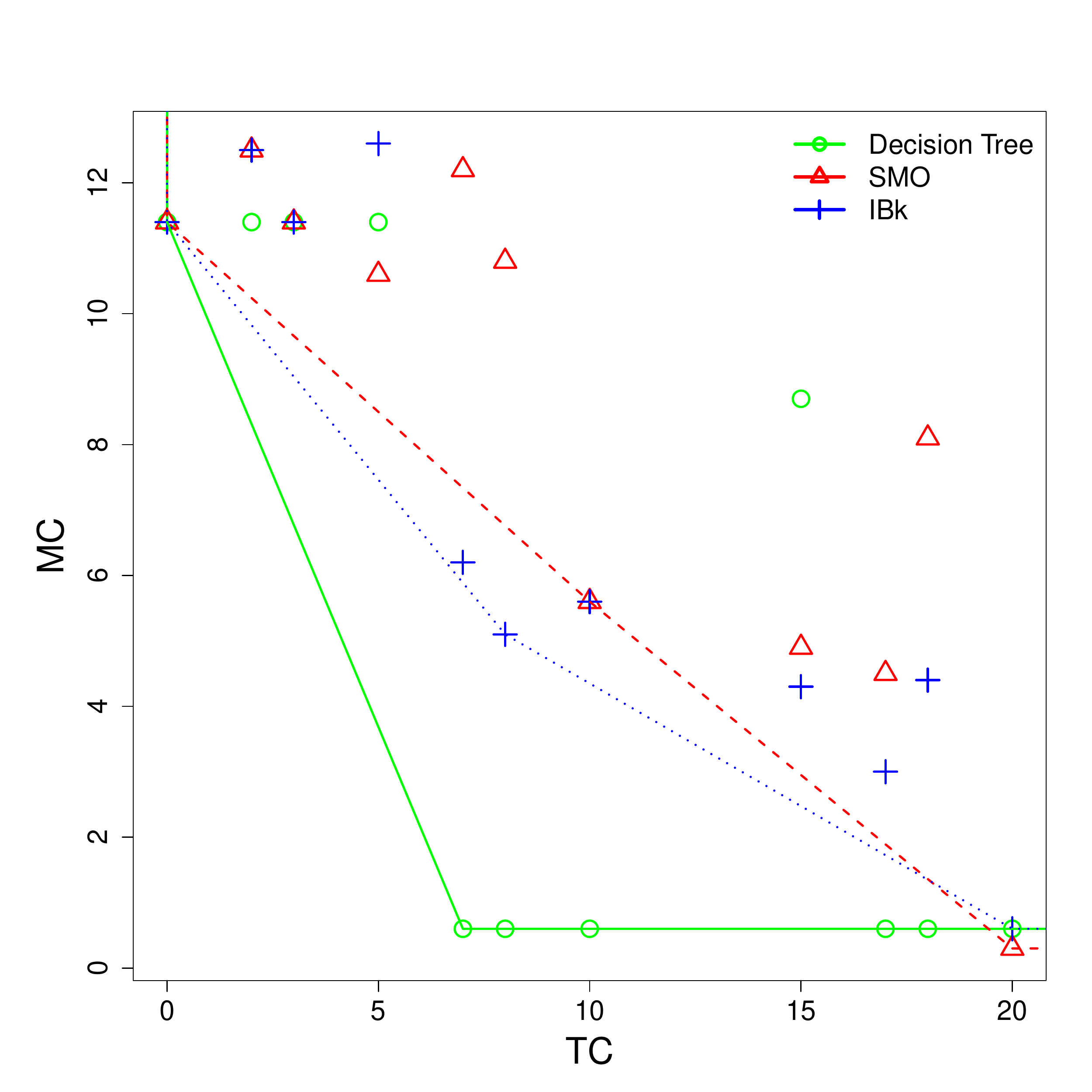} \hfill
\includegraphics[width=0.6\textwidth]{./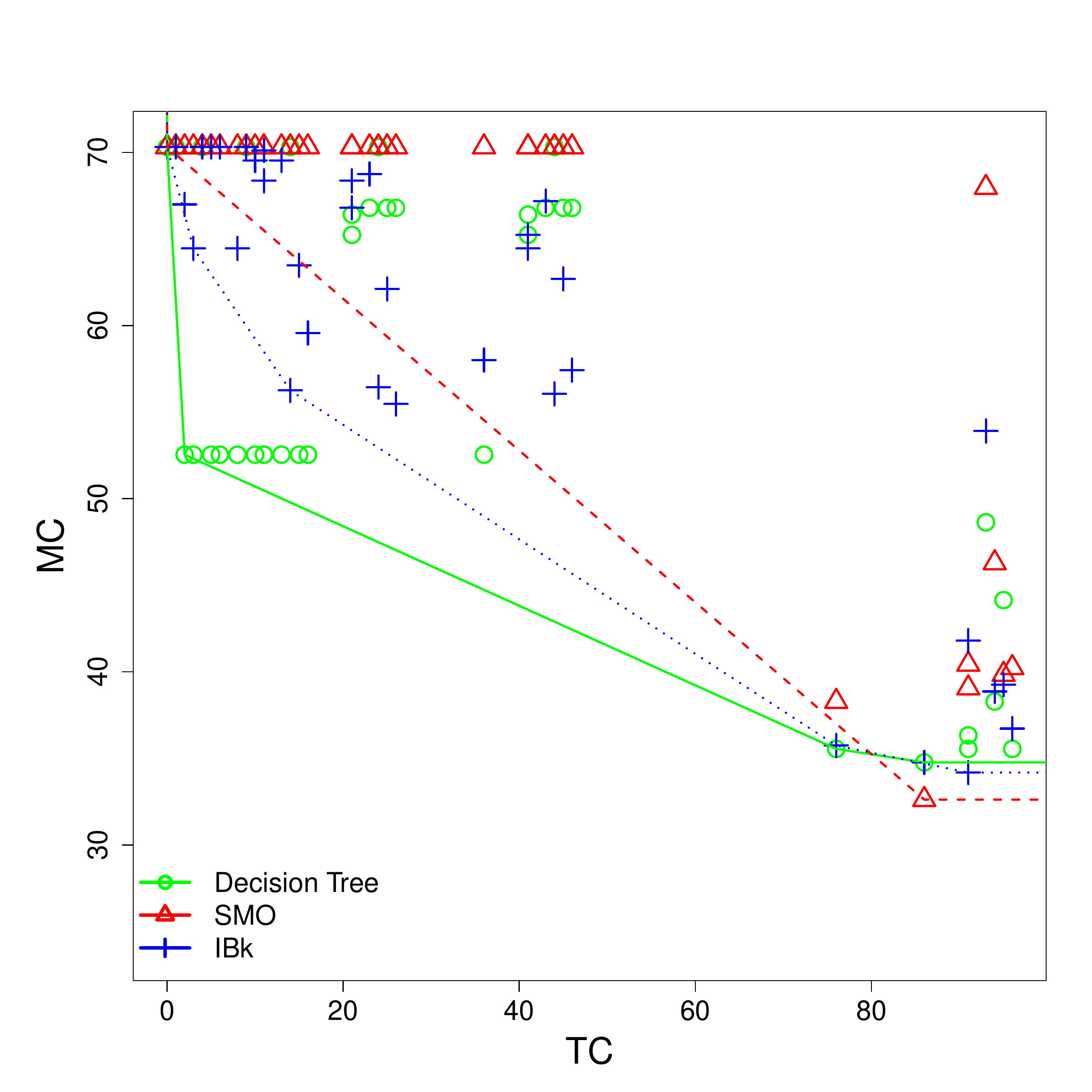} 
\caption{ Hull of the combinations selected by the method BTC. Compare with Convex Hull. Up: iris dataset with the operating context $\theta_1$. Down: Pima Indian diabetes dataset with the operating context $\theta_2$.}
\label{fig:btc}
\end{figure}

\begin{figure}
\centering
\includegraphics[width=0.6\textwidth]{./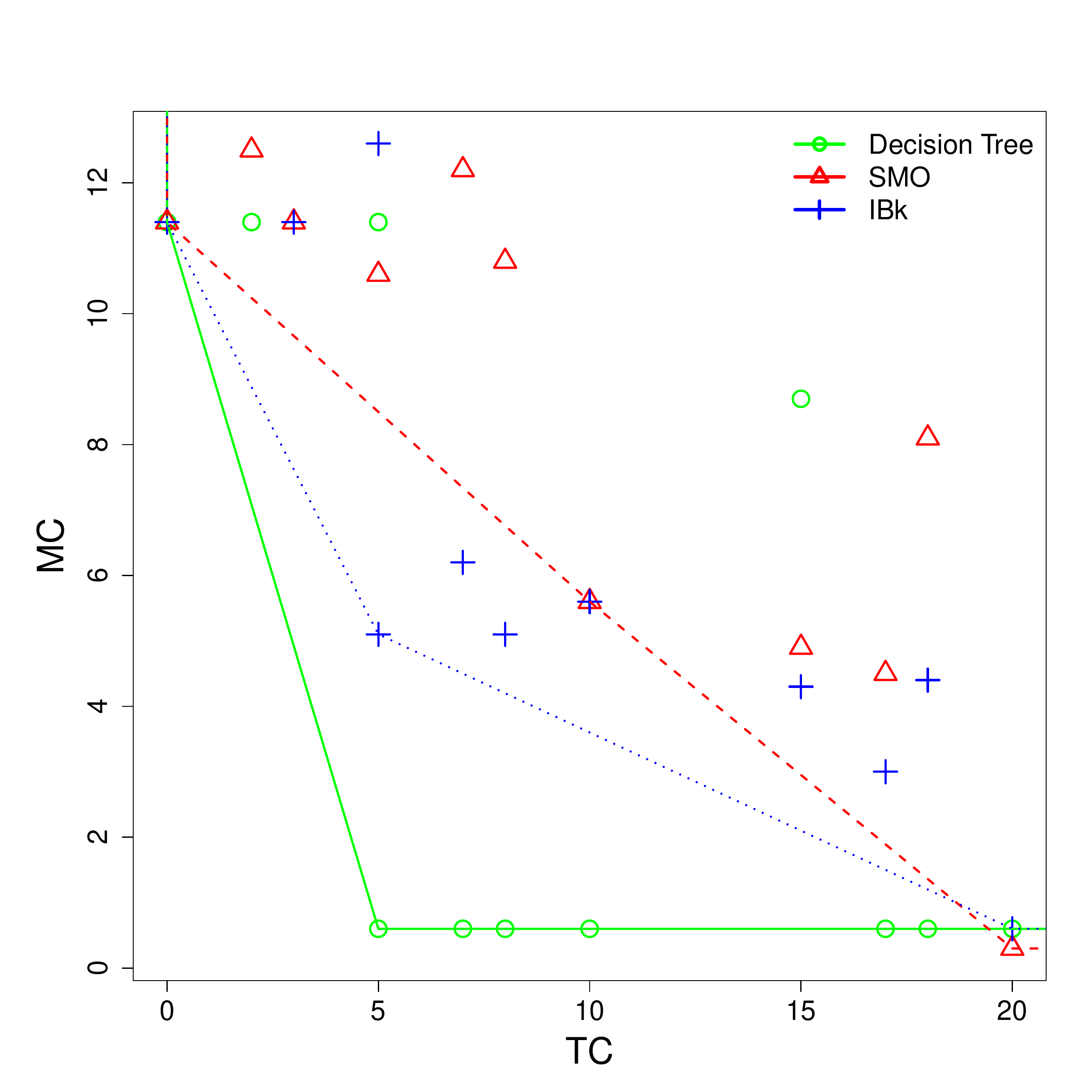} \hfill
\includegraphics[width=0.6\textwidth]{./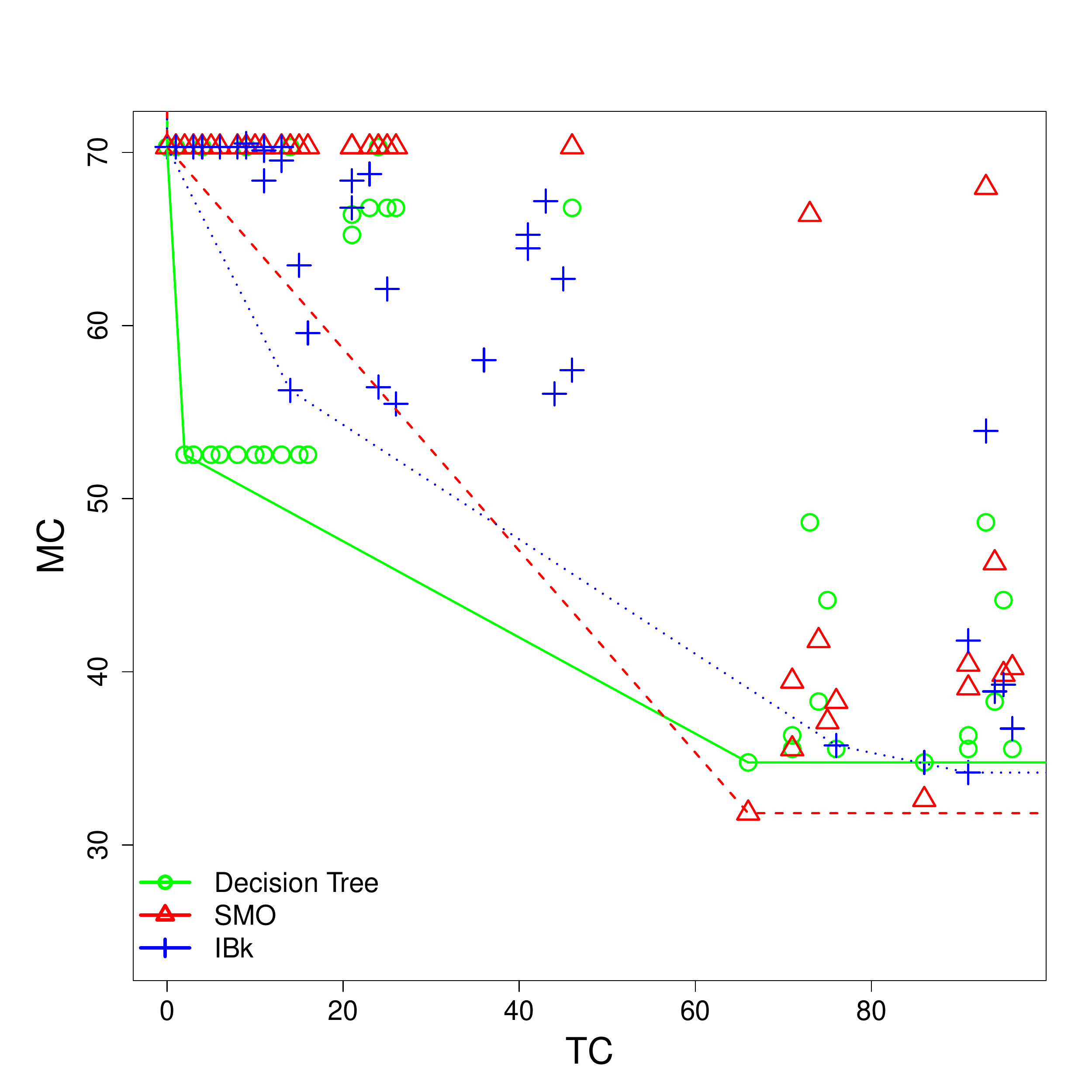} 
\caption { Hull of the combinations selected by the method BJC. Compare with Convex Hull. Up: iris dataset with the operating context $\theta_1$. Down: Pima Indian diabetes dataset with the operating context $\theta_2$.}
\label{fig:bjc}
\end{figure}

\begin{figure}
\centering
\includegraphics[width=0.6\textwidth]{./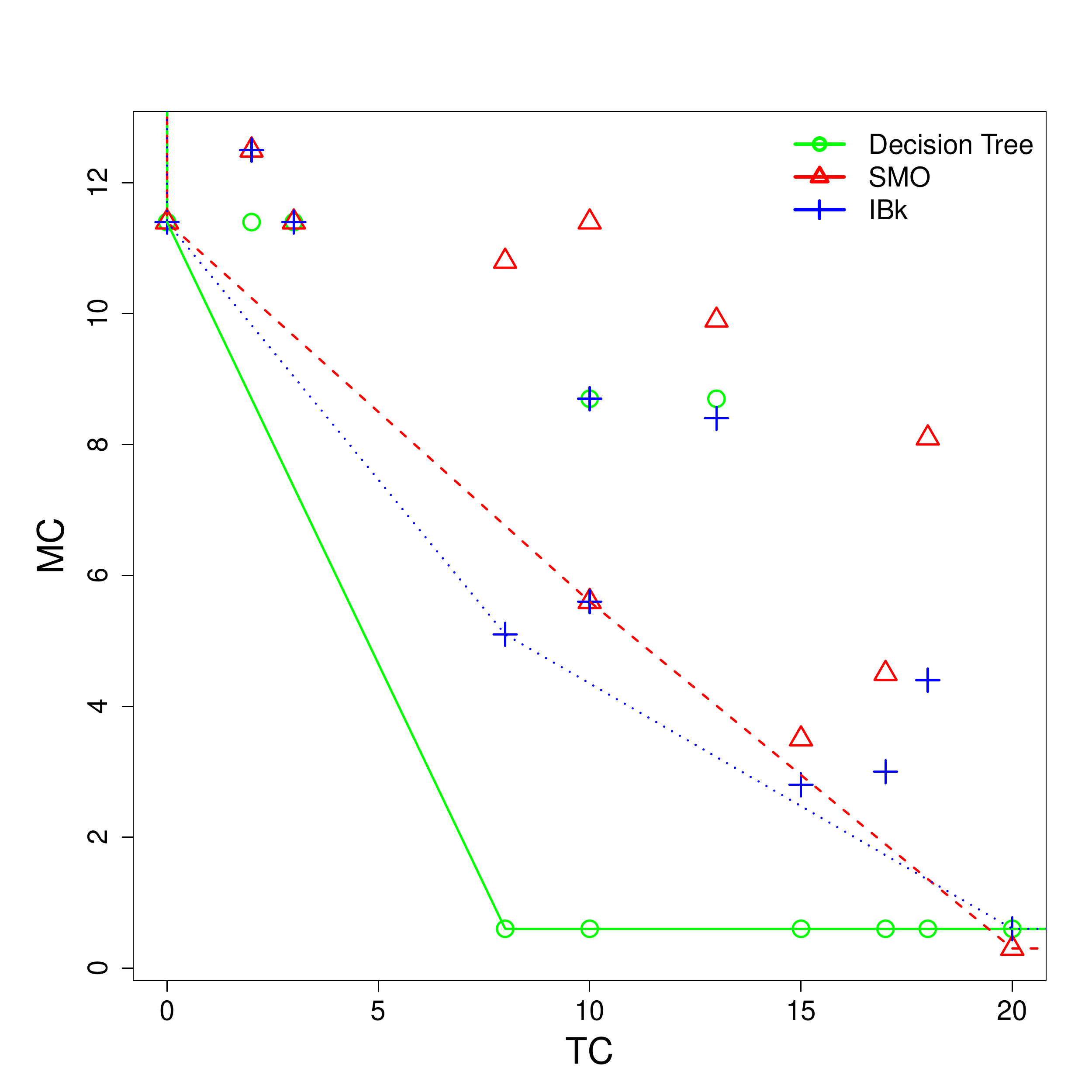} \hfill
\includegraphics[width=0.6\textwidth]{./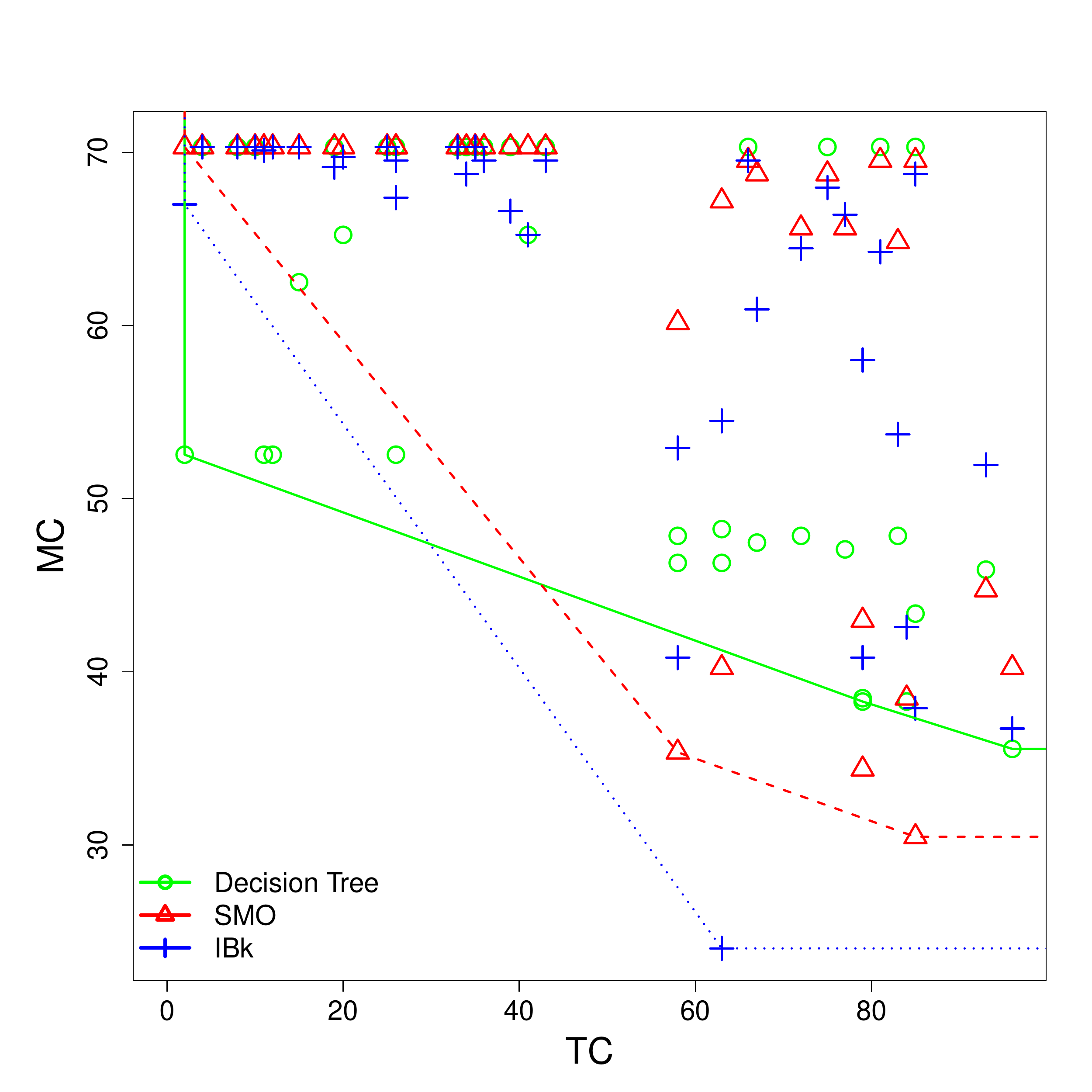} 
\caption{  Hull of the combinations selected by the method RND. Compare with Convex Hull. Up: iris dataset with the operating context $\theta_1$. Down: Pima Indian diabetes dataset with the operating context $\theta_2$.}
\label{fig:rnd}
\end{figure}

\chapter{Experiments}\label{experiments}

In this chapter we present an implementation of our approach described in sections \ref {reducingAttributes} and \ref{hull}, and an analysis of results

We are going to explore whether the \JROC plots are effective, and also whether their quadratic approximation suffers from a degradation.
In order to do that, we consider six datasets of the UCI repository, with number of attributes between 4 and 11, as shown in Table \ref{tab:datasets}.  We could not use larger datasets in this first experiment because of the slowness of the Full method, as the number of elements to explore in the lattice grows exponentially.

\begin{table}
\begin{center}
\begin{tabular}{ccccc}
\#& Name          & $m$& $n$ & $c$ \\ \hline
1 & iris          & 4  & 150 & 3 \\
2 & diabetes      & 8  & 768 & 2 \\
3 & balance-scale & 5  & 625 & 3 \\
4 & breast-w      & 11 & 320 & 2 \\
5 & breast-cancer & 10 & 286 & 2 \\
6 & glass         & 9  & 214 & 5 \\\hline

\end{tabular}
\caption{ Description of the datasets used in the experiments.}\label{tab:datasets}
\end{center}
\end{table}

We consider two different contexts: a uniform context $\theta_u$ and a random context where each value of the misclassification cost matrix and test cost vector are obtained by multiplying the original value of the uniform context by $k$, where $k= e^{\beta \times (k_0-0.5)}$, $k_0$ is obtained as a random number between 0 and 1 using a uniform distribution, and $\beta$ is a factor of how irregular we want the vector and matrix to be. We set $\beta=10$ for the following experiments. Once this function is applied, the test cost vector $T$ and the misclassification cost matrix $M$ are normalised such that $\sum T = 1$ and $\sum M = c^2$.

For each dataset of size $n$, we split it into a work dataset ($2n/3$ of the data) and the remaining data ($n/3$) for test.
With the work dataset, we perform a split of the work dataset into two halves. We train the four models (SMO, kNN, AdaBoost and Bagging) with the first half of the data ($n/3$) and calculate all the points (i.e., $TC$ and $MC$) according to the full method, and the BMC, BTC, BJC and RND methods with the other half.
Next, we choose 5 values of $\alpha \in \{ 0.1, 0.3, 0.5, 0.7, 0.9 \}$ and determine the best configuration of model and attribute subset for each of the five methods. We use these 5 configurations for the test set and calculate the $JC$. 

We repeat the experiment 4 times. This gives us, $4 \times 5=20$ results for each of the 5 methods for each of the 6 datasets.

\section{Uniform context}

First we give the results for the uniform context. Table \ref{tab:results} shows the mean and standard deviation of the results for each dataset and method. We see that Full cannot be improved by the other methods, as it explores all the possibilities. In general, the RND method is worse than the backward methods, except for dataset 1 (the smaller one, iris, where the number of explored configurations is $4 \times 5 + 1 = 11$ in front of a total of 16, which is not a big difference). In fact, for the big datasets, where the difference in explored configuration grows exponentially, we see that the backward methods get close to the Full methods, which gives support to these approximation.

\begin{table}
\centering
\begin{tabular}{rlllll}
  \hline
Dataset & Full & BMC & BTC & BJC & RND \\ \hline
  1 & 0.166 $\pm$ 0.0713 & 0.222 $\pm$ 0.105 & 0.207 $\pm$ 0.0909 & 0.222 $\pm$ 0.105 & 0.181 $\pm$ 0.0779 \\ 
  2 & 0.124 $\pm$ 0.0358 & 0.138 $\pm$ 0.0487 & 0.132 $\pm$ 0.0436 & 0.138 $\pm$ 0.0487 & 0.192 $\pm$ 0.0429 \\ 
  3 & 0.281 $\pm$ 0.156 & 0.286 $\pm$ 0.158 & 0.29 $\pm$ 0.162 & 0.286 $\pm$ 0.158 & 0.344 $\pm$ 0.122 \\ 
  4 & 0.289 $\pm$ 0.138 & 0.295 $\pm$ 0.142 & 0.293 $\pm$ 0.142 & 0.295 $\pm$ 0.142 & 0.358 $\pm$ 0.097 \\ 
  5 & 0.303 $\pm$ 0.123 & 0.328 $\pm$ 0.122 & 0.333 $\pm$ 0.13 & 0.328 $\pm$ 0.122 & 0.374 $\pm$ 0.0943 \\ 
  6 & 0.275 $\pm$ 0.129 & 0.28 $\pm$ 0.132 & 0.278 $\pm$ 0.131 & 0.28 $\pm$ 0.132 & 0.297 $\pm$ 0.126 \\ 
   \hline
\end{tabular}
\caption{JC results (mean and standard deviation) of the 20 results (5 values of alpha with 4 repetitions) for each of the 5 methods (columns: Full, BMC, BTC, BJC, RND) and each of the 6 datasets (rows: 1 to 6), with the uniform context.}\label{tab:results} 
\end{table}

Table \ref{tab:results2} shows the results aggregated for all datasets but showing each value of $\alpha$. This means ($8$ datasets with $10$ repetitions). In this case, we can see that the backward methods are consistently better than the RND method and are reasonable close to the Full method. The influence of $\alpha$ is not particularly clear, the approximation is similar for all of them.

\begin{table}
\centering
\begin{tabular}{rlllll}
  \hline
$\alpha$ & Full & BMC & BTC & BJC & RND \\ \hline
  0.1 & 0.0776 $\pm$ 0.0175 & 0.082 $\pm$ 0.0219 & 0.082 $\pm$ 0.0219 & 0.082 $\pm$ 0.0219 & 0.187 $\pm$ 0.0726 \\ 
  0.3 & 0.204 $\pm$ 0.0366 & 0.233 $\pm$ 0.0476 & 0.22 $\pm$ 0.0482 & 0.233 $\pm$ 0.0476 & 0.259 $\pm$ 0.0696 \\ 
  0.5 & 0.294 $\pm$ 0.0914 & 0.329 $\pm$ 0.0919 & 0.319 $\pm$ 0.0867 & 0.329 $\pm$ 0.0919 & 0.335 $\pm$ 0.085 \\ 
  0.7 & 0.33 $\pm$ 0.121 & 0.345 $\pm$ 0.115 & 0.352 $\pm$ 0.122 & 0.345 $\pm$ 0.115 & 0.359 $\pm$ 0.122 \\ 
  0.9 & 0.292 $\pm$ 0.155 & 0.302 $\pm$ 0.15 & 0.305 $\pm$ 0.154 & 0.302 $\pm$ 0.15 & 0.315 $\pm$ 0.163 \\ 
   \hline
\end{tabular}
\caption{JC results (mean and standard deviation) of the 24 results
(6 datasets with 4 repetitions) 
for each of the 5 methods (columns: Full, BMC, BTC, BJC, RND) and each of the 5 possible values of $\alpha$ (rows 0.1 to 0.9), with the uniform context.}\label{tab:results2}
\end{table}

Finally, we want to see the whole picture and perform a statistical test.
In order to assess the significance of the experimental results we will use a custom procedure, following \cite{japkowicz2011evaluating} and \cite[ch.12]{petersbook}, which in turn is mostly based on \cite{demsar2006statistical}. Since we will not have any baseline method, we will use a Friedman test to tell whether the difference between several methods is significant and then we will apply the Nemenyi post-hoc test. We agree with \cite{garcia2008extension} that the Nemenyi test is a ``very conservative procedure and many of the obvious differences may not be detected", but we prefer to be conservative given our experimental setting and the use of a 0.95 confidence level. In some result tables we will show the means (even though in many cases they are not commensurate) and in some other tables we will show the average ranks (from which the Friedman and Nemenyi tests are calculated). We will also include the critical difference for the Nemenyi test, so we will be able to simply tell whether the difference between two algorithms is significant if the difference between their average ranks is greater than the critical difference.

Table \ref{tab:results3} shows the results of several data, where we are particularly interested in knowing which of the three backward methods is best. As we can see, the three methods behave almost equally. In fact, BMC and BJC are exactly equal, which is a consequence of the use of a uniform context. The 30 rows are given by 6 datasets and 5 possible values of $\alpha$ with the uniform context. The `Avg' row shows the averages of the first 30 rows. Finally, the `AR' row shows the average rank for each method. With these ranks the Friedman test is applied and gives a Friedman statistic of 62.51 which is greather than the critical value of 10.97. Consequently, the null hypothesis is rejected (significance level: 0.05) and we conclude that the methods do not perform equally. In order to see which methods are significantly different from the rest, we look at the critical difference for the Nemenyi post-hoc test, which is 0.2965. This means that the Full method is statistically better than the rest, and that the RND method is statistically worse than the rest, but there is no statistically significant difference between the three methods BMC, BTC and BJC.

\begin{table}
\centering
\begin{tabular}{rrrrrr}
  \hline
 & Full & BMC & BTC & BJC & RND \\ 
  \hline
1 & 0.0953 & 0.1058 & 0.1058 & 0.1058 & 0.1388 \\ 
  2 & 0.2122 & 0.2767 & 0.2642 & 0.2767 & 0.2122 \\ 
  3 & 0.2250 & 0.3362 & 0.3013 & 0.3362 & 0.2562 \\ 
  4 & 0.2047 & 0.2715 & 0.2528 & 0.2715 & 0.2047 \\ 
  5 & 0.0905 & 0.1217 & 0.1092 & 0.1217 & 0.0915 \\ 
  6 & 0.0674 & 0.0674 & 0.0674 & 0.0674 & 0.2220 \\ 
  7 & 0.1532 & 0.2001 & 0.1581 & 0.2001 & 0.2023 \\ 
  8 & 0.1504 & 0.1649 & 0.1660 & 0.1649 & 0.2146 \\ 
  9 & 0.1340 & 0.1340 & 0.1340 & 0.1340 & 0.1834 \\ 
  10 & 0.1169 & 0.1238 & 0.1338 & 0.1238 & 0.1397 \\ 
  11 & 0.0542 & 0.0542 & 0.0542 & 0.0542 & 0.2028 \\ 
  12 & 0.1797 & 0.1797 & 0.1797 & 0.1797 & 0.2380 \\ 
  13 & 0.3125 & 0.3264 & 0.3264 & 0.3264 & 0.3595 \\ 
  14 & 0.4002 & 0.4036 & 0.4068 & 0.4036 & 0.4429 \\ 
  15 & 0.4571 & 0.4649 & 0.4807 & 0.4649 & 0.4774 \\ 
  16 & 0.0738 & 0.0738 & 0.0738 & 0.0738 & 0.2090 \\ 
  17 & 0.2074 & 0.2074 & 0.2074 & 0.2074 & 0.3211 \\ 
  18 & 0.3262 & 0.3457 & 0.3311 & 0.3457 & 0.3779 \\ 
  19 & 0.4186 & 0.4277 & 0.4295 & 0.4277 & 0.4416 \\ 
  20 & 0.4168 & 0.4213 & 0.4217 & 0.4213 & 0.4393 \\ 
  21 & 0.0975 & 0.1117 & 0.1117 & 0.1117 & 0.2225 \\ 
  22 & 0.2419 & 0.3001 & 0.2790 & 0.3001 & 0.3511 \\ 
  23 & 0.3736 & 0.4035 & 0.3958 & 0.4035 & 0.4188 \\ 
  24 & 0.3904 & 0.3988 & 0.4521 & 0.3988 & 0.4321 \\ 
  25 & 0.4137 & 0.4254 & 0.4273 & 0.4254 & 0.4454 \\ 
  26 & 0.0775 & 0.0793 & 0.0793 & 0.0793 & 0.1282 \\ 
  27 & 0.2309 & 0.2309 & 0.2309 & 0.2309 & 0.2309 \\ 
  28 & 0.3792 & 0.3980 & 0.3916 & 0.3980 & 0.3829 \\ 
  29 & 0.4346 & 0.4346 & 0.4346 & 0.4346 & 0.4475 \\ 
  30 & 0.2550 & 0.2550 & 0.2550 & 0.2550 & 0.2974 \\  \hline
  Avg & 0.2397 & 0.2581 & 0.2554 & 0.2581 & 0.2911 \\  \hline
  AR & 1.5000 & 3.0500 & 3.0667 & 3.0500 & 4.3333 \\ 
   \hline
\end{tabular}
\caption{This figure shows the JC means for the 4 repetitions for each of the 5 methods (Full, BMC, BTC, BJC, RND).}\label{tab:results3}
\end{table}

\section{Variable context}

In order to see what happens in a more realistic situation, let us see the results for the non-uniform context. Table \ref{tab:results-UR} shows the mean and standard deviation of the results for each dataset and method. Here we see that not all backward methods are equivalently, but interestingly we see that BMC is now consistently better than RND for all datasets.

\begin{table}
\centering
\begin{tabular}{rlllll}
  \hline
Dataset & Full & BMC & BTC & BJC & RND \\ \hline
  1 & 0.0079 $\pm$ 0.011 & 0.011 $\pm$ 0.013 & 0.015 $\pm$ 0.025 & 0.018 $\pm$ 0.028 & 0.017 $\pm$ 0.04 \\ 
  2 & 0.016 $\pm$ 0.016 & 0.023 $\pm$ 0.024 & 0.016 $\pm$ 0.017 & 0.025 $\pm$ 0.033 & 0.032 $\pm$ 0.033 \\ 
  3 & 0.037 $\pm$ 0.035 & 0.039 $\pm$ 0.035 & 0.037 $\pm$ 0.035 & 0.039 $\pm$ 0.036 & 0.045 $\pm$ 0.044 \\ 
  4 & 0.045 $\pm$ 0.049 & 0.052 $\pm$ 0.057 & 0.05 $\pm$ 0.057 & 0.052 $\pm$ 0.057 & 0.058 $\pm$ 0.057 \\ 
  5 & 0.0026 $\pm$ 0.0036 & 0.004 $\pm$ 0.0056 & 0.0033 $\pm$ 0.0037 & 0.0034 $\pm$ 0.004 & 0.0048 $\pm$ 0.0046 \\ 
  6 & 0.012 $\pm$ 0.017 & 0.02 $\pm$ 0.028 & 0.024 $\pm$ 0.041 & 0.024 $\pm$ 0.041 & 0.023 $\pm$ 0.03 \\ 
   \hline
\end{tabular}
\caption{JC results (mean and standard deviation) of the 20 results (5 values of alpha with 4 repetitions) for each of the 5 methods (columns: Full, BMC, BTC, BJC, RND) and each of the 6 datasets (rows: 1 to 6), with the variable context.}\label{tab:results-UR} 
\end{table}

Again, Table \ref{tab:results2-UR} shows the results aggregated for all datasets but showing each value of $\alpha$. This means ($8$ datasets with $10$ repetitions). Now we see that the results are not especially different according to $\alpha$.

\begin{table}
\centering
\begin{tabular}{rlllll}
  \hline
$\alpha$ & Full & BMC & BTC & BJC & RND \\ \hline
  0.1 & 0.0055 $\pm$ 0.0064 & 0.0081 $\pm$ 0.01 & 0.006 $\pm$ 0.0067 & 0.0061 $\pm$ 0.0067 & 0.012 $\pm$ 0.021 \\ 
  0.3 & 0.014 $\pm$ 0.021 & 0.018 $\pm$ 0.028 & 0.015 $\pm$ 0.023 & 0.022 $\pm$ 0.034 & 0.024 $\pm$ 0.032 \\ 
  0.5 & 0.025 $\pm$ 0.034 & 0.037 $\pm$ 0.047 & 0.04 $\pm$ 0.054 & 0.042 $\pm$ 0.055 & 0.046 $\pm$ 0.06 \\ 
  0.7 & 0.022 $\pm$ 0.027 & 0.025 $\pm$ 0.028 & 0.024 $\pm$ 0.028 & 0.025 $\pm$ 0.028 & 0.031 $\pm$ 0.032 \\ 
  0.9 & 0.033 $\pm$ 0.044 & 0.036 $\pm$ 0.044 & 0.035 $\pm$ 0.044 & 0.039 $\pm$ 0.045 & 0.038 $\pm$ 0.045 \\ 
   \hline
\end{tabular}
\caption{JC results (mean and standard deviation) of the 24 results
(6 datasets with 4 repetitions) 
for each of the 5 methods (columns: Full, BMC, BTC, BJC, RND) and each of the 5 possible values of $\alpha$ (rows 0.1 to 0.9), with the variable context.}\label{tab:results2-UR}
\end{table}

Finally, if we look at the whole picture and using a statistical test, we see in Table \ref{tab:results3-UR} that the backward methods are better than the RND method, but now we find difference between them. In fact, BTC is significantly better than BMC and BJC. The 30 rows are given by 6 datasets and 5 possible values of $\alpha$ with the variable context. The `Avg' row shows the averages of the first 30 rows. Finally, the `AR' row shows the average rank for each method. With these ranks the Friedman test is applied and gives a Friedman statistic of 67.27 which is greater than the critical value of 10.97. Consequently, the null hypothesis is rejected (significance level: 0.05) and we conclude that the methods do not perform equally. In order to see which methods are significantly different from the rest, we look at the critical difference for the Nemenyi post-hoc test, which is 0.2965. This means that the Full method is statistically better than the rest, and that the RND method is statistically worse than the rest. In this case, we see that the BTC method is significantly better than BMC and BJC.

\begin{table}
\centering
\begin{tabular}{rrrrrr}
  \hline
 & Full & BMC & BTC & BJC & RND \\ 
  \hline
1 & 0.0023 & 0.0025 & 0.0025 & 0.0025 & 0.0025 \\ 
  2 & 0.0071 & 0.0071 & 0.0073 & 0.0073 & 0.0100 \\ 
  3 & 0.0160 & 0.0258 & 0.0457 & 0.0457 & 0.0507 \\ 
  4 & 0.0085 & 0.0114 & 0.0117 & 0.0117 & 0.0087 \\ 
  5 & 0.0054 & 0.0057 & 0.0057 & 0.0210 & 0.0124 \\ 
  6 & 0.0078 & 0.0205 & 0.0078 & 0.0082 & 0.0338 \\ 
  7 & 0.0138 & 0.0306 & 0.0138 & 0.0538 & 0.0482 \\ 
  8 & 0.0141 & 0.0191 & 0.0145 & 0.0157 & 0.0267 \\ 
  9 & 0.0171 & 0.0185 & 0.0171 & 0.0180 & 0.0258 \\ 
  10 & 0.0248 & 0.0279 & 0.0254 & 0.0277 & 0.0267 \\ 
  11 & 0.0116 & 0.0129 & 0.0129 & 0.0129 & 0.0171 \\ 
  12 & 0.0358 & 0.0361 & 0.0358 & 0.0358 & 0.0398 \\ 
  13 & 0.0594 & 0.0611 & 0.0594 & 0.0606 & 0.0772 \\ 
  14 & 0.0563 & 0.0585 & 0.0580 & 0.0585 & 0.0665 \\ 
  15 & 0.0206 & 0.0240 & 0.0207 & 0.0254 & 0.0241 \\ 
  16 & 0.0098 & 0.0098 & 0.0098 & 0.0098 & 0.0156 \\ 
  17 & 0.0243 & 0.0299 & 0.0290 & 0.0274 & 0.0343 \\ 
  18 & 0.0481 & 0.0728 & 0.0645 & 0.0728 & 0.0804 \\ 
  19 & 0.0253 & 0.0253 & 0.0253 & 0.0253 & 0.0310 \\ 
  20 & 0.1191 & 0.1220 & 0.1219 & 0.1223 & 0.1264 \\ 
  21 & 0.0003 & 0.0006 & 0.0003 & 0.0005 & 0.0010 \\ 
  22 & 0.0012 & 0.0026 & 0.0025 & 0.0026 & 0.0065 \\ 
  23 & 0.0013 & 0.0028 & 0.0037 & 0.0023 & 0.0039 \\ 
  24 & 0.0037 & 0.0054 & 0.0039 & 0.0052 & 0.0047 \\ 
  25 & 0.0062 & 0.0084 & 0.0063 & 0.0065 & 0.0078 \\ 
  26 & 0.0014 & 0.0026 & 0.0026 & 0.0026 & 0.0014 \\ 
  27 & 0.0025 & 0.0035 & 0.0035 & 0.0035 & 0.0025 \\ 
  28 & 0.0123 & 0.0387 & 0.0548 & 0.0548 & 0.0352 \\ 
  29 & 0.0225 & 0.0278 & 0.0288 & 0.0294 & 0.0474 \\ 
  30 & 0.0216 & 0.0270 & 0.0283 & 0.0283 & 0.0308 \\ \hline
  Avg & 0.0200 & 0.0247 & 0.0241 & 0.0266 & 0.0300 \\  \hline
  AR & 1.2333 & 3.4000 & 2.6500 & 3.5167 & 4.2000 \\ 
   \hline
\end{tabular}
\caption{This figure shows the JC means for the 4 repetitions for each of the 5 methods (Full, BMC, BTC, BJC, RND).}\label{tab:results3-UR}
\end{table}

Although some more definitive conclusions of which method is best in general would require more datasets and repetitions (although the results are significant enough here), these experiments show the potential of the backward methods.

\chapter{Conclusions and future works}\label{conclusion}

In this chapter, we summarise the accomplishments and discuss some interesting future work.

\section{Conclusion}
\vspace{0.2cm}
Missing values and cost minimisation are important issues in machine learning. In this work we have developed  a new approach that tackles the problem of cost minimisation (both misclassification and test cost) reframing an already built model using missing values on purpose. A new manner to address the cost reduction in machine learning problems. Our approach is totally model independent (can be used and works perfectly for any predictive model). Overall, our approach fulfills all the objectives fixed in the introduction:

\begin{enumerate}
\item An evaluation (graphical) of how a model improves, and most especially, degrades if we start removing attributes, finding the concept of dominance here. figure \ref{fig:accuracy} shows precisely this graphical evaluation, depicting the variation of accuracy (of two datasets trained using 3 differents models) in sawteeth form when we start removing attributes.
\item An analysis of those context changes related to attribute (feature) costs and the introduction of solutions to create more versatile models and reframing transformations. In chapter \ref{reducingAttributes}, section \ref{approach}, we define the notion of context and analyse how the operating context could change. We have also defined an approach for a reframing transformation, finding a way to build a versatile model which minimises the costs (Joint Cost), by introducing missing values on purpose. Given the exponential cost of this procedure, we introduce four new quadratic approximations in chapter \ref{hull}.
\item The evaluation of the previous approaches with several datasets, using common repositories or the reframe domains. This point is achieved with the experiments in \textbf{chapter \ref{experiments}}, where we evaluate our approach on 6 different datasets and analyse the results obtained.

\end{enumerate} 

\vspace{0.3cm}
\section{Discussion}
\vspace{0.2cm}

In the introduction we argued that we were looking for a flexible approach that could be used in a variety of circumstances. In fact, we were motivated by  the following considerations:

\begin{enumerate}
\item The method must work for any kind of predictive model, either human-made or trained from data using any off-the-shelf predictive modelling technique. \label{enu:any}
\item Each example may have a different subset of missing values. \label{enu:instances}
\item Retraining the model for each example (using a subset of the examples with similar feature subsets) is not an option (because of \ref{enu:instances} above or other reasons).
\item Both misclassification cost (MC) and test cost (TC) must be considered.
\end{enumerate}

\noindent We have presented some graphical tools and an optimisation method that meets these requirements. In fact, this has to be compared to the usual approach which is specific on decision trees, with several approaches according to \cite{zhang2005missing,lomax2013survey}: 
(a) KV, a tree is rebuilt when missing values are found, (b) Null strategy: replace by an extra label (model is not rebuilt), (c) Internal node: creates nodes for examples with missing values (model is not rebuilt), and (d) C4.5 strategy: probabilistic approach (model is not rebuilt). Option (a) is infeasible if the situation \ref{enu:instances} holds. 

All the above options are specific to decision trees, so they are not able to take advantage of many other off-the-shelf techniques of our preferred data mining suite or machine learning library. This is an important limitation as many of the most powerful machine learning techniques used today, such as ensemble methods (using or not decision trees as base classifiers), support vector machines, etc., are much more difficult to adapt for minimising test costs.

The take-away message of this work is that we can use any machine learning technique, train a model on a dataset with the available attributes and possibly containing missing values, and {\em reframe} it for a different deployment context where we have fewer available attributes, a different distribution of missing values, a different misclassification matrix and test cost vector. While the Full approach is intractable in general, we have introduced some approximations that are just quadratic, which are feasible for hundreds of attributes, which is already a high number of attributes if we are considering test costs. Also, during all the process we can explore the performance of several models using \JROC curves. In fact, these curves are not specific for the methods we have introduced here; they could be used for the traditional methods used for decision trees or for the analysis of any cost context considering both $MC$ and $TC$.
\vspace{0.3cm}

\section{Future work}

This work opens many new avenues of future work. For instance, in section \ref{reducingAttributes2} we discuss that an alternative to the use of missing values is the use of ranges (see bullet \ref{item-range}). The approach presented here could also be compared or explored in combination to the mimetic technique to get models that use fewer attributes \cite{ensemble2002,mimetic2003,Blanco-VegaHR04,icmlaBlanco-VegaHR05,blancoestimating}. Another interesting idea would be the problem of quantification with test costs, which could be applied to both classification and regression \cite{bella2010quantification,bella2013aggregative}. 
We have also been suggested\footnote{Peter Flach, personal communication} to use decision stump ensembles, where the elements in 
the ensemble could be pruned a posteriori when the test cost is known.

More comparison with the area of feature selection could lead to a better understanding of the possibilities of reframing and better methods. For instance, the use of the attribute correlation can be used to an approximate notion of dominance (e.g., if two attributes have high correlation, the cost is expected to be related to the lowest test cost for any of them). As for the relation to other problems, we could also consider that the output domain may be null, as in abstaining classifiers \cite{ferri2004cautious,Pietraszek2005} and the notion of delegation \cite{FerriFH04} could be applied to this case. In fact, a missing value on purpose can be seen as the parallel of a reject option or abstention for the output value.

The notion of \JROC curve could be further explored and extended. For instance, we could figure out other ways of drawing these curves, by using attribute correlation or some other order on the attributes. The issue of representing operating conditions when the the matrix and vector are not fixed could lead to more dimensions, or the inclusion of the cost matrix. At least in the case of binary classifiers we could have 3D surfaces, using, e.g., $TPR$, $FPR$ and $TC$. 
As for any curve representing cost for each operating condition, we wonder whether the area over the \JROC curve means something, as in ROC analysis \cite{Bradley1997,ICML11CoherentAUC}. Also we could ask the question of whether we can draw cost plots as in \cite{drummond-and-Holte2006,ICML11Brier,ROCandCost}.

Finally, there are more more ambitious ideas. We could investigate which attributes to use for each example. We could use reliability measures (especially in probabilistic classifiers) to make better decisions on whether to remove an attribute or not. We could analyse what to do when new attributes appear, using, e.g., the correlation to other attributes to derive the old attributes, or thinking about more general ways of representing the feature space. Finally, we think that there is no reason why most of the ideas introduced here could not work equally well for regression, combining the test cost with any regression loss.

\section{Dissemination}

This work was presented in Brussels, in the REFRAME meeting (the complete information can be found here: \url{http://www.reframe-d2k.org/index.php/Meetings_%26_Workshops} ). 
Part of this report have been published as the ARXIV paper $1305.7111$: ``Test cost and misclassification cost trade-off using reframing'' \cite{periale2013test}.

\newpage
\cleardoublepage

\appendix 
\addappheadtotoc 
\appendixpage

\chapter{Details of the results}\label{Results}

In this appendix, the results of the experiments presented in the section \ref{experiments} are detailed.
In fact the experiments presented in figure \ref{fig:allcostsU} and figure \ref{fig:allcostsU} have been also performed using other machine learning methods: IBk, Bagging and Boosting (see section \ref{ada}). The different results obtained are shown in the following plots.

\begin{figure}
\centering
\includegraphics[width=0.48\textwidth]{./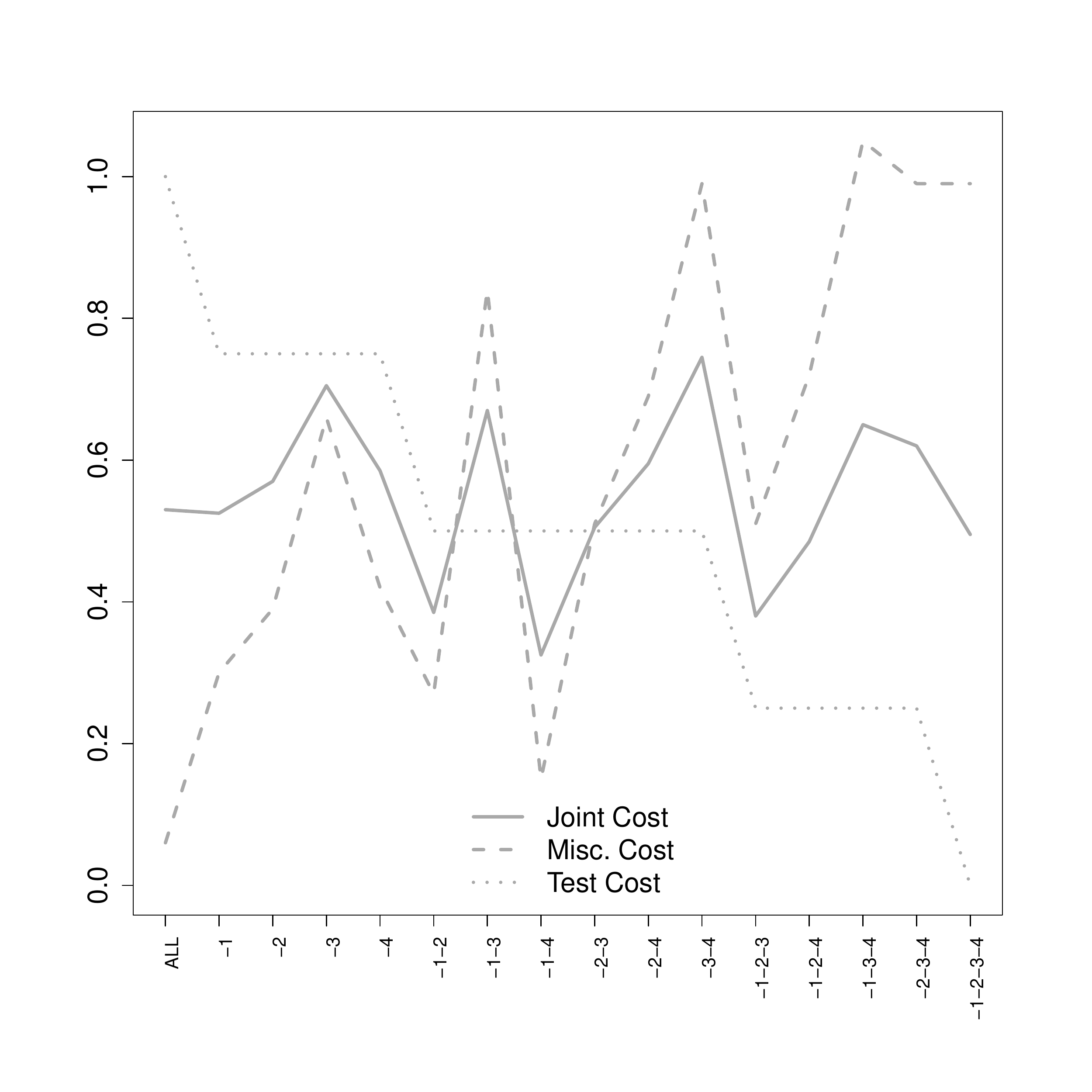} \hfill
\includegraphics[width=0.48\textwidth]{./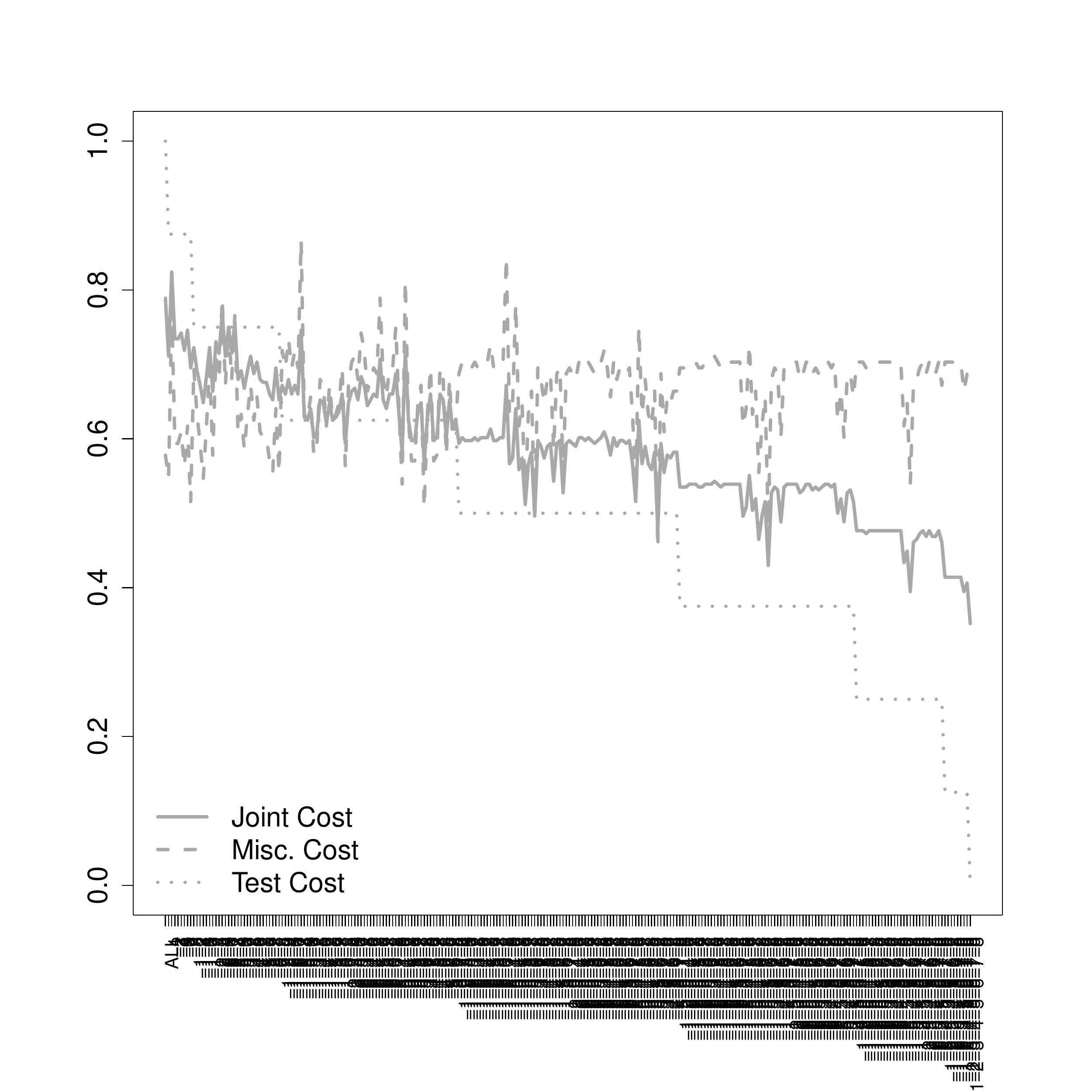} 
\caption {Evolution of $MC$, $TC$ and $JC$ according to attribute selection for a IBk (KNN) model using uniform context. Left: Iris dataset, Right: Diabetes dataset}
\label{fig:allcostsuibk}
\end{figure}

\begin{figure}
\centering
\includegraphics[width=0.48\textwidth]{./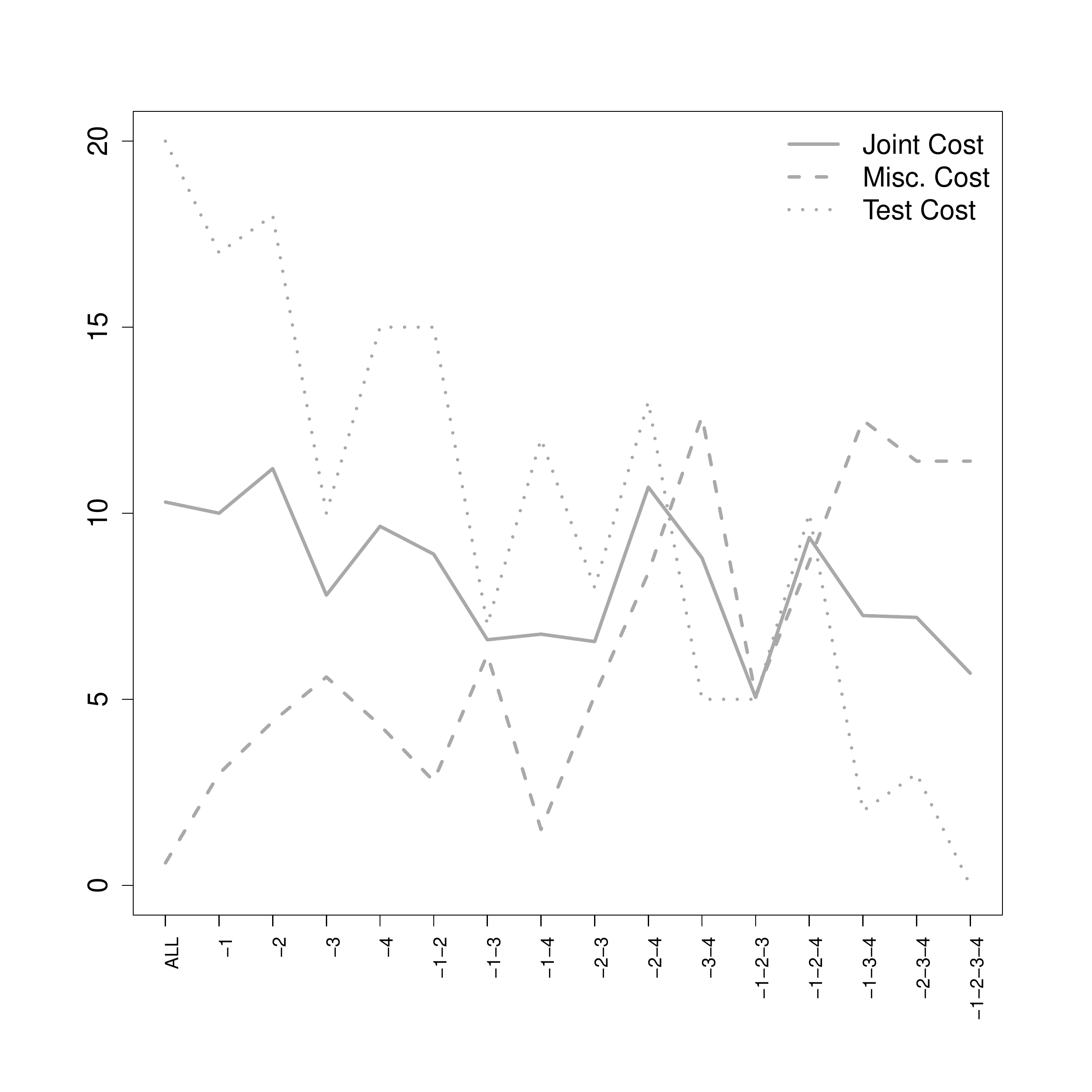} \hfill
\includegraphics[width=0.48\textwidth]{./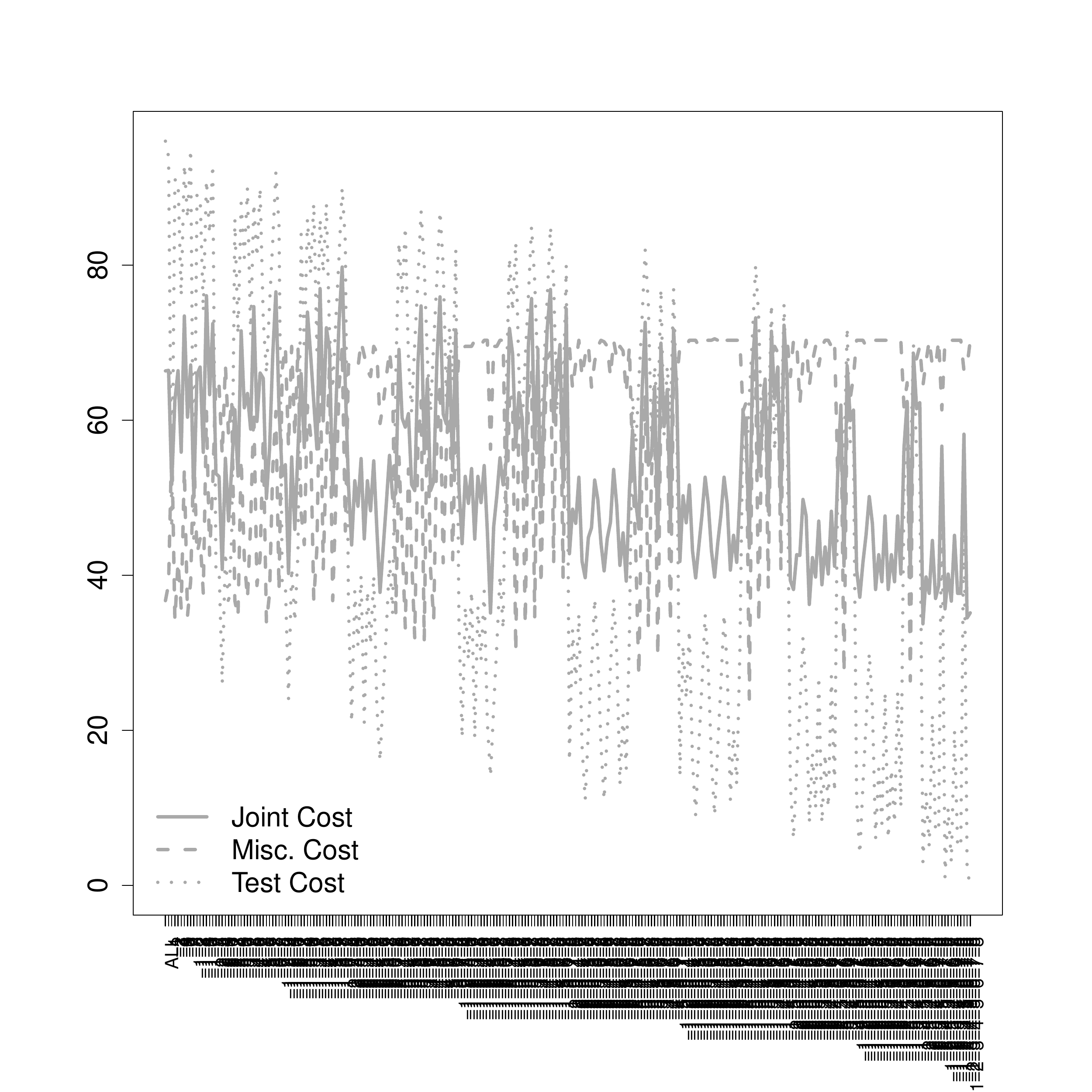} 
\caption {Evolution of $MC$, $TC$ and $JC$ according to attribute selection for a IBk (KNN) model using non-uniform context. Left: Iris dataset, Right: Diabetes dataset}
\label{fig:allcosts0102ibk}
\end{figure}

\begin{figure}
\centering
\includegraphics[width=0.48\textwidth]{./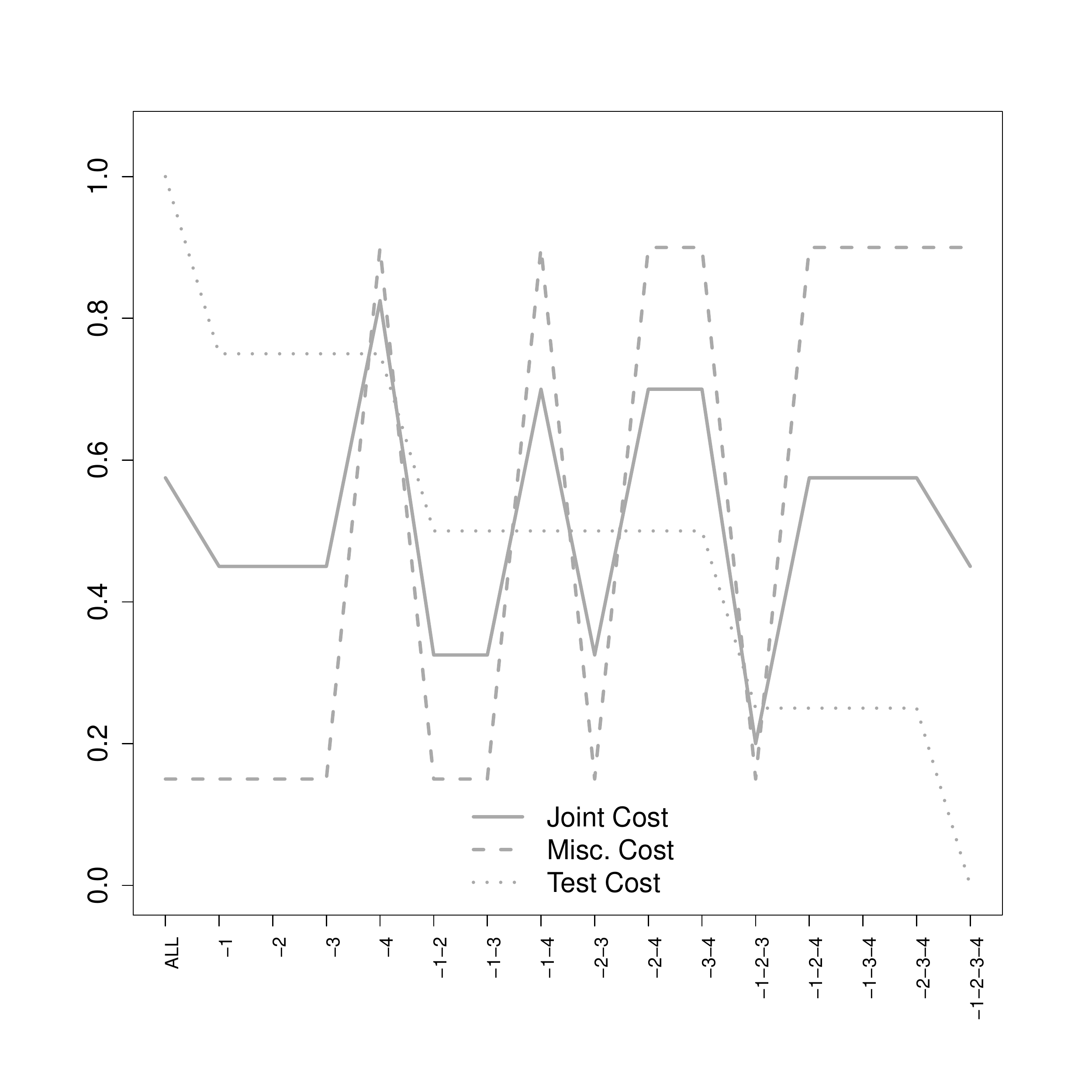} \hfill
\includegraphics[width=0.48\textwidth]{./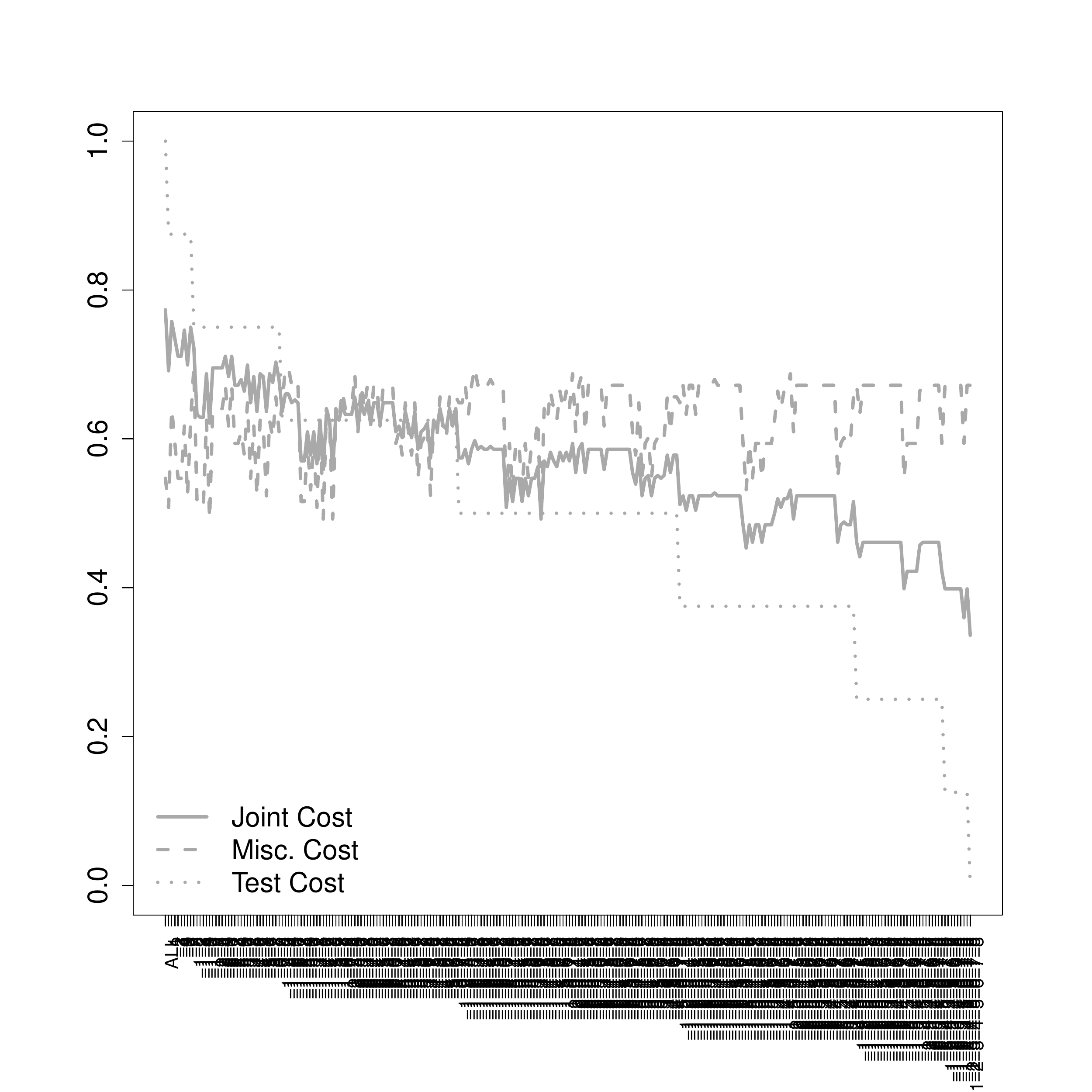} 
\caption {Evolution of $MC$, $TC$ and $JC$ according to attribute selection for a Adaboost model using uniform context. Left: Iris dataset, Right: Diabetes dataset}
\label{fig:allcostsuada}
\end{figure}

\begin{figure}
\centering
\includegraphics[width=0.48\textwidth]{./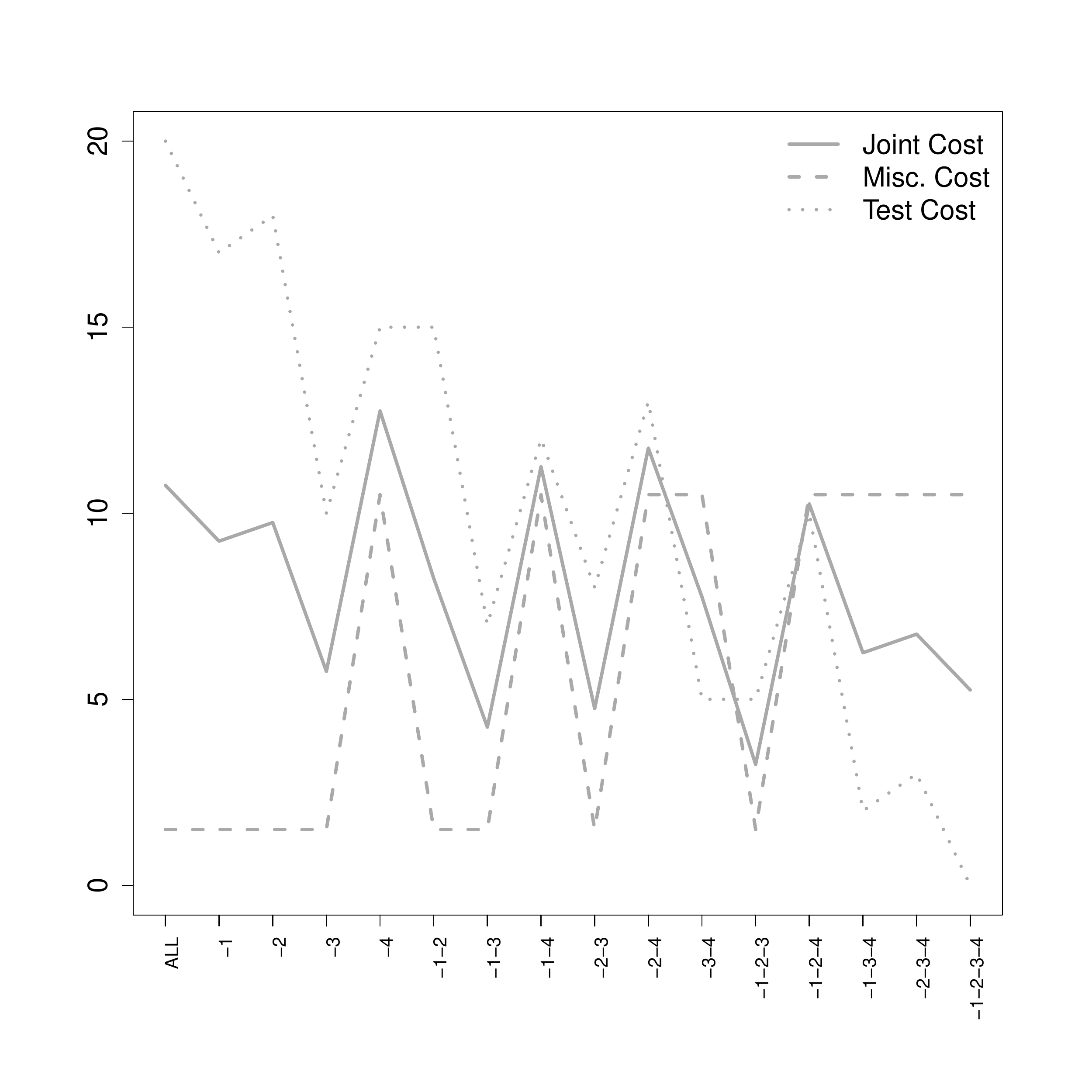} \hfill
\includegraphics[width=0.48\textwidth]{./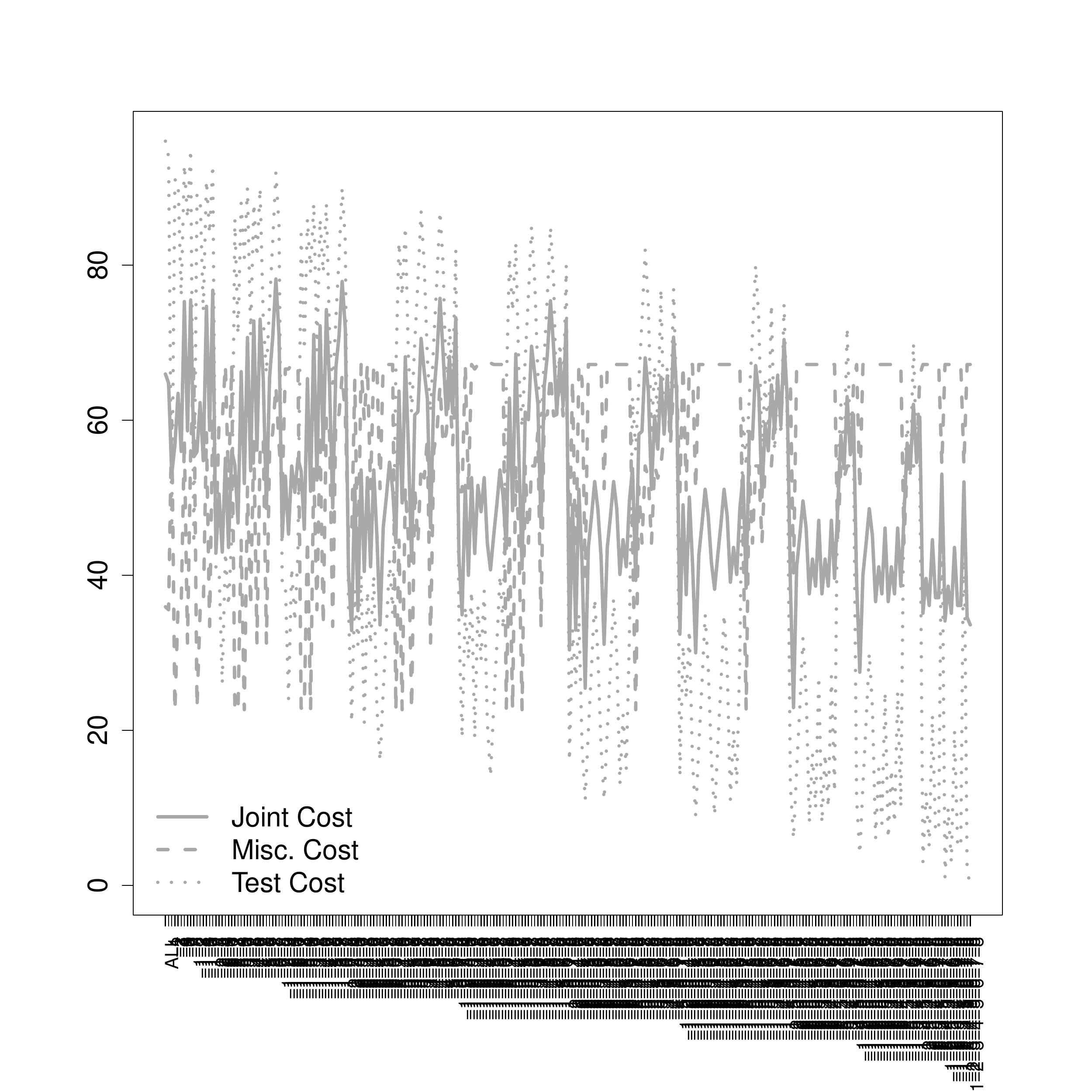} 
\caption {Evolution of $MC$, $TC$ and $JC$ according to attribute selection for a Adaboost model using non-uniform context. Left: Iris dataset, Right: Diabetes dataset}
\label{fig:allcosts0102ada}
\end{figure}

\begin{figure}
\centering
\includegraphics[width=0.48\textwidth]{./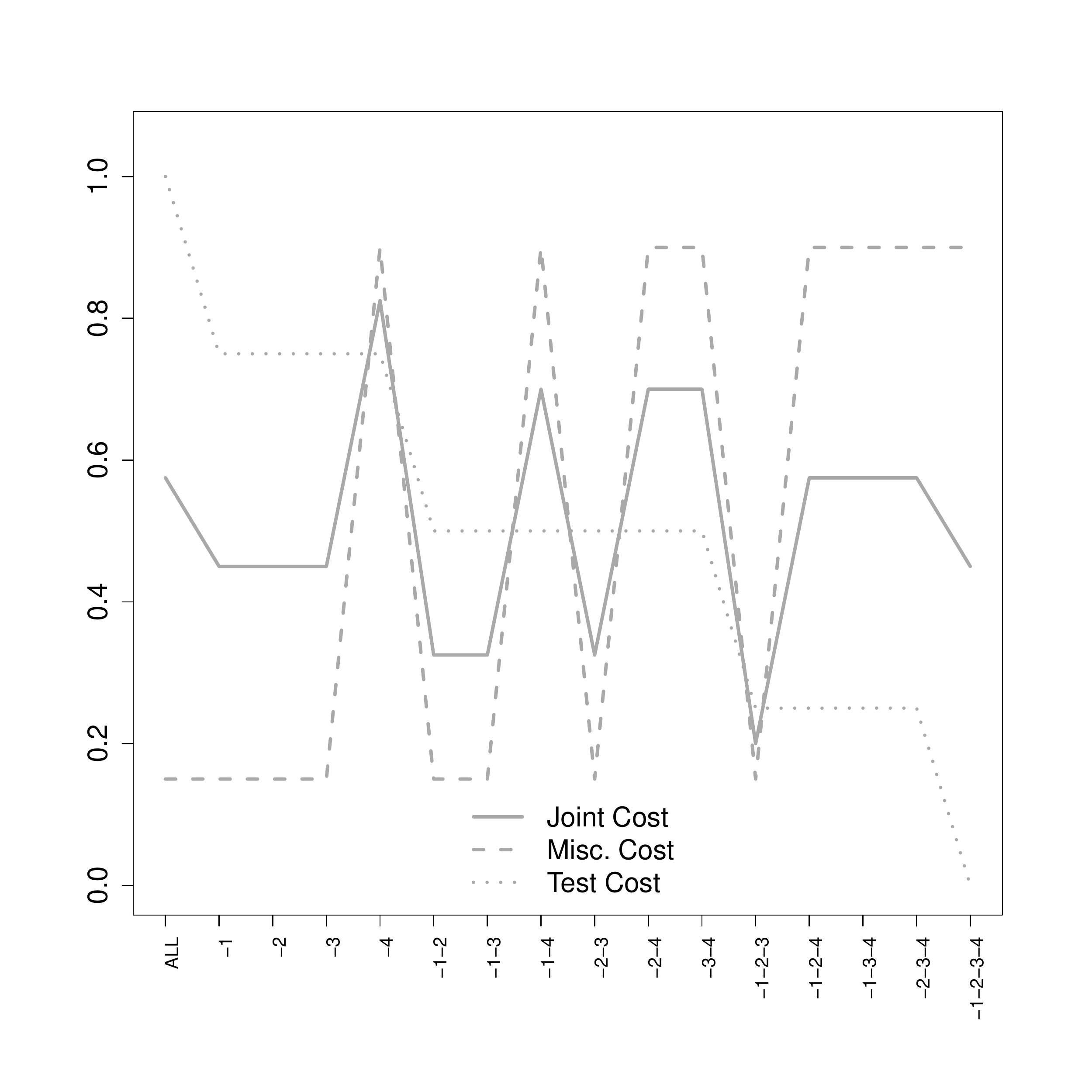} \hfill
\includegraphics[width=0.48\textwidth]{./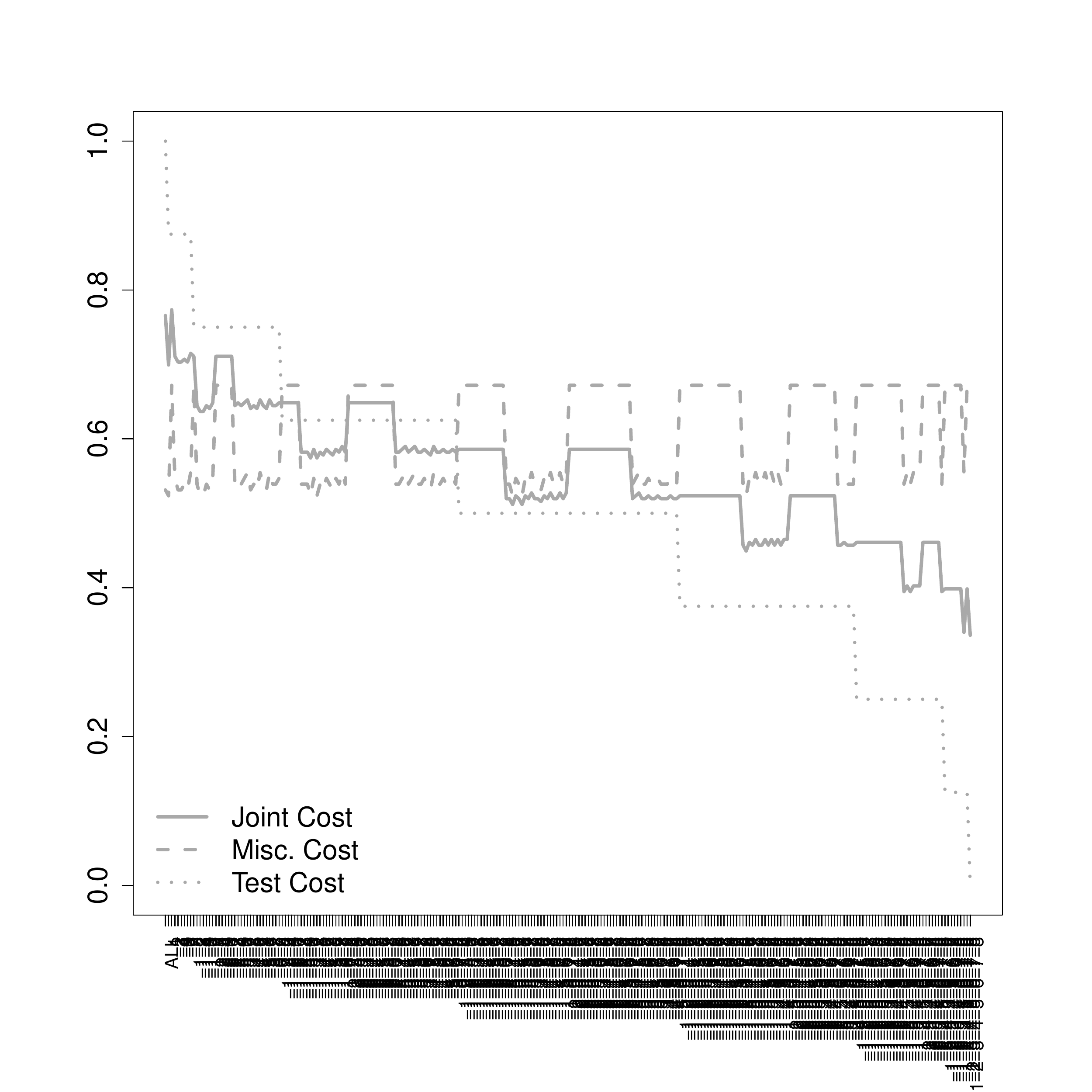} 
\caption {Evolution of $MC$, $TC$ and $JC$ according to attribute selection for a Bagging model using uniform context. Left: Iris dataset, Right: Diabetes dataset}
\label{fig:allcostsubag}
\end{figure}

\begin{figure}
\centering
\includegraphics[width=0.48\textwidth]{./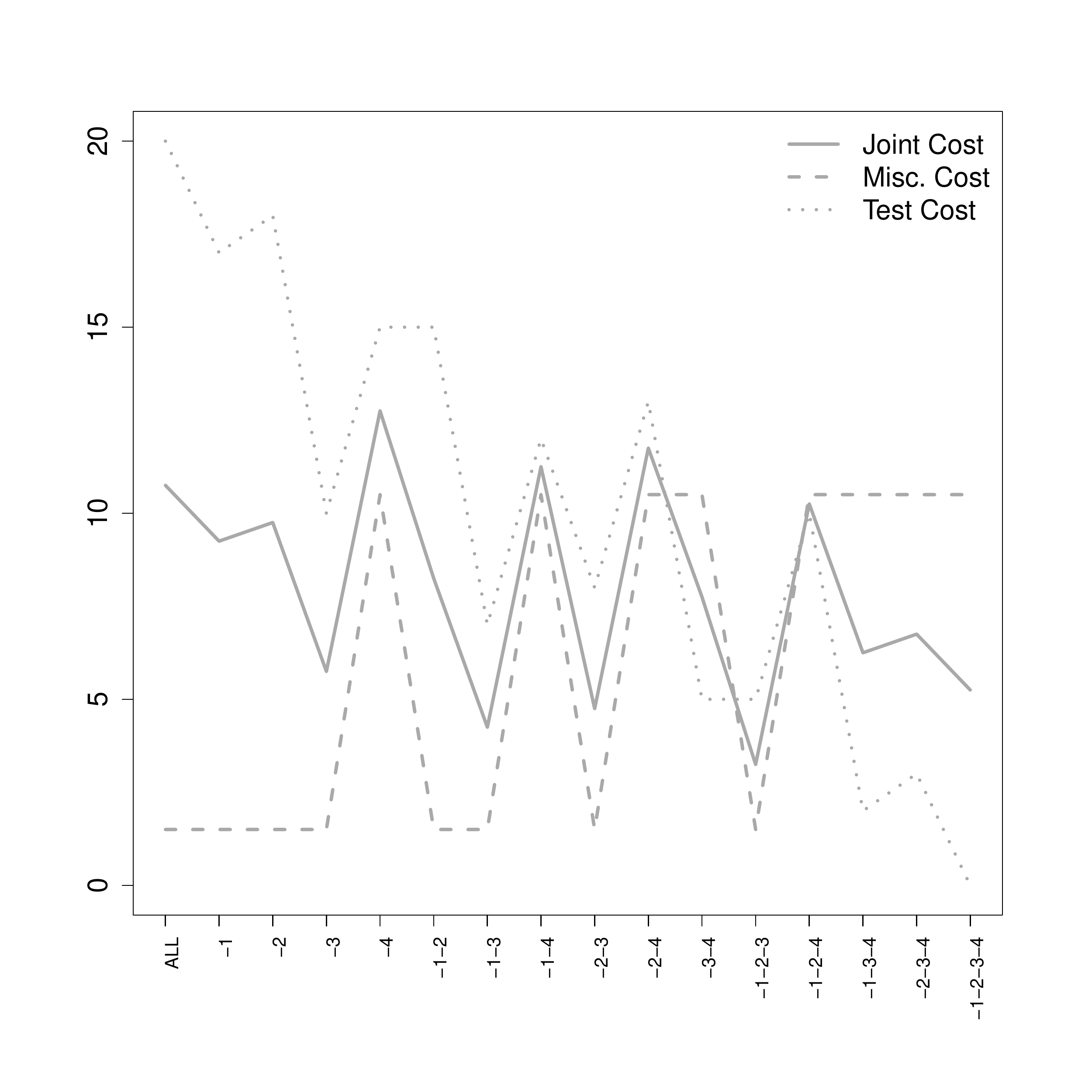} \hfill
\includegraphics[width=0.48\textwidth]{./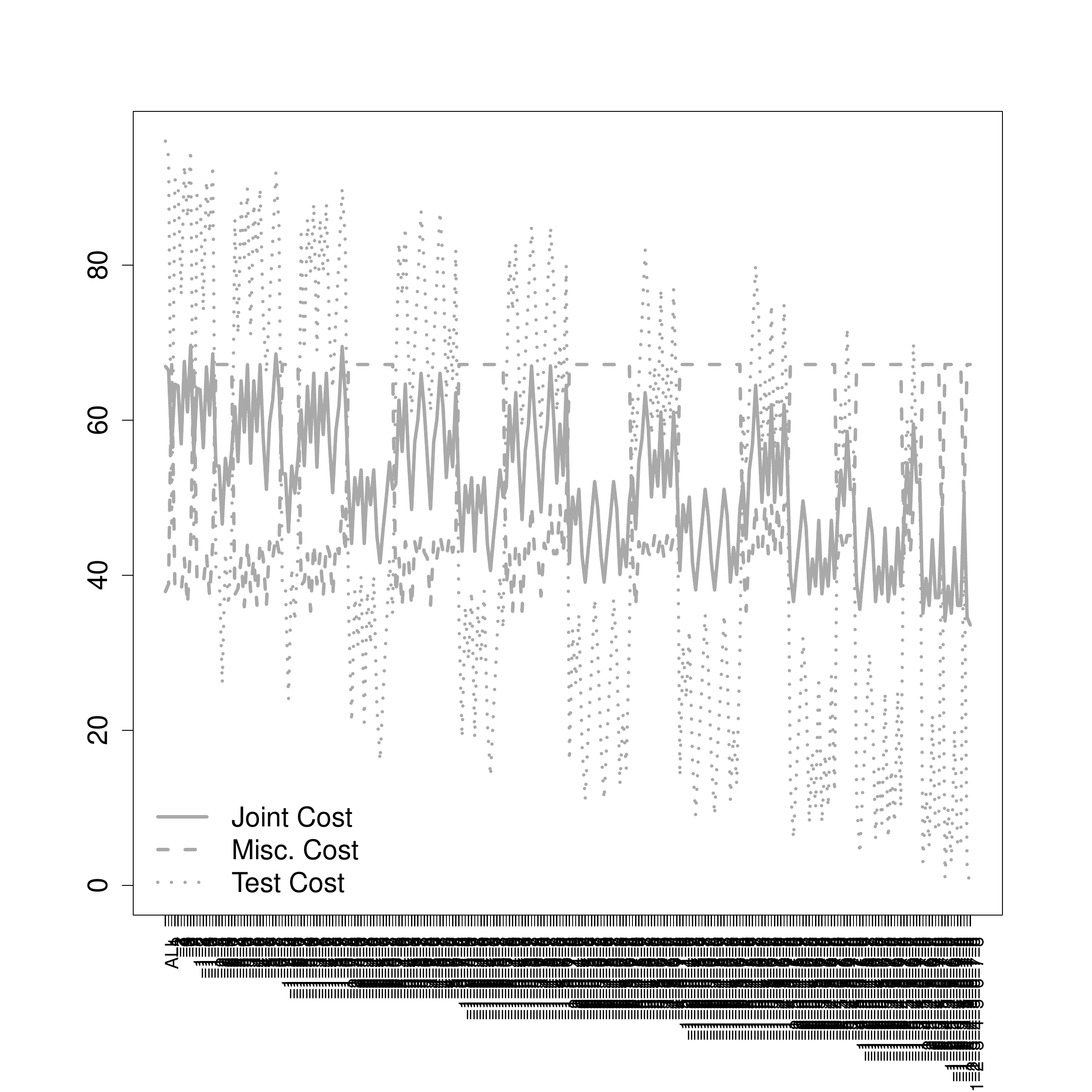} 
\caption {Evolution of $MC$, $TC$ and $JC$ according to attribute selection for a Bagging model using non-uniform context. Left: Iris dataset, Right: Diabetes dataset}
\label{fig:allcosts0102bag}
\end{figure}

All the figures \ref{fig:allcostsuibk} \ref{fig:allcostsuada} \ref{fig:allcostsubag} \ref{fig:allcosts0102ibk} \ref{fig:allcosts0102ada} and \ref{fig:allcosts0102bag} bring us to the same main conclusions: the best joint cost is not always obtained using all the features, or removing all of them. Moreover, the result vary with different machine learning methods when the dataset differs (for the iris dataset the best result is given by the Adaboost, but it is given  by the SMO (SVM) for the Diabetes dataset).

\chapter{Implementation details }\label{implementation}

For the implementation of the approach developed during this research, two principal sources files have been written in the R language, using the RWeka packages and executed in a R platform (see section \ref{tools}).

The first one is used for the representation of all the differents plots shown previously (and more). Given a dataset or a set of datasets and a set of machine learning models (J48, IBk, and SMO in this case), this program first splits each dataset into train and test parts, then trains the train data with each model; after that it proceeds to the reframing of the model and evaluates each reframed model. the result of the accuracy is then represented in a graph, as the trade-off of misclassification and test cost, the JROC convex hull graphs. The result of the approximation of the methods defined in section \ref{hull} can also be shown. 

\begin{lstlisting}

##############################################################################
#
# This is R code for calculating and drawing JROC curves (ROC curves for joint costs) and their convex hulls.
# Joint costs include misclassification costs and test costs
#
#
# This code has been developed by
#   Celestine-Periale Maguedong-Djoumessi, cemadj@posgrado.upv.es
#   Jose Hernandez-Orallo, jorallo@dsic.upv.es
#   UNIVERSITAT POLITECNICA DE VALENCIA, SPAIN
#
#
##############################################################################



########################################
# LIBRARIES
########################################

library(combinat)   # combn
library(RWeka)
\end{lstlisting}

\section{Function definition}
In this section all the functions used in the main code are defined.

\begin{lstlisting}

########################################
# FUNCTION DEFINITION
########################################

# This function sets to null a subset of the columns of a dataframe
settonull <- function(test, vector) {
  if (is.null(vector)) {
    test
  } else {
    for(j in 1:length(vector)) {
      test[,vector[j]] <- rep("?",length(test[,1]))
    }
    test
  }
}

# Calculates the misclassification cost from rweka's textual output.
# This function takes as input the result of an evaluation of a model on a dataset, and gives as result the field of misclassification cost.
misclascost <- function(e) {
	mylist <- strsplit(toString(e), "\n")
	totalcoststring <- mylist[[1]][8]
	totalcoststringnoblanks <- gsub(" ", "", totalcoststring)
	totalcoststringnoblanksonlynumber <- gsub("AverageCost", "", totalcoststringnoblanks)
	totalcostnumber <- as.numeric(totalcoststringnoblanksonlynumber)
	totalcostnumber
}


# This calculate the leftmost intercept line for a set of points given in arrays X and Y
leftmost_intercept <- function(slope, X, Y) {
  best_intercept <- Inf
  for (i in 1:length(X)) {
    x <- X[i]
    y <- Y[i]
    intercept <- y - slope*x    
    if (intercept < best_intercept)
      best_intercept <- intercept
  }
  best_intercept
}


# This calculates the leftmost intercept line for a set of points given in arrays X and Y. 
# But this one returns the points
leftmost_intercept_xy <- function(slope, X, Y) {
  best_intercept <- Inf
  for (i in 1:length(X)) {
    x <- X[i]
    y <- Y[i]
    intercept <- y - slope*x    
    if (intercept < best_intercept) {
      best_intercept <- intercept
      bestx <- x
      besty <- y
    }
  }
  c(bestx,besty)
}

\end{lstlisting}

\section{Main code}

Here, we define all the configuration needed before the experiment (definiton of the path where result will be saved, definition of the dataset(s) to be used and the operational context meaning the different costs.).

\begin{lstlisting}

########################################
# MAIN
########################################


# Sets seed, so all executions do the same and the plots are comparable
set.seed(2)

 WORKDIR <- "C:/Users/periale/Desktop/tesis/images"  # Celestine
#WORKDIR <- "A:/__FAENA__/_TESIS Co-Dirigides/Celestine Maguedong/code"  # Jose
setwd(WORKDIR)


# Dataset selection

#namedataset <- "breast-cancer"
 namedataset <- "diabetes"
path="" # variable wich contains the path  for the directory of the datasets
if (WORKDIR=="A:/__FAENA__/_TESIS Co-Dirigides/Celestine Maguedong/code") {
path=""
}
else {
path="C:/Users/periale/Desktop/datasets/UCI/"
}

if (namedataset != "iris") {
 mydata <- read.arff(paste(path,namedataset,".arff", sep=""))
 #mydata <- read.arff("diabetes.arff")
} else if (namedataset == "iris") {
  data(iris)
  mydata <- iris
}


# Gets length, number of attributes and number of classes
len <- length(mydata[,1])
nattr <- length(mydata[1,])
for (i in 1:len)
{
	for (j in 1:(nattr-1))
	{
		if (is.na(mydata[i,j]))
			mydata[i,j] <- "?"
	}
}

nclasses <- length(unique(mydata[,nattr]))


# Ensure that the name of the class attribute is called "class"
names(mydata)[nattr] <- "class"

# Shuffles datasets and splits it
shufindx <- sample(1:len, len)
mydata <- mydata[shufindx,]

lentrain <- trunc(len*2/3)
train <- mydata[1:lentrain,]
test <- mydata[(trunc(len*2/3)+1):len,]
lentest <- length(test[,1])


# Training

# Defines test cost vector and misc cost matrix
# There are several options depending on the experiment

# Uniform, as for definition 2
testcostvector_U <- rep(1/(nattr-1), nattr-1)   # nattr includes the class, so we need to remove 1
mcvector_U <- matrix(rep(nclasses/(nclasses-1), nclasses^2), nclasses, nclasses)
for (i in 1:nclasses)
  mcvector_U[i,i] <- 0
alpha <- 0.5  

testcostvector_01_iris <- c(3, 2, 10, 5)
mcvector_01_iris <- matrix(c(0,5,30,20,0,15,15,15,0),3,3)  # In Weka, rows are actual values and columns are predicted values

testcostvector_02_diabetes <- c(2, 50, 5, 5, 20, 3, 10, 1)
mcvector_02_diabetes <- matrix(c(0,200,50,0),2,2)  # In Weka, rows are actual values and columns are predicted values

cost_configuration <- "UR"
#cost_configuration <- "01"
# cost_configuration <- "02"

if (cost_configuration == "U") {
  testcostvector <- testcostvector_U
  mcvector <- mcvector_U
} else if (cost_configuration == "UR") {
  testcostvector <- testcostvector_U
  mcvector <- mcvector_U
  for (i in 1:length(testcostvector)) {
    k0 <- runif(1)
    k = exp(2*(k0-0.5)) 
    testcostvector[i] <- testcostvector[i]*k
  }
  for (i in 1:length(mcvector[1,])) {
    for (j in 1:length(mcvector[,1])) {
      k0 <- runif(1)
      k = exp(2*(k0-0.5)) 
      mcvector[i,j] <- mcvector[i,j]*k
    }
  }

} else if (cost_configuration == "01") {
  testcostvector <- testcostvector_01_iris
  mcvector <- mcvector_01_iris
} else if (cost_configuration == "02") {
  testcostvector <- testcostvector_02_diabetes
  mcvector <- mcvector_02_diabetes
}


print(testcostvector)
print(mcvector)

# SOME OPTION TO MAKE THE PLOTS LOOK OKAY

if (namedataset == "diabetes") {
  LEGEND_LOCATION <- "bottomleft"
  LEGEND_LOCATION0 <- LEGEND_LOCATION
} else { 
  LEGEND_LOCATION <- "topright"
  if (cost_configuration == "U") {
    LEGEND_LOCATION0 <- "bottom"
  } else {
    LEGEND_LOCATION0 <- LEGEND_LOCATION
  }
}


# ALPHAS THAT WE WILL USE
ALPHAS <- c(0.03, 0.5, 0.9)



# Training
# define all the models used for the training of the dataset.
models <- list()
models_names <- list()


# models <- c(models, list(J48(class ~ ., data = train)))
# models_names <- c(models_names, "DT")  # "Tree"

models <- c(models, list(SMO(class ~ ., data = train)))
models_names <- c(models_names, "SMO")  # "SVM"

models <- c(models, list(IBk(class ~ ., data = train)))
models_names <- c(models_names, "IBk")  # "kNN"

# models <- c(models, list(AdaBoostM1(class ~ ., data = train, control = Weka_control(W = "DecisionStump"))))
# models_names <- c(models_names, "BstDS")  

models <- c(models, list(AdaBoostM1(class ~ ., data = train, control = Weka_control(W = list(J48, M = 30)))))
models_names <- c(models_names, "BstDT") 

# models <- c(models, list(Bagging(class ~ ., data = train, control = Weka_control(W = "DecisionStump")))) 
# models_names <- c(models_names, "BagDS")  

models <- c(models, list(Bagging(class ~ ., data = train, control = Weka_control(W = list(J48, M = 30)))))
models_names <- c(models_names, "BagDT")  

lenmod <- length(models)


# Initialisation of lists and structures for main loop
mc1=0
mis1=0



# These are the lists where we keep the values for TC, MC, JC and accuracy
testcost <- rep(list(),lenmod)
misccost <- rep(list(),lenmod)
jointcost <- rep(list(),lenmod)
accuracylist <- list()


label <- "-"
labelnames <- rep(list(),lenmod)


modifiedtestvector <- list()

# objects use for the construction of box plot instances for the 3differents methods
boxplotstrg1 <- NULL
bp1 <- rep(list(), lenmod)

bpnames <- NULL

latticecounter <- 0


# objects for the incremental methods

tcpivotset <- NULL      # Attributes removed for tcpivotset (originally none)
tccounter <- rep(0,lenmod)   # corrected
tcselection <- list()

mcpivotset1 <- NULL      # Attributes removed for mcpivotset (originally none)
mccounter1 <- rep(0,lenmod)  # corrected
mcselection <- list()

mcpivotsetnew<-list()
tcpivotsetnew<-list()
jcpivotsetnew<-list()


jcpivotset1 <- NULL      # Attributes removed for jcpivotset (originally none)
jccounter1 <- rep(0,lenmod)  # corrected
jcselection <- list()

# The outer loop is for each row in the lattice (removing i attributes)
for (i in 0:(nattr-1)) {   # it goes from 0 (no attributes removed) to nattr-1 (all attributes removed)
                           # (Note that nattr includes the class, so nattr-1 is the actual number of attributes


  cat("\n****Outer loop. Iteration: ", i, "of ", nattr-1, "****\n\n")

  # What's the meaning of these pivots?
  tcpivot <- Inf
  mcpivot1 <- Inf
  jcpivot1 <- Inf

  pivot <- 1
  bpnames[i+1] <- i
  boxplotstrg1 <- NULL
  label<-"_"

  # This calculates the combinations removing i attributes
  settonullmatrix <- t(combn(1:(nattr-1), i))
  print(settonullmatrix)
  if (i == 0) {     
    len <- 0
  } else {
    len <- length(settonullmatrix[,1])
  }


  # The inner loop is for each element in the row in the lattice
  for (j in (min(1,i):len)) {

    latticecounter <- latticecounter + 1
    cat("\n  ****Inner loop. Iteration: ", j, "of ", len, "   Points in the lattice: ", latticecounter, "of", 2^(nattr-1), "****\n\n")

    tcignore <- FALSE
    mcignore1 <- FALSE
    jcignore1 <- FALSE
	
k=1
for (mod in models)
{
  
  if (j==0) {
      modifiedtest <- test
      label <- "ALL"
      myvector <- NULL   
    } else {
      myvector <- settonullmatrix[j,]
      modifiedtest <- settonull(test, myvector)  

      newattr <- setdiff(myvector, tcpivotset)     # New attributes from the one fixed for tc in the previous lattice row
      if (length(newattr) != 1) {
        tcignore <- TRUE 
      }

      newattr <- setdiff(myvector, mcpivotset1)     # New attributes from the one fixed for mc in the previous lattice row
      if (length(newattr) != 1) {
        mcignore1 <- TRUE 
      }


      newattr <- setdiff(myvector, jcpivotset1)     # New attributes from the one fixed for jc in the previous lattice row
      if (length(newattr) != 1) {
        jcignore1 <- TRUE 
      }

    }

    if (!tcignore)
      tccounter[k] <- tccounter[k] + 1

    if (!mcignore1)
      mccounter1[k] <- mccounter1[k] + 1
   
    if (!jcignore1)
      jccounter1[k] <- jccounter1[k] + 1
    
    # do whatever you need to do with modifiedtest
	
    e <- evaluate_Weka_classifier(mod,newdata=modifiedtest, cost= mcvector, complexity = TRUE,seed = 123, class = TRUE )
    
    # computation of the accuracy with the DT
    # for the first model
    res1 <- predict(mod, newdata = modifiedtest)
#    print("result with DT")
#    print(e1)
#    summary(e1)
    hits <- 0
    for (i in 1:lentest) 
      if (res1[i] == modifiedtest[i,nattr])
        hits <- hits +1
    
    accuracy1 <- hits/ lentest
#    print(accuracy1)
print(k)
	if (length(accuracylist)<lenmod)
		accuracylist <- c(accuracylist, list(accuracy1))
		else
		accuracylist[[k]] <- c(accuracylist[[k]], list(accuracy1))
	mc1 <- misclascost(e[1])
	if (length(misccost)<lenmod)
		misccost <- c(misccost, list(mc1))
		else
		misccost[[k]] <- c(misccost[[k]], list(mc1))
	
	

  
	

    # computation of the total test cost (and creation of labels)
    tc <- 0
    if (j==0) {
      label <- "ALL"
    } else {
      label <- ""
    }
    for (i in 1:(nattr-1)) {
     if (modifiedtest[1,i] != "?") {             
	tc <- tc + testcostvector[i]
     }
    }
	if (length(testcost)<lenmod)
		testcost <- c(testcost, list(tc))
		else
		testcost[[k]] <- c(testcost[[k]],list(tc))
	
    jc1 <- alpha*mc1 + (1-alpha)*tc
  
	if (length(jointcost)<lenmod)
		jointcost <- c(jointcost,list(jc1))
		else
		jointcost[[k]] <- c(jointcost[[k]], list(jc1))



    if (!tcignore) {	# tcignore == TRUE means that this point cannot be derived from the previous row in the lattice
		if (length(tcselection)<lenmod)
		tcselection <- c(tcselection, latticecounter)
		else
		tcselection[[k]] <- c(tcselection[[k]], latticecounter)  # Index of the point

      if (tcpivot > tc) { 
        tcpivot <- tc
        tcpivotsetnew <- myvector   # Keeps track of the attributes of the best point for jc
      }
    }

    if (!mcignore1) {	 # mcignore1 == TRUE means that this point cannot be derived from the previous row in the lattice
     if (length(mcselection)<lenmod)
		mcselection <- c(mcselection, latticecounter)
		else
		mcselection[[k]] <- c(mcselection[[k]], latticecounter)  # Index of the point

      if (mcpivot1 > mc1) { 
        mcpivot1 <- mc1
        mcpivotsetnew <- myvector   # Keeps track of the attributes of the best point for jc
      }
    }

 


    if (!jcignore1) {	 # jcignore1 == TRUE means that this point cannot be derived from the previous row in the lattice
      if (length(jcselection)<lenmod)
		jcselection <- c(jcselection, latticecounter)
		else
		jcselection[[k]] <- c(jcselection[[k]], latticecounter)  # Index of the point

      if (jcpivot1 > jc1) { 
        jcpivot1 <- jc1
        jcpivotsetnew <- myvector   # Keeps track of the attributes of the best point for jc
      }
    }
    

	
    boxplotstrg1[pivot] <- jc1
    
    pivot <- pivot+1  


  	cat("model:", unlist(models_names[k]),"\n")
  tcpivotset <- tcpivotsetnew
  cat("\nBest TC: points explored: ", tccounter[k], "\n")
  cat("Best TC: attributes removed: ", tcpivotset, "\n")


  mcpivotset1 <- mcpivotsetnew
  cat("\nBest MC: points explored: ", mccounter1[k], "\n") # Remains to be adjust with the others one
  cat("Best MC: attributes removed: ", mcpivotset1, "\n")


  jcpivotset1 <- jcpivotsetnew
  cat("\nBest JC: points explored: ", jccounter1[k], "\n")
  cat("Best JC: attributes removed: ", jcpivotset1, "\n")
  k=k+1
  } # end of inner loop

    
	for (i in 1:(nattr-1)) {
     if (modifiedtest[1,i] == "?") {             # computation of the labelnames of the current modified test file.
       label <- paste(label,"-", i, sep="")
     }
    }
	labelnames <- c(labelnames, list(label))
	
	modifiedtestvector <- c(modifiedtestvector, list(modifiedtest))  # computation of the vector which contains all the differents modifiedtests
	


  cat("\n\n")
  

  
	bp1 <-c(bp1,list(boxplotstrg1))
	

}
  

} # end of outer loop


# The labels are converted into a vector x
x=NULL
for (i in 1:length(labelnames))
	x[i] <- labelnames[[i]]
	


# We calculate some max and min for the plots

maxmisccost <- max(unlist(misccost))


minmisccost <- min(unlist(misccost))


maxjointcost <- max(unlist(jointcost))


minjointcost <- min(unlist(jointcost))


maxtestcost <- max(unlist(testcost))
mintestcost <- min(unlist(testcost))

maxcost <- max(maxmisccost, maxjointcost, maxtestcost)
mincost <- min(minmisccost, minjointcost, mintestcost)

maxaccuracy <- max(unlist(accuracylist))


minaccuracy <- min(unlist(accuracylist))

\end{lstlisting}

\section{BMC, BTC, BJC, RND methods implementation:}

This part of the code implements the four others approach (adding to the Full method) which approximate the \JROC hull (see chapter \ref{hull}): the Backward $MC$-guided (BMC), the Backward $TC$-guided (BTC), the Backward $JC$-guided (BJC) and Monte Carlo (RND).

\begin{lstlisting}

misccosttc <- list()
misccostmc <- list()
misccostjc <- list()
testcosttc <- list()
testcostmc <- list()
testcostjc <- list()
misccostrnd <- list()
testcostrnd <- list()

##################### TC method incremental ##########################
for (i in 1:lenmod)
{
misccosttc <- c(misccosttc, list(misccost[[i]][tcselection[[i]]]))
testcosttc <- c(testcosttc,list(testcost[[i]][tcselection[[i]]]))
}





##################### MC method incremental ##########################

for (i in 1:lenmod)
{
misccostmc <- c(misccostmc, list(misccost[[i]][mcselection[[i]]]))
testcostmc <- c(testcostmc,list(testcost[[i]][mcselection[[i]]]))
}




##################### JC method incremental ##########################

for (i in 1:lenmod)
{
misccostjc <- c(misccostjc, list(misccost[[i]][jcselection[[i]]]))
testcostjc <- c(testcostjc,list(testcost[[i]][jcselection[[i]]]))
}




##################### Monte Carlo method ##########################

latticesize <- length(modifiedtestvector)
rndsamplesize <- (nattr-1)*(nattr)/2 + 1           

rndselection <- sample(1:latticesize, rndsamplesize, replace=FALSE)


# lists for the monte carlo method. All the lists end with rnd (rnd)
for (i in 1:lenmod)
{
misccostrnd <- c(misccostrnd, list(misccost[[i]][rndselection]))
testcostrnd <- c(testcostrnd, list(testcost[[i]][rndselection]))
}


labelnamesrnd <- labelnames[rndselection]


  
# The monte carlo labels are converted into a vector Y
Y=NULL
for (i in 1:length(labelnamesrnd))
	Y[i] <- labelnamesrnd[[i]]
  


point_full<-NULL
point_btc<-NULL
point_bmc<-NULL
point_bjc<-NULL
point_rnd<-NULL

alpha_iso<-0.5
  slope <- (1 - alpha_iso) / - alpha_iso
  
# FULL METHOD
  point_full <- leftmost_intercept_xy(slope, c(unlist(testcost)), c(unlist(misccost)) )
  jcfull <- alpha_iso*point_full[2] + (1-alpha_iso)*point_full[1]

# BMC METHOD
  point_bmc <- leftmost_intercept_xy(slope, c(unlist(testcostmc)), c(unlist(misccostmc) ))
  jcbmc <- alpha_iso*point_bmc[2] + (1-alpha_iso)*point_bmc[1]

# BTC METHOD
  point_btc <- leftmost_intercept_xy(slope, c(unlist(testcosttc)), c(unlist(misccosttc) ))
  jcbtc <- alpha_iso*point_btc[2] + (1-alpha_iso)*point_btc[1]

# BTC METHOD
  point_bjc <- leftmost_intercept_xy(slope, c(unlist(testcostjc)), c(unlist(misccostjc) ))
  jcbjc <- alpha_iso*point_bjc[2] + (1-alpha_iso)*point_bjc[1]
  
# BTC METHOD
  point_rnd <- leftmost_intercept_xy(slope, c(unlist(testcostrnd)), c(unlist(misccostrnd) ))
  jcrnd <- alpha_iso*point_rnd[2] + (1-alpha_iso)*point_rnd[1]

  c(jcfull, jcbmc, jcbtc, jcbjc, jcrnd)  # Returns the five joint costs for the five different methods


\end{lstlisting}

\section{Plots representation}
This last part of the code is used for the representation of the results obtained during the experiments in various graphs, which are saved as pdf files.

\begin{lstlisting}

 
################################################
####          PDF FILES                    #####
################################################
 

# PDF options
PDFOPEN <- TRUE           # If the plots are output on a PDF file
PDFCLOSE <- PDFOPEN       # Close the PDF file. This should match PDFOPEN  except when you want to draw several curves before closing.
PDFheight= 10             # 7 is the default, so 14 makes it double higher than wide, 5 makes letters bigger (in proportion) for just one plot
PDFwidth= 10                # (as above for width) 7 by default

# Colours
plot_colors <- c("green", "red", "blue", "orange", "yellow", "violet", "pink")
INTERCEPT_COLOUR <- "darkgrey"






# Figure 1 : Accuracy vs attributes for all methods
pdfname <- paste("accuracyall", namedataset, ".pdf", sep="") # File name

if (PDFOPEN) {
  pdf(pdfname, height= PDFheight, width= PDFwidth)
}
par(mar=c(8,6,5,5) + 0.1)

# A=NULL                                  # corrected
# for (i in 1:length(accuracylist3))      # corrected
#	A[i] <- accuracylist3[[i]]        # corrected 

#for draw the plot of accuracy
for (i in 1:lenmod) {
  if (i==1) {
    plot(1:length(accuracylist[[1]]), accuracylist[[1]],ylab="Accuracy", xlab="",ylim=c(minaccuracy,maxaccuracy),lwd=3, xaxt="n",cex.lab=2, type="l", col=plot_colors[1])
    axis(1, at=1:length(x),srt=45, padj=1,las=2, lab=x)
  } else {
    lines(1:length(accuracylist[[i]]), accuracylist[[i]], type="l",lty=i,lwd=3, col=plot_colors[i])
  }
}
legend(LEGEND_LOCATION,c(unlist(models_names)), cex=1, col=plot_colors, lty=1:lenmod, lwd=3, bty="n")  
  
if (PDFCLOSE) {
  dev.off()
} 




for(i in 1:lenmod)  #for the graphs representing different cost for all methods 
{

# Figure 2 (or 3) : All costs vs attributes for one method
pdfname <- paste("costs",models_names[i], cost_configuration, namedataset, ".pdf", sep="") # File name

if (PDFOPEN) {
  pdf(pdfname, height= PDFheight, width= PDFwidth)
}
par(mar=c(8,6,5,4) + 0.1)

	
 #for the graphs representing different cost for one method 
plot(1:length(misccost[[i]]), misccost[[i]],ylab="", xlab="",ylim=c(mincost,maxcost),lwd=3, xaxt="n",lty=2, type="l",cex.lab=2,cex.axis=1.5, col="dark grey")
axis(1, at=1:length(x),srt=45, padj=1,las=2, lab=x)
lines(1:length(jointcost[[i]]),jointcost[[i]], type="l", lty=1, lwd=3, col="dark grey")
lines(1:length(testcost[[i]]), testcost[[i]], type="l", lty=3, lwd=3, col="dark grey")
legend(LEGEND_LOCATION0, c("Joint Cost","Misc. Cost","Test Cost"), cex=1.5, col="dark grey",    lty=1:3, lwd=3, bty="n")   
  
if (PDFCLOSE) {
  dev.off()
} 

}




#Figure 4 (or 5)
pdfname <- paste("totalplot", cost_configuration, namedataset, ".pdf", sep="") # File name

if (PDFOPEN) {
  pdf(pdfname, height= PDFheight, width= PDFwidth)
}
par(mar=c(5,6,5,2) + 0.1)

for (i in 1:lenmod) {
  if (i==1) {
    plot(testcost[[1]], misccost[[1]],ylab="MC", xlab="TC",ylim=c(minmisccost,maxmisccost),xlim=c(mintestcost,maxtestcost),pch=1,lwd=2,cex=2,cex.lab=2,cex.axis=1.5, col=plot_colors[1])
  } else {
    points(testcost[[i]],misccost[[i]], cex=2, pch=i,lwd=2, col=plot_colors[i])
  }
}

legend(LEGEND_LOCATION, c(unlist(models_names)), cex=1.5, col=plot_colors,    pch=1:lenmod, lwd=3, bty="n") 
  
if (PDFCLOSE) {
  dev.off()
} 





# figure 6
pdfname <- paste("totalplot", cost_configuration,  namedataset, "-iso", ".pdf", sep="") # File name

if (PDFOPEN) {
  pdf(pdfname, height= PDFheight, width= PDFwidth)
}
par(mar=c(5,6,5,2) + 0.1)

for (i in 1:lenmod) {
  if (i==1) {
    plot(testcost[[1]], misccost[[1]],ylab="MC", xlab="TC",ylim=c(minmisccost,maxmisccost),xlim=c(mintestcost,maxtestcost),pch=1,lwd=2,cex=2,cex.lab=2,cex.axis=1.5, col=plot_colors[1])
  } else {
    points(testcost[[i]],misccost[[i]], cex=2, pch=i,lwd=2, col=plot_colors[i])
  }
}
 
legend(LEGEND_LOCATION, c(unlist(models_names)), cex=1.5, col=plot_colors, pch=1:lenmod, lwd=3, bty="n") 


for (i in 1:length(ALPHAS)) {
  alpha1 <- ALPHAS[i]
  slope <- (1 - alpha1) / - alpha1
  intercept <- leftmost_intercept(slope, c(unlist(testcost)), c(unlist(misccost)) )
  abline(a=intercept,b=slope, col = INTERCEPT_COLOUR, lty=4)
}
   

  
if (PDFCLOSE) {
  dev.off()
} 






# figure 7
pdfname <- paste("totalplot", cost_configuration,  namedataset, "-ch", ".pdf", sep="") # File name

if (PDFOPEN) {
  pdf(pdfname, height= PDFheight, width= PDFwidth)
}
par(mar=c(5,6,5,2) + 0.1)

for (i in 1:lenmod) {
  if (i==1) {
    plot(testcost[[1]], misccost[[1]],ylab="MC", xlab="TC",ylim=c(minmisccost,maxmisccost),xlim=c(mintestcost,maxtestcost),pch=1,lwd=2,cex=2,cex.lab=2,cex.axis=1.5, col=plot_colors[1])
  } else {
    points(testcost[[i]],misccost[[i]], cex=2, pch=i,lwd=2, col=plot_colors[i])
  }
}

legend(LEGEND_LOCATION, c(unlist(models_names)), cex=1.5, col=plot_colors,    pch=1:lenmod, lwd=3, lty=1:3, bty="n") 

BIGX <- maxtestcost*10000
BIGY <- maxmisccost*10000

for (i in 1:lenmod) {
  X <- c(unlist(testcost[[i]]), 0, BIGX, BIGX)
  Y <- c(unlist(misccost[[i]]), BIGY, BIGY, 0)
  ch <- chull(X,Y)
  testcostch <- X[ch]
  misccostch <- Y[ch]
  lines(testcostch,misccostch, cex=2, lty=i, pch=i,lwd=2, col=plot_colors[i])
}
  
if (PDFCLOSE) {
  dev.off()
} 











# figure 8
pdfname <- paste("mcincremental", cost_configuration, namedataset, ".pdf", sep="") # File name

if (PDFOPEN) {
  pdf(pdfname, height= PDFheight, width= PDFwidth)
}
par(mar=c(5,6,5,2) + 0.1)
for (i in 1:lenmod) {
if(i==1) {
plot(testcostmc[[1]], misccostmc[[1]],ylab="MC", xlab="TC",ylim=c(minmisccost,maxmisccost),xlim=c(mintestcost,maxtestcost),pch=1,lwd=2,cex=2,cex.lab=2,cex.axis=1.5, col=plot_colors[1])
}
else {
points(testcostmc[[i]], misccostmc[[i]], cex=2, pch=2,lwd=2, col=plot_colors[i])
}
}
legend(LEGEND_LOCATION, c(unlist(models_names)), cex=1.5, col=plot_colors, pch=1:3, lwd=3, bty="n")  
   

BIGX <- maxtestcost*10000
BIGY <- maxmisccost*10000

for (i in 1:lenmod) {
  X <- c(unlist(testcostmc[[i]]), 0, BIGX, BIGX)
  Y <- c(unlist(misccostmc[[i]]), BIGY, BIGY, 0)
  ch <- chull(X,Y)
  testcostch <- X[ch]
  misccostch <- Y[ch]
  lines(testcostch,misccostch, cex=2, lty=i, pch=i,lwd=2, col=plot_colors[i])
}

  
if (PDFCLOSE) {
  dev.off()
} 




# figure 9
pdfname <- paste("jcincremental", cost_configuration, namedataset, ".pdf", sep="") # File name

if (PDFOPEN) {
  pdf(pdfname, height= PDFheight, width= PDFwidth)
}

par(mar=c(5,6,5,2) + 0.1)
for (i in 1:lenmod) {
if(i==1) {
plot(testcostjc[[1]], misccostjc[[1]],ylab="MC", xlab="TC",ylim=c(minmisccost,maxmisccost),xlim=c(mintestcost,maxtestcost),pch=1,lwd=2,cex=2,cex.lab=2,cex.axis=1.5, col=plot_colors[1])
}
else {
points(testcostjc[[i]], misccostjc[[i]], cex=2, pch=2,lwd=2, col=plot_colors[i])
}
}
legend(LEGEND_LOCATION, c(unlist(models_names)), cex=1.5, col=plot_colors,    pch=1:3, lwd=3, bty="n") 
   

BIGX <- maxtestcost*10000
BIGY <- maxmisccost*10000

for (i in 1:lenmod) {
  X <- c(unlist(testcostjc[[i]]), 0, BIGX, BIGX)
  Y <- c(unlist(misccostjc[[i]]), BIGY, BIGY, 0)
  ch <- chull(X,Y)
  testcostch <- X[ch]
  misccostch <- Y[ch]
  lines(testcostch,misccostch, cex=2, lty=i, pch=i,lwd=2, col=plot_colors[i])
}

  
if (PDFCLOSE) {
  dev.off()
} 





#figure 10
pdfname <- paste("tcincremental", cost_configuration, namedataset, ".pdf", sep="") # File name

if (PDFOPEN) {
  pdf(pdfname, height= PDFheight, width= PDFwidth)
}
par(mar=c(5,6,5,2) + 0.1)

for (i in 1:lenmod) {
if(i==1) {
plot(testcosttc[[1]], misccosttc[[1]],ylab="MC", xlab="TC",ylim=c(minmisccost,maxmisccost),xlim=c(mintestcost,maxtestcost),pch=1,lwd=2,cex=2,cex.lab=2,cex.axis=1.5, col=plot_colors[1])
}
else {
points(testcosttc[[i]], misccosttc[[i]], cex=2, pch=2,lwd=2, col=plot_colors[i])
}
}
legend(LEGEND_LOCATION, c(unlist(models_names)), cex=1.5, col=plot_colors,   pch=1:3, lwd=3, bty="n")


BIGX <- maxtestcost*10000
BIGY <- maxmisccost*10000

for (i in 1:lenmod) {
  X <- c(unlist(testcosttc[[i]]), 0, BIGX, BIGX)
  Y <- c(unlist(misccosttc[[i]]), BIGY, BIGY, 0)
  ch <- chull(X,Y)
  testcostch <- X[ch]
  misccostch <- Y[ch]
  lines(testcostch,misccostch, cex=2, lty=i, pch=i,lwd=2, col=plot_colors[i])
}

  
if (PDFCLOSE) {
  dev.off()
} 






# figure 11
pdfname <- paste("rnd", cost_configuration, namedataset, ".pdf", sep="") # File name

if (PDFOPEN) {
  pdf(pdfname, height= PDFheight, width= PDFwidth)
}
par(mar=c(5,6,5,2) + 0.1)
plot(testcostrnd[[1]], misccostrnd[[1]],ylab="MC", xlab="TC",ylim=c(minmisccost,maxmisccost),xlim=c(mintestcost,maxtestcost),pch=1,lwd=2,cex=2,cex.lab=2,cex.axis=1.5, col=plot_colors[1])
points(testcostrnd[[1]],misccostrnd[[2]], cex=2, pch=2,lwd=2, col=plot_colors[2])
points(testcostrnd[[1]],misccostrnd[[3]], cex=2, pch=3,lwd=2, col=plot_colors[3])
legend(LEGEND_LOCATION, c(unlist(models_names)), cex=1.5, col=plot_colors, pch=1:3, lwd=3, bty="n")  
   

BIGX <- maxtestcost*10000
BIGY <- maxmisccost*10000

for (i in 1:lenmod) {
  X <- c(unlist(testcostrnd[[i]]), 0, BIGX, BIGX)
  Y <- c(unlist(misccostrnd[[i]]), BIGY, BIGY, 0)
  ch <- chull(X,Y)
  testcostch <- X[ch]
  misccostch <- Y[ch]
  lines(testcostch,misccostch, cex=2, lty=i, pch=i,lwd=2, col=plot_colors[i])
}

   
if (PDFCLOSE) {
  dev.off()
} 








BOXPLOTS <- FALSE


if (BOXPLOTS) {





# figure boxplot
pdfname <- paste("boxplotDT", cost_configuration, namedataset, ".pdf", sep="") # File name

if (PDFOPEN) {
  pdf(pdfname, height= PDFheight, width= PDFwidth)
}
par(mar=c(5,6,5,2) + 0.1)

boxplot(bp1,names=bpnames, ylab="JC", xlab="Features Removed") 
   

  
if (PDFCLOSE) {
  dev.off()
} 


# figure boxplot
pdfname <- paste("boxplotSMO", cost_configuration, namedataset, ".pdf", sep="") # File name

if (PDFOPEN) {
  pdf(pdfname, height= PDFheight, width= PDFwidth)
}
par(mar=c(5,6,5,2) + 0.1)

boxplot(bp2,names=bpnames, ylab="JC", xlab="Features Removed") 
   

  
if (PDFCLOSE) {
  dev.off()
}


# figure boxplot
pdfname <- paste("boxplotIBk", cost_configuration, namedataset, ".pdf", sep="") # File name

if (PDFOPEN) {
  pdf(pdfname, height= PDFheight, width= PDFwidth)
}
par(mar=c(5,6,5,2) + 0.1)

boxplot(bp3,names=bpnames, ylab="JC", xlab="Features Removed") 
   
  
if (PDFCLOSE) {
  dev.off()
}



} # end if

\end{lstlisting}

\section{Experiment implementation}

The second source implements the macro-function ``ONE-EXPERIMENT'', which performs the experiments described in section \ref{experiments}.
In fact, this function, giving a value for $\alpha $ (the isometric value used to give a certain weight to gauge the relevance to the misclassification and test cost), a dataset and a list of different machine learning methods (predictive model algoritms), computes the best features configuration, i.e., the configuration of features which gives the best Joint Cost for each of the 5 methods (Full, BMC, BTC, BJC, RND).
In the implementation of this macro-function, there are some other functions previously defined which are also used, thus repeated here.

\begin{lstlisting}

##############################################################################
#
# This is R code for computing the ONE-EXPERIMENT macro-function.
# Joint costs include misclassification costs and test costs
#
#
# This code has been developed by
#   Celestine-Periale Maguedong-Djoumessi, cemadj@posgrado.upv.es
#   Jose Hernandez-Orallo, jorallo@dsic.upv.es
#   UNIVERSITAT POLITECNICA DE VALENCIA, SPAIN
#
#
##############################################################################



########################################
# LIBRARIES
########################################

library(combinat)   # combn
library(RWeka)




########################################
# FUNCTION DEFINITION
########################################

# This function sets to null a subset of the columns of a dataframe
settonull <- function(test, vector) {
  if (is.null(vector)) {
    test
  } else {
    for(j in 1:length(vector)) {
      test[,vector[j]] <- rep("?",length(test[,1]))
    }
    test
  }
}

# Calculates the misclassification cost from rweka's textual output
misclascost <- function(e) {
	mylist <- strsplit(toString(e), "\n")
	totalcoststring <- mylist[[1]][8]
	totalcoststringnoblanks <- gsub(" ", "", totalcoststring)
	totalcoststringnoblanksonlynumber <- gsub("AverageCost", "", totalcoststringnoblanks)
	totalcostnumber <- as.numeric(totalcoststringnoblanksonlynumber)
	totalcostnumber
}


# This calculate the leftmost intercept line for a set of points given in arrays X and Y
leftmost_intercept <- function(slope, X, Y) {
  best_intercept <- Inf
  for (i in 1:length(X)) {
    x <- X[i]
    y <- Y[i]
    intercept <- y - slope*x    
    if (intercept < best_intercept)
      best_intercept <- intercept
  }
  best_intercept
}


# This calculates the leftmost intercept line for a set of points given in arrays X and Y. 
# But this one returns the points
leftmost_intercept_xy <- function(slope, X, Y) {
  best_intercept <- Inf
  for (i in 1:length(X)) {
    x <- X[i]
    y <- Y[i]
    intercept <- y - slope*x    
    if (intercept < best_intercept) {
      best_intercept <- intercept
      bestx <- x
      besty <- y
    }
  }
  c(bestx,besty)
}





########################################
# MAIN
########################################


# Sets seed, so all executions do the same and the plots are comparable
set.seed(2)

# WORKDIR <- "C:/Users/periale/Desktop/tesis/images"  # Celestine
#WORKDIR <- "E:/__FAENA__/_TESIS Co-Dirigides/Celestine Maguedong/code"  # Jose
WORKDIR <- "A:/__FAENA__/_TESIS Co-Dirigides/Celestine Maguedong/experiments"  # Jose
setwd(WORKDIR)



























##########################################################################
############## MACRO-FUNCTION FOR ONE EXPERIMENT #########################
##########################################################################

# INPUTS: models_names, namedataset, alpha
 One_experiment<- function(models_names, namedataset, alpha, VERBOSE=FALSE, cost_configuration) {
# OUTPUT:
# c(jcfull, jcbmc, jcbtc, jcbjc, jcrnd)


# we DO re-train the models. 
lenmod <- length(models_names)



path<-""

# Dataset selection

# namedataset <- "iris"
# namedataset <- "diabetes"

if (namedataset != "iris") {
 mydata <- read.arff(paste(path,namedataset,".arff",sep=""))
 #mydata <- read.arff("diabetes.arff")
} else if (namedataset == "iris") {
  data(iris)
  mydata <- iris
}


# Gets length, number of attributes and number of classes
len <- length(mydata[,1])
nattr <- length(mydata[1,])
nclasses <- length(unique(mydata[,nattr]))


# Ensure that the name of the class attribute is called "class"
names(mydata)[nattr] <- "class"

# Shuffles datasets and splits it
shufindx <- sample(1:len, len)
mydata <- mydata[shufindx,]

lentrain <- trunc(len*2/3)
train <- mydata[1:lentrain,]
test <- mydata[(trunc(len*2/3)+1):len,]
lentest <- length(test[,1])




# Defines test cost vector and misc cost matrix
# There are several options depending on the experiment

# Uniform, as for definition 2
testcostvector_U <- rep(1/(nattr-1), nattr-1)   # nattr includes the class, so we need to remove 1
mcvector_U <- matrix(rep(nclasses/(nclasses-1), nclasses^2), nclasses, nclasses)
for (i in 1:nclasses)
  mcvector_U[i,i] <- 0

#testcostvector <- c(2,50,5,5,20,3,10,1)
#mcvector <- matrix(c(0,50,200,0),2,2)

# cost_configuration <- "U"
# cost_configuration <- "01"
# cost_configuration <- "02"

if (cost_configuration == "U") {
  testcostvector <- testcostvector_U
  mcvector <- mcvector_U
} else if (cost_configuration == "UR") {
  testcostvector <- testcostvector_U
  mcvector <- mcvector_U
  for (i in 1:length(testcostvector)) {
    k0 <- runif(1)
#    k = exp(2*(k0-0.5)) 
    k = exp(UR_FACTOR*(k0-0.5))

    testcostvector[i] <- testcostvector[i]*k
  }

  # normalise vector
  testcostvector <- testcostvector / sum(testcostvector)

  for (i in 1:length(mcvector[1,])) {
    for (j in 1:length(mcvector[,1])) {
      k0 <- runif(1)
#      k = exp(2*(k0-0.5)) 
      k = exp(UR_FACTOR*(k0-0.5))
      mcvector[i,j] <- mcvector[i,j]*k
    }
  }

  # normalise matrix
  mcvector <- mcvector / sum(mcvector)

#  print(testcostvector)
#  print(mcvector) 

} else if (cost_configuration == "01") {

  testcostvector_01_iris <- c(3, 2, 10, 5)
  mcvector_01_iris <- matrix(c(0,5,30,20,0,15,15,15,0),3,3)  # In Weka, rows are actual values and columns are predicted values

  testcostvector <- testcostvector_01_iris
  mcvector <- mcvector_01_iris
} else if (cost_configuration == "02") {
  testcostvector_02_diabetes <- c(2, 50, 5, 5, 20, 3, 10, 1)
  mcvector_02_diabetes <- matrix(c(0,200,50,0),2,2)  # In Weka, rows are actual values and columns are predicted values

  testcostvector <- testcostvector_02_diabetes
  mcvector <- mcvector_02_diabetes
}








# Training
models <- list()

for (i in models_names) {
  if (i == "J48") {
    models <- c(models, list(J48(class ~ ., data = train)))
  } else if (i == "SMO") {
    models <- c(models, list(SMO(class ~ ., data = train)))
  } else if (i == "IBk") {
    models <- c(models, list(IBk(class ~ ., data = train)))
  } else if (i == "BstDS") {
    models <- c(models, list(AdaBoostM1(class ~ ., data = train, control = Weka_control(W = "DecisionStump"))))
  } else if (i == "BstDT") {
    models <- c(models, list(AdaBoostM1(class ~ ., data = train, control = Weka_control(W = list(J48, M = 30)))))
  } else if (i == "BagDS") {
    models <- c(models, list(Bagging(class ~ ., data = train, control = Weka_control(W = "DecisionStump")))) 
  } else if (i == "BagDT") {
    models <- c(models, list(Bagging(class ~ ., data = train, control = Weka_control(W = list(J48, M = 30)))))
  } else {
    cat("\n\nERROR: model name unknown\n\n")
    abort()
  }
}




# Initialisation of lists and structures for main loop
mc1=0
mis1=0



# These are the lists where we keep the values for TC, MC, JC and accuracy
testcost <- rep(list(),lenmod)
misccost <- rep(list(),lenmod)
jointcost <- rep(list(),lenmod)
accuracylist <- list()


label <- "-"
labelnames <- rep(list(),lenmod)


modifiedtestvector <- list()

# objects use for the construction of box plot instances for the 3differents methods
boxplotstrg1 <- NULL
bp1 <- rep(list(), lenmod)

bpnames <- NULL

latticecounter <- 0


# objects for the incremental methods

tcpivotset <- NULL      # Attributes removed for tcpivotset (originally none)
tccounter <- rep(0,lenmod)   # corrected
tcselection <- list()

mcpivotset1 <- NULL      # Attributes removed for mcpivotset (originally none)
mccounter1 <- rep(0,lenmod)  # corrected
mcselection <- list()

jcpivotset1 <- NULL      # Attributes removed for jcpivotset (originally none)
jccounter1 <- rep(0,lenmod)  # corrected
jcselection <- list()

mcpivotsetnew<-list()
tcpivotsetnew<-list()
jcpivotsetnew<-list()



# The outer loop is for each row in the lattice (removing i attributes)
for (i in 0:(nattr-1)) {   # it goes from 0 (no attributes removed) to nattr-1 (all attributes removed)
                           # (Note that nattr includes the class, so nattr-1 is the actual number of attributes


  if (VERBOSE) cat("\n****Outer loop. Iteration: ", i, "of ", nattr-1, "****\n\n")

  # What's the meaning of these pivots?
  tcpivot <- Inf
  mcpivot1 <- Inf
  jcpivot1 <- Inf

  pivot <- 1
  bpnames[i+1] <- i
  boxplotstrg1 <- NULL
  label<-"_"

  # This calculates the combinations removing i attributes
  settonullmatrix <- t(combn(1:(nattr-1), i))
  if (VERBOSE) print(settonullmatrix)
  if (i == 0) {     
    len <- 0
  } else {
    len <- length(settonullmatrix[,1])
  }


  # The inner loop is for each element in the row in the lattice
  for (j in (min(1,i):len)) {

    latticecounter <- latticecounter + 1
    if (VERBOSE) cat("\n  ****Inner loop. Iteration: ", j, "of ", len, "   Points in the lattice: ", latticecounter, "of", 2^(nattr-1), "****\n\n")

    tcignore <- FALSE
    mcignore1 <- FALSE
    jcignore1 <- FALSE
	
k=1
for (mod in models)
{
  
  if (j==0) {
      modifiedtest <- test
      label <- "ALL"
      myvector <- NULL   
    } else {
      myvector <- settonullmatrix[j,]
      modifiedtest <- settonull(test, myvector)  

      newattr <- setdiff(myvector, tcpivotset)     # New attributes from the one fixed for tc in the previous lattice row
      if (length(newattr) != 1) {
        tcignore <- TRUE 
      }

      newattr <- setdiff(myvector, mcpivotset1)     # New attributes from the one fixed for mc in the previous lattice row
      if (length(newattr) != 1) {
        mcignore1 <- TRUE 
      }


      newattr <- setdiff(myvector, jcpivotset1)     # New attributes from the one fixed for jc in the previous lattice row
      if (length(newattr) != 1) {
        jcignore1 <- TRUE 
      }

    }

    if (!tcignore)
      tccounter[k] <- tccounter[k] + 1

    if (!mcignore1)
      mccounter1[k] <- mccounter1[k] + 1
   
    if (!jcignore1)
      jccounter1[k] <- jccounter1[k] + 1
    
    # do whatever you need to do with modifiedtest
	
    e <- evaluate_Weka_classifier(mod,newdata=modifiedtest, cost= mcvector, complexity = TRUE,seed = 123, class = TRUE )
    
    # computation of the accuracy with the DT
    # for the first model
    res1 <- predict(mod, newdata = modifiedtest)
#    print("result with DT")
#    print(e1)
#    summary(e1)
    hits <- 0
    for (i in 1:lentest) 
      if (res1[i] == modifiedtest[i,nattr])
        hits <- hits +1
    
    accuracy1 <- hits/ lentest
#    print(accuracy1)
    if (VERBOSE) print(k)
	if (length(accuracylist)<lenmod)
		accuracylist <- c(accuracylist, list(accuracy1))
		else
		accuracylist[[k]] <- c(accuracylist[[k]], list(accuracy1))
	mc1 <- misclascost(e[1])
	if (length(misccost)<lenmod)
		misccost <- c(misccost, list(mc1))
		else
		misccost[[k]] <- c(misccost[[k]], list(mc1))
	
	

  
	

    # computation of the total test cost (and creation of labels)
    tc <- 0
    if (j==0) {
      label <- "ALL"
    } else {
      label <- ""
    }
    for (i in 1:(nattr-1)) {
     if (modifiedtest[1,i] != "?") {             
	tc <- tc + testcostvector[i]
     }
    }
	if (length(testcost)<lenmod)
		testcost <- c(testcost, list(tc))
		else
		testcost[[k]] <- c(testcost[[k]],list(tc))
	
    jc1 <- alpha*mc1 + (1-alpha)*tc
  
	if (length(jointcost)<lenmod)
		jointcost <- c(jointcost,list(jc1))
		else
		jointcost[[k]] <- c(jointcost[[k]], list(jc1))



    if (!tcignore) {	# tcignore == TRUE means that this point cannot be derived from the previous row in the lattice
		if (length(tcselection)<lenmod)
		tcselection <- c(tcselection, latticecounter)
		else
		tcselection[[k]] <- c(tcselection[[k]], latticecounter)  # Index of the point

      if (tcpivot > tc) { 
        tcpivot <- tc
        tcpivotsetnew <- myvector   # Keeps track of the attributes of the best point for jc
      }
    }

    if (!mcignore1) {	 # mcignore1 == TRUE means that this point cannot be derived from the previous row in the lattice
     if (length(mcselection)<lenmod)
		mcselection <- c(mcselection, latticecounter)
		else
		mcselection[[k]] <- c(mcselection[[k]], latticecounter)  # Index of the point

      if (mcpivot1 > mc1) { 
        mcpivot1 <- mc1
        mcpivotsetnew <- myvector   # Keeps track of the attributes of the best point for jc
      }
    }

 


    if (!jcignore1) {	 # jcignore1 == TRUE means that this point cannot be derived from the previous row in the lattice
      if (length(jcselection)<lenmod)
		jcselection <- c(jcselection, latticecounter)
		else
		jcselection[[k]] <- c(jcselection[[k]], latticecounter)  # Index of the point

      if (jcpivot1 > jc1) { 
        jcpivot1 <- jc1
        jcpivotsetnew <- myvector   # Keeps track of the attributes of the best point for jc
      }
    }
    

	
    boxplotstrg1[pivot] <- jc1
    
    pivot <- pivot+1  


   if (VERBOSE) cat("model:", unlist(models_names[k]),"\n")
   tcpivotset <- tcpivotsetnew
   if (VERBOSE) {
     cat("\nBest TC: points explored: ", tccounter[k], "\n")
     cat("Best TC: attributes removed: ", tcpivotset, "\n")
   }


   mcpivotset1 <- mcpivotsetnew
   if (VERBOSE) {
     cat("\nBest MC: points explored: ", mccounter1[k], "\n") # Remains to be adjust with the others one
     cat("Best MC: attributes removed: ", mcpivotset1, "\n")
   }


   jcpivotset1 <- jcpivotsetnew
   if (VERBOSE) { 
     cat("\nBest JC: points explored: ", jccounter1[k], "\n")
    cat("Best JC: attributes removed: ", jcpivotset1, "\n")
   }
   k=k+1
  } # end of inner loop

    
	for (i in 1:(nattr-1)) {
     if (modifiedtest[1,i] == "?") {             # computation of the labelnames of the current modified test file.
       label <- paste(label,"-", i, sep="")
     }
    }
	labelnames <- c(labelnames, list(label))
	
	modifiedtestvector <- c(modifiedtestvector, list(modifiedtest))  # computation of the vector which contains all the differents modifiedtests
	


  if (VERBOSE)   cat("\n\n")
  

  
	bp1 <-c(bp1,list(boxplotstrg1))
	

}
  

} # end of outer loop


# The labels are converted into a vector x
x=NULL
for (i in 1:length(labelnames))
	x[i] <- labelnames[[i]]
	


# We calculate some max and min for the plots

maxmisccost <- max(unlist(misccost))


minmisccost <- min(unlist(misccost))


maxjointcost <- max(unlist(jointcost))


minjointcost <- min(unlist(jointcost))


maxtestcost <- max(unlist(testcost))
mintestcost <- min(unlist(testcost))

maxcost <- max(maxmisccost, maxjointcost, maxtestcost)
mincost <- min(minmisccost, minjointcost, mintestcost)

maxaccuracy <- max(unlist(accuracylist))


minaccuracy <- min(unlist(accuracylist))





misccosttc <- list()
misccostmc <- list()
misccostjc <- list()
testcosttc <- list()
testcostmc <- list()
testcostjc <- list()
misccostrnd <- list()
testcostrnd <- list()

##################### TC method incremental ##########################
for (i in 1:lenmod)
{
misccosttc <- c(misccosttc, list(misccost[[i]][tcselection[[i]]]))
testcosttc <- c(testcosttc,list(testcost[[i]][tcselection[[i]]]))
}





##################### MC method incremental ##########################

for (i in 1:lenmod)
{
misccostmc <- c(misccostmc, list(misccost[[i]][mcselection[[i]]]))
testcostmc <- c(testcostmc,list(testcost[[i]][mcselection[[i]]]))
}




##################### JC method incremental ##########################

for (i in 1:lenmod)
{
misccostjc <- c(misccostjc, list(misccost[[i]][jcselection[[i]]]))
testcostjc <- c(testcostjc,list(testcost[[i]][jcselection[[i]]]))
}




##################### Monte Carlo method ##########################

latticesize <- length(modifiedtestvector)
rndsamplesize <- (nattr-1)*(nattr)/2 + 1                                   

rndselection <- sample(1:latticesize, rndsamplesize, replace=FALSE)


# lists for the monte carlo method. All the lists end with rnd (rnd)
for (i in 1:lenmod)
{
misccostrnd <- c(misccostrnd, list(misccost[[i]][rndselection]))
testcostrnd <- c(testcostrnd, list(testcost[[i]][rndselection]))
}


labelnamesrnd <- labelnames[rndselection]


  
# The monte carlo labels are converted into a vector Y
Y=NULL
for (i in 1:length(labelnamesrnd))
	Y[i] <- labelnamesrnd[[i]]
  


point_full<-NULL
point_btc<-NULL
point_bmc<-NULL
point_bjc<-NULL
point_rnd<-NULL

  slope <- (1 - alpha) / - alpha
  
# FULL METHOD
  point_full <- leftmost_intercept_xy(slope, c(unlist(testcost)), c(unlist(misccost)) )
  jcfull <- alpha*point_full[2] + (1-alpha)*point_full[1]

# BMC METHOD
  point_bmc <- leftmost_intercept_xy(slope, c(unlist(testcostmc)), c(unlist(misccostmc) ))
  jcbmc <- alpha*point_bmc[2] + (1-alpha)*point_bmc[1]

# BTC METHOD
  point_btc <- leftmost_intercept_xy(slope, c(unlist(testcosttc)), c(unlist(misccosttc) ))
  jcbtc <- alpha*point_btc[2] + (1-alpha)*point_btc[1]

# BTC METHOD
  point_bjc <- leftmost_intercept_xy(slope, c(unlist(testcostjc)), c(unlist(misccostjc) ))
  jcbjc <- alpha*point_bjc[2] + (1-alpha)*point_bjc[1]
  
# BTC METHOD
  point_rnd <- leftmost_intercept_xy(slope, c(unlist(testcostrnd)), c(unlist(misccostrnd) ))
  jcrnd <- alpha*point_rnd[2] + (1-alpha)*point_rnd[1]

  c(jcfull, jcbmc, jcbtc, jcbjc, jcrnd)  # Returns the five joint costs for the five different methods


}
##########################################################################
############## END MACRO-FUNCTION FOR ONE EXPERIMENT #####################
##########################################################################












##########################################################################
############## We perform the experiments with several alphas, datasets and repetitions

models_names <- list()

# models_names <- c(models_names, "J48")  # "DT"
models_names <- c(models_names, "SMO")  # "SVM"
models_names <- c(models_names, "IBk")  # "kNN"
# models_names <- c(models_names, "BstDS")  
models_names <- c(models_names, "BstDT") 
# models_names <- c(models_names, "BagDS")  
models_names <- c(models_names, "BagDT")  


# Datasets
#datasetnames <- c("glass")
datasetnames <- c("diabetes")
#datasetnames <- c("iris")
#datasetnames <- c("iris","breast-w","breast-cancer","diabetes","glass","balance-scale") # vector which contains the dataset names (8 datasets at final) on which we will performed the one-experiment function
num_datasets <- length(datasetnames)

# ALPHAS THAT WE WILL USE FOR THE EXPERIMENTS
ALPHAS_ISO <- c(0.5)
#ALPHAS_ISO <- c( 0.1, 0.3, 0.5, 0.7, 0.9)
num_alphas <- length(ALPHAS_ISO)

# Repetitions
REPETITIONS <- 4

# Methods
#methods <- c("jcfull", "jcbmc","jcbtc","jcbjc","jcrnd")
methods <- c("Full", "BMC","BTC","BJC","RND")
num_methods <- length(methods)


# Cost configuration
# cost_configuration <- "U"

cost_configuration <- "UR"
UR_FACTOR <- 10

# cost_configuration <- "01" # Only for iris
# cost_configuration <- "02" # only for diabetes


NUM_EXPERIMENTS <- num_alphas * num_datasets * REPETITIONS

# Matrices
mdat <- matrix(data=NA, nrow = num_methods, ncol = NUM_EXPERIMENTS, byrow = TRUE,
               dimnames = list(methods			   ))

tab <- matrix(data=NA, nrow = num_datasets, ncol = num_methods, byrow = TRUE,    
               dimnames = list(c(1:num_datasets),methods			   ))            # matrix which will contains all the means and sd for the datasets

tab2 <- matrix(data=NA, nrow = num_alphas, ncol = num_methods, byrow = TRUE,    
               dimnames = list(ALPHAS_ISO,methods			   ))                # matrix which will contains all the means and sd for the datasets

tab3 <- matrix(data=NA, nrow = num_alphas*num_datasets, ncol = num_methods, byrow = TRUE,    
               dimnames = list((1:(num_alphas*num_datasets)),methods			   ))                # matrix which will contains all the means and sd for the datasets



# Experiments begin

cat("\n\n\n\n\n\n#################### EXPERIMENTS BEGIN!!! ##############\n\n")


k=0

for (dataset in datasetnames) {
  cat("  Dataset: ", dataset, "\n")
  result=NULL
  for (i in ALPHAS_ISO) {
    cat("    Alpha: ", i, "\n")
    for(j in 1:REPETITIONS) {
      k<- k+1

      cat("      Repetition: ", j, "\n")
      cat("      Experiment num.: ", k, "of", NUM_EXPERIMENTS, "\n")

      result <- One_experiment(models_names, dataset, i, VERBOSE=FALSE, cost_configuration) # performs the one-experiment 10 times with the same value of alpha but with different partition of the data set.
      mdat[,k]<-result
      # remains construct the matrix which contains the sd and medians values
    }
  }

  d=d+1
}
cat("\n\n#################### EXPERIMENTS END!!! ##############\n\n")




cat("We now calculate matrices...")

# First matrix
for (d in 1:num_datasets) {
  for (j in 1: num_methods) {
    range0 <- (d-1)*num_alphas*REPETITIONS + 1
    range1 <- (d)*num_alphas*REPETITIONS
    my_subset <- range0:range1

    v <- as.vector(t(mdat[j,my_subset]))
    my_mean <- mean(v)
    my_sd <- sd(v)
    tab[d,j]<-paste(my_mean,"+/-",my_sd,sep="")
  }
}

# Second matrix
for (a in 1:num_alphas) {
  for (j in 1: num_methods) {
    my_subset <- NULL
    for (d in 1:num_datasets) {
      range0 <- (d-1)*REPETITIONS*num_alphas + (a-1)*REPETITIONS + 1
      range1 <- (d-1)*REPETITIONS*num_alphas + (a)*REPETITIONS
      my_subset <- c(my_subset, range0:range1)
    }
    #   1:4, 9:14 17:21
    print(my_subset)
    v <- as.vector(t(mdat[j,my_subset]))
    my_mean <- mean(v)
    my_sd <- sd(v)
    tab2[a,j]<-paste(my_mean,"+/-",my_sd,sep="")
  }
}

# Third matrix

for (i in 1:(num_alphas*num_datasets)) {
  for (j in 1:num_methods) {
    range0 <- (i-1)*REPETITIONS + 1
    range1 <- (i)*REPETITIONS
    my_subset <- range0:range1

    v <- as.vector(t(mdat[j,my_subset]))
    my_mean <- mean(v)
#    my_sd <- sd(v)
    tab3[i,j]<- my_mean  # paste(my_mean,"+/-",my_sd,sep="")
  }
}

tabmeans <- NULL
# Means row
for (j in 1:num_methods) {
  tabmeans[j] <- mean(tab3[,j])
}

tab3means <- rbind(tab3, Avg= tabmeans)


#save(datasetnames,ALPHAS_ISO, models_names, cost_configuration, mdat, tab, tab2, tab3, tabmeans, tab3means, file="C:/Users/periale/Desktop/tesis/images/resultsFile")
save(datasetnames,ALPHAS_ISO, models_names, cost_configuration, mdat, tab, tab2, tab3, tabmeans, tab3means, file="resultsFile")
# load("resultsFile")

print(mdat)
# write.matrix(mdat,file="C:/Users/periale/Desktop/tesis/images/mdat", sep="\t")
write.table(mdat,file="mdat.csv")
# mdat <- read.table(file="mdat.csv")

print(tab)
#write.matrix(tab,file="C:/Users/periale/Desktop/tesis/images/tab", sep="\t")
write.table(tab,file="tab.csv")
# tab <- read.table(file="tab.csv")

print(tab2)
#write.matrix(tab2,file=paste("C:/Users/periale/Desktop/tesis/images/tab2", sep="\t"))
write.csv(tab2,file="tab2.csv")
# tab2 <- read.csv(file="tab2.csv")


\end{lstlisting}
\subsection{Statistical tests}
This part of code implemented the computation of the average ranks of the result obtained in the previous part, from which the Friedman and Nemenyi statistic tests are calculated.

\begin{lstlisting}
 
################################################
####         STATISTICAL TESTS             #####
################################################
 






library(xtable)     # for xtable
library(SuppDists)  # for Friedman test (qFriedman)




# Calculate ranks for the columns and derives the rank
CalculateRanks <- function(v) {

  # Calculate the ranks
  ranks <- v
  lenv <- length(v[,1])
  widthv <- length(v[1,])
  for (i in 1:(lenv-1)) {  # -1 to exclude the mean
    my_row <- as.vector(t(v[i,]))  # This is done in case v is not a matrix but a data frame
#    s <- sort(my_row, index.return=TRUE)   # Doesn't work well with ties
#    ranks[i,s$ix] <- 1:widthv
    ranks[i,] <- rank(my_row)
  }
  for (j in 1:widthv) {
    ranks[lenv,j] <- mean(ranks[1:(lenv-1),j])
  }

 ranks
}


# Add the avg rank row to the matrix
AddRanks <- function(v, ranks) {

  lenv <- length(ranks[,1])
  vnew <- rbind(v, ranks[lenv,])
  rn <- rownames(v)
  rn <- c(rn, "AR") # "AvgRk")
  rownames(vnew) <- rn

  # Return matrix
  vnew
}  


# Calculate Friedman and Nemenyi tests
Tests <- function(ranks, alpha) {
  # alpha <- 0.05  # significance level

  lenv <- length(ranks[,1])  # This includes the mean at the end
  widthv <- length(ranks[1,])

  # Calculate Friedman statistic
  R <- mean(ranks[lenv,])
  n <- lenv-1 # Number of datasets. (-1) because the matrix also includes the average row
  k <- widthv  # Number of algorithms
  Sum1 <- n * sum((ranks[lenv,]-R)^2)
  Sum2 <- 1/(n*(k-1)) * sum((ranks[1:n,1:k]-R)^2)
  Friedman.stat <- Sum1/Sum2
  #CriticalValue <- qFriedman(0.975, 3, 10)  # Example: alpha= 0.05 (two-tail), k=3, n=10
  CriticalValue <- qFriedman(1-(alpha/2), k, n) 
  
  df <- (n-1)*(k-1)  # degrees of freedom
#  qa <- qtukey(0.95, 3, 18) / sqrt(2) # Example: alpha: 0.05, k=3, df= 18
  qa <- qtukey(alpha, k, df) / sqrt(2)

  Nemenyi_CD <- qa * sqrt((k*(k+1))/(6*n))

  if (Friedman.stat > CriticalValue) {
    text1 <- paste("Friedman statistic: ", format(Friedman.stat, digits=4), " >  Critical Value: ", format(CriticalValue, digits=4), "  Null hypothesis rejected (significance level: ", toString(alpha), "). Algorithms do not perform equally.", sep="")
  } else {
    text1 <- paste("Friedman statistic: ", format(Friedman.stat, digits=4), " <  Critical Value: ", format(CriticalValue, digits=4), "  Null hypothesis not rejected (significance level: ", toString(alpha), "). Algorithms may perform equally.", sep="")
  }
  text2 <- paste("Critical difference for the Nemenyi post-hoc test: ", format(Nemenyi_CD, digits=4), sep="")

  c(text1, text2)
}


mytab <- tab3means
# mytab <- tab3means[,2:num_methods]  # We eliminate the perfect method
# mytab <- tab3means[,1:(num_methods-1)]  # We eliminate the worst method


ranks1 <- CalculateRanks(mytab)
tab3complete <- AddRanks(mytab, ranks1)
resTests1 <- Tests(ranks1, 0.05)  # 0.05 confidence level

tab3complete
resTests1


Tab <- xtable(tab3complete, digits=4, caption= paste("This figure shows the JC means for the 4 repetitions for each of the 5 methods (Full, BMC, BTC, BJC, RND). The 30 rows are given by 6 datasets and 5 possible values of $alpha$. The `Avg' row shows the averages of the first 30 rows. Finally, the `AR' shows the average rank for each method. With these ranks the Friedman test is applied: ", paste(resTests1, collapse = ''))) 
print(Tab, file= paste("",  "table3.tex", sep=""))



\end{lstlisting}

\addcontentsline{toc}{chapter}{Bibliography}
{
\bibliographystyle{plain}
\bibliography{biblio}
}
\end{document}